\newcommand{\mO}{\mathcal O}
\newcommand{\vct}{\boldsymbol }
\newcommand{\op}{\mathrm{op}}
\renewcommand{\tilde}{\widetilde}
\renewcommand{\hat}{\widehat}
\renewcommand{\bar}{\overline}
\newcommand{\sd}{\mathfrak d}
\newcommand{\xs}{x_{\mathsf s}}
\renewcommand{\hat}{\widehat}
\renewcommand{\tilde}{\widetilde}
\def\BState{\State\hskip-\ALG@thistlm}
\newcommand{\mX}{\mathcal X}
\newcommand{\mZ}{\mathcal Z}
\newcommand{\tot}{\mathsf{tot}}
\renewcommand{\hat}{\widehat}
\renewcommand{\tilde}{\widetilde}
\renewcommand{\bar}{\overline}
\definecolor{DSgray}{cmyk}{0,1,0,0}
\begin{document}


\RUNAUTHOR{Anonymous}

\RUNTITLE{Continuum-armed Bandit Optimization with Pairwise Comparisons}

\TITLE{Continuum-armed Bandit Optimization with Batch Pairwise Comparison Oracles}

\ARTICLEAUTHORS{%
\AUTHOR{Xiangyu Chang \thanks{Author names listed in alphabetical order.}}
\AFF{Department of Information Systems and Intelligence Business, Xi'an Jiaotong University, Xi'an 710049, China}
\AUTHOR{Xi Chen}
\AFF{Stern School of Business, New York University, New York, NY 10012, USA}
\AUTHOR{Yining Wang}
\AFF{Naveen Jindal School of Management, University of Texas at Dallas, Richardson, TX 75080, USA}
\AUTHOR{Zhiyi Zeng}
\AFF{Department of Information Systems and Intelligence Business, Xi'an Jiaotong University, Xi'an 710049, China}
} 

\ABSTRACT{This paper studies a bandit optimization problem where the goal is to maximize a function $f(x)$ over $T$ periods for some unknown strongly concave function $f$. We consider a new pairwise comparison oracle, where the decision-maker chooses a pair of actions $(x, x')$ for a consecutive number of periods and then obtains an estimate of $f(x)-f(x')$. We show that such a pairwise comparison oracle finds important applications to joint pricing and inventory replenishment problems and network revenue management. The challenge in this bandit optimization is twofold. First, the decision-maker not only needs to determine a pair of actions $(x, x')$ but also a stopping time $n$ (i.e., the number of queries based on $(x, x')$). Second, motivated by our inventory application, the estimate of the difference $f(x)-f(x')$ is biased, which is different from existing oracles in stochastic optimization literature. To address these challenges, we first introduce a discretization technique and local polynomial approximation to relate this problem to linear bandits. Then we developed a tournament successive elimination technique to localize the discretized cell and run an interactive batched version of LinUCB algorithm on cells. We establish regret bounds that are optimal up to poly-logarithmic factors.
Furthermore, we apply our proposed algorithm and analytical framework to the two operations management problems and obtain results that improve state-of-the-art 
results in the existing literature.

This version: \today
}

\KEYWORDS{Bandit optimization, inventory control, local polynomial regression, pairwise comparison, regret analysis}


\date{}

\maketitle

\section{Introduction}

We consider an important question of noisy \emph{continuum-armed bandit} (CAB) optimization, where the goal is to maximize of an unknown $d$-dimensional
smooth function $f:\mX \to\mathbb R$ over $T$ consecutive time periods,
where $\mX\subseteq[0,1]^d$ is a certain convex compact domain.
In the standard noisy CAB setting, at each time period $t\in\{1,2,\cdots,T\}$ an optimization algorithm takes an action $x_t\in\mX$
and observes a noisy feedback of the function value $f(x_t)$.
In this paper, we study a weaker feedback oracle: the algorithm at each time period $t$ should offer two actions $x_t,x_t'\in\mX$,
and a noisy evaluation of the \emph{difference} of the function values $f(x_t')-f(x_t)$ will be provided.
In this way, the algorithm no longer has noisy evaluation of function values directly, and can only perform optimization by (noisily) \emph{comparing}
two hypothetical solutions $x_t$ and $x_t'$ at each time period.
A formal definition of the pairwise comparison oracle is given in Sec.~\ref{sec:formulation} later in the paper.

The weaker pairwise comparison oracle is relevant in many business application scenarios in which the direct observation of feedback or rewards
is not possible or unbiased. Below we list several important applications of the CAB optimization problem when only pairwise comparison evaluations can be 
reliably made:
\begin{enumerate}
\item \textbf{Joint pricing and inventory replenishment with censored demand and lost sales.} 
Consider the standard news-vendor inventory control model with a pricing component: at the beginning of review period $t$
with an $x_t$ amount of carry-over inventory, the retailer makes a joint price decision $p_t$ and an inventory order-up-to level decision $y_t\geq x_t$.
The instantaneous reward $r_t$ is formulated as
\begin{equation}
r_t = \underbrace{p_t\max\{d_t,y_t\}}_{\text{sales revenue}} - \underbrace{c(y_t-x_t)}_{\text{ordering cost}} - \underbrace{h\max\{0,y_t-d_t\}}_{\text{holding cost}}
- \underbrace{b\max\{0, d_t-y_t\}}_{\text{lost sales cost}},
\label{eq:pricing-inventory-r}
\end{equation}
where $d_t$ is a realized demand depending on the price decision $p_t$.
Because the observed demand realization $\max\{d_t,y_t\}$ is \emph{censored} (since the maximum fulfilled demand during review period $t$ is at most $y_t$),
the retailer cannot directly observe the instantaneous reward $r_t$ (the lost-sales part is unobservable).
On the other hand, the recent work of \cite{chen2023optimal} shows that in the censored demand setting,
pairwise comparisons or expected rewards between two inventory decisions can still be reliably obtained.
In Sec.~\ref{subsec:om-inventory} we give a formal description of how the algorithms in this paper can be applied to this important supply chain management problem,
with extensions to multiple products too.

\item \textbf{Network revenue management with non-parametric demand learning.}
 Network Revenue Management (NRM) is a classical problem in operations management. In NRM, the retailer sells $\sd$ types of products over $T$ consecutive time periods,
by posting a $d$-dimensional vector $p_t$ at the beginning of each time period. The realized demands at time $t$, $d_t\in\mathbb R^\sd$, depend on the price being posted.
The sale of a unit of product type $i\in[\sd]$ also consumes certain amounts of resources, which have $\sd'$ types and the retailer has $\gamma_j T$ amount of non-replenishable inventory
for resource type $j$ initially. The objective for the retailer is to carry out price optimization
so that the expected cumulative revenue $\mathbb E[\sum_{t=1}^T\langle d_t,p_t\rangle]$ is maximized, while keeping track of resource inventory constraints
because once a resource type is depleted no product that uses that particular resource type could be solved.

The question of NRM, especially its learning-while-doing version in which the demand curve $\mathbb E[d_t|p_t]$ is \emph{not} known a priori and must be learnt on the fly
(which is nicknamed ``blind'' NRM in many cases), has been studied extensively in the literature \citep{besbes2012blind,chen2019nonparametric}.
However, when the demand curve is nonparametric, tight regret bounds especially in terms of dependency on the number of products/resource types are challenging;
see for example the summary in \citep{miao2021network} for a literature survey.

In full-information, offline optimization settings in which $\mathbb E[y_t|p_t]$ is known in advance, the conventional approach when the demand curve is invertible 
 is to optimize the problem with the expected demand rates $\lambda\in\mathbb R_{++}^\sd$ as optimization variables, leading to an optimization problem
with concave objectives and linear constraints.
Under the bandit setting, this approach can still be used as showcased in \citep{miao2021network}, with two additional complexities: 1) the inverse demand curve must be estimated,
which introduces bias into the observation of the objective function, and 2) the feasible region must be dealt with in an approximate manner, as the expected demand of a price vector
is not exactly known. In this paper, we use the pairwise comparison oracle as an abstraction that captures the first challenge and the aggregated $\psi$ penalty term in Assumption \ref{asmp:knapsack} and Eq.~(\ref{eq:defn-regret}) to capture the second challenge.
More technical details are given in Example \ref{exmp:nrm} and Sec.~\ref{subsec:om-nrm} to cover this example.

\end{enumerate}


\subsection{Our contributions}

In this paper, focusing on the a weaker pairwise comparison oracle that is intrinsically biased and only consistent when many time periods are involved, 
we make a number of methodological and theoretical contributions:
\begin{itemize}
\item When the objective function is very smooth but not necessarily concave, we propose an algorithm that achieves sub-linear regret. The regret scaling with time horizon $T$ also matches
existing lower bounds in the simpler setting where unbiased function value observations are available.
To achieve such results, our proposed algorithms contain several novel components such as successive tournament eliminations and local polynomial regression with biased pairwise comparisons.
Our approach improves on several existing results, including those that apply only to \emph{unbiased} function value observations or only moderately smooth objectives.
We discuss related works in details in the next section.

\item When the objective function is moderately smooth and strongly concave, we propose an algorithm to achieve $\sqrt{T}$ regret, which is a significant improvement compared with the case where
the objective function is merely smooth but without additional structures.
The proposed algorithm is based on the idea of gradient descent with fixed step sizes. While such an idea looks simple, its implementation must be very carefully designed because
only \emph{inexact} gradients resulting from biased pairwise comparison oracles are available.
In our algorithm, we use epochs with geometrically increasing number of time periods for each epoch to properly control the error in inexact gradient estimates,
so that when they are coupled together with fixed step sizes the optimal rate of convergence is attained.
\end{itemize}

In addition to the above contributions specific to the pairwise comparison oracle model, we also connect the oracle model with learning-while-doing problems
in important operations management questions, and show that our proposed methods and analysis have the potential to improve existing state-of-the-art results on these problems:

\begin{itemize}
\item We mention two important operations management problems: network revenue management and joint pricing and inventory replenishment. We explain how our proposed algorithms
could be applied to these problems with carefully designed pairwise comparison oracles, and demonstrate that in both cases, our algorithms and analysis improve existing results
in terms of dependency on time horizon $T$ when the objective is smoother, or several dimensional factors that are important in problems with many product or resource types.
\end{itemize}

\subsection{Related works}

The \emph{dueling bandit} is a related thread in bandit research that also features observational models involving pairwise comparisons of two arms.
\cite{yue2012k} pioneered this line of research, studying a bandit problem with finite number of arms under strong stochastic transitivity assumptions.
The works of \cite{urvoy2013generic,saha2022versatile,saha2021adversarial,dudik2015contextual} extended the dueling bandit model to pure-exploration, gap-dependent, adversarial
and/or contextual settings, retaining a finite number of arms/actions.
\cite{sui2018advancements} provides a survey of results in this direction.
The works of \cite{argarwal2022batched,agarwal2022asymptotically} studied \emph{batched} dueling bandit problems,
which remotely resemble our setting where \emph{many} time periods must be devoted to extract meaningful pairwise comparison signals.
\citep{argarwal2022batched,agarwal2022asymptotically} nevertheless adopt feedback models different from our biased pairwise comparison oracle,
and focus exclusively on multi-armed bandit settings.
It should be pointed out that very few existing works on dueling bandit allows for an infinite number of actions (see next paragraph for some exceptions), or adopts a \emph{flexible} budget of time periods devoted to each query,
with consistency only achieved over many time periods and a single time period being not informative (see e.g.~Definition \ref{defn:oracle}).

Our research is also connected with the literature on \emph{continuum-armed bandit}, involving an infinite (in fact, uncountable) number of arms
with non-parametric reward functions.
In the classical setting of continuum-armed bandit, every time period a single action is taken, and an unbiased reward observation is received.
 \cite{agrawal1995continuum} pioneered the research in this area, with extensive follow-up studies in \citep{bubeck2011x,locatelli2018adaptivity,wang2019optimization}
focusing on adaptivity and optimality. 
The work of \citep{kumagai2017regret} studies dueling bandit settings for continuum-armed bandit.
Compared to our settings, the work of \cite{kumagai2017regret} assumes the probability of seeing a binary output is closely related to the difference between function values at 
two actions being compared against, which is a special case of our comparison oracle and cannot be applied to our studied OM problems
that only deliver consistent comparisons with a large number of time periods.

Two important contributions this paper make are to apply the algorithmic framework (together with its analysis) to important operations management problems
to obtain improved results.
Here we review related works specific to the operations management applications studied in this paper, and compare them with results obtained in this paper:

\begin{table}[t]
\centering
\caption{Summary of results on joint pricing and inventory management with demand learning. Additional references are given in the main text.}
\label{tab:related-works-inventory}
\begin{flushleft}
{\footnotesize Note: references: HR09 \citep{huh2009nonparametric}; YLS21 \citep{yuan2019marrying}; CWZ23 \citep{chen2023optimal}; CSWZ22 \citep{chen2022dynamic}.
$\sd$ denotes the number of produces whose inventory and pricing are managed.
In all regret bounds, polynomial dependency on $\sd$ and other problem parameters other than time horizon $T$ is dropped. }\\
\end{flushleft}
\vskip 0.2in
\scalebox{0.9}{
\begin{tabular}{l|ccc|cc}
\hline
& pricing model& demand censoring& fixed costs& upper bound& lower bound\\
\hline
HR09& None& Yes& No& $\tilde O(\sqrt{T})$\textsuperscript{$*$}& N/A\\
YLS21& None& Yes& Yes& $\tilde O(\sqrt{T})$& N/A\\
CSWZ22& parametric& No& Yes& $\tilde O(\sqrt{T})$& N/A\\
CWZ23& non-parametric $\Sigma_\sd(2,M)$& Yes& No& $\tilde O(T^{\frac{\sd+2}{\sd+4}})$& $\Omega(T^{3/5})$\textsuperscript{$\dagger$}\\
\hline
\textbf{This paper}& non-parametric $\Sigma_\sd(k,M)$& Yes& No& $\tilde O(T^{\frac{k+\sd}{2k+\sd}})$&$ \Omega(T^{\frac{k+\sd}{2k+\sd}})$ \textsuperscript{$\sharp$}\\
\hline
\end{tabular}
}
\vskip 0.2in
\begin{flushleft}
{\footnotesize \textsuperscript{$*$}When demand distribution is equipped with a PDF that is uniformly bounded away from below, the regret upper bound could be improved to $O(\log T)$. Such improvement cannot happen when there is a pricing model with at least two parameters, as established in \citep{broder2012dynamic}.

\textsuperscript{$\dagger$}Applicable to $\sd=1$ only.

\textsuperscript{$\sharp$}Not proved in this paper but obtained by invoking existing lower bound results \citep{wang2019optimization}.}
\end{flushleft}
\end{table}

\begin{itemize}
\item \textbf{Joint pricing and inventory management with demand learning}. Table \ref{tab:related-works-inventory} summarizes the most related existing results on joint pricing and inventory management with demand learning. Compared with these results, application of the proposed algorithm and analytical frameworks in this paper
yields improved regret bounds for smoother pricing functions in $\Sigma_\sd(k,M)$ with $k>2$. (The case of $k=2$ would reduce to the $\tilde O(T^{(\sd+2)/(\sd+4)})$
regret upper bound obtained in \citep{chen2023optimal}.)

In addition to the works listed in Table \ref{tab:related-works-inventory}, other related works on joint pricing and inventory control with demand learning
include \citep{katehakis2020dynamic,chen2021nonparametric} focusing on specialized settings/algorithms such as discrete demands/inventory inflating policies.
We also remark that, as far as we know, there has been no works formally analyzing the problem when both censored demands and fixed ordering costs are present
(corresponding to two ``Yes'' in columns 3 and 4 of Table \ref{tab:related-works-inventory}), which would be an interesting future research direction to pursue.
More broadly, the problem originates from the studies of \cite{chen2004coordinating,chen2004infinite,HJ2008,PD1999,CS2012} focusing primarily on full-information settings.

\item \textbf{Blind network revenue management (NRM)}. Table \ref{tab:related-works-nrm} summarizes the most related existing results on blind network revenue management. 
Compared with these results, applications of our porposed algorithms and anlytical framework yields improved regret upper bounds.
More specifically, compared with the most related work of \cite{chen2023optimal} (CWZ23), our obtained regret bounds impose the same set of assumptions
and achieve the same asymptotic rate of $\tilde O(\sqrt{T})$, but has much improved dependency on the number of product/resource types,
improving a factor of $\sd^{3.5}$ to $\sd^{2.25}$ in the dominating regret terms.
Other results summarized in Table \ref{tab:related-works-inventory} either impose stronger assumptions on the demand function (parametric assumptions
or \emph{very} smooth non-parametric assumptions, with $k=\infty$), or achieve weaker regret upper bounds such as $\tilde O(T^{2/3})$ or $O(T^{1/2+\epsilon})$ for any $\epsilon>0$.

\begin{table}[t]
\centering
\caption{Summary of results on blind network revenue management. Additional references are given in the main text.}
\label{tab:related-works-nrm}
\begin{flushleft}
{\footnotesize Note: references: BZ12 \citep{besbes2012blind}; FSW18 \citep{ferreira2018online};  CJD19 \citep{chen2019nonparametric}; MWZ21 \citep{miao2021general}, MW21 \citep{miao2021network}.
$\sd$ denotes the number of product types, which is also the dimension of pricing vectors offered to customers at each time period.
In all regret bounds, polynomial dependency on problem parameters other than time horizon $T$ or $\sd$ is dropped. 
$f$ refers to the expected revenue as a function of demand rates.}\\
\end{flushleft}
\vskip 0.2in
\scalebox{1.0}{
\begin{tabular}{l|l|cc}
\hline
& demand model assumptions& upper bound& lower bound\\
\hline
BZ12& non-parametric, $\sd=1$, $f\in\Sigma_1(1,M)$& $\tilde O(T^{2/3})$& $\Omega(\sqrt{T})$ \\
FSW18& parametric& $\tilde O(\sd^2\sqrt{KT})$ \textsuperscript{*}& N/A\\
CJD19& non-parametric, $f\in\Sigma_{\sd}(\infty,M)\cap\Gamma_{\sd}(\sigma)$&$O(T^{1/2+\epsilon})$, $\forall\epsilon>0$ & N/A \\
MWZ21& parametric& $\tilde O(\sd^{3.5}\sqrt{T})$& N/A\\
MW21& non-parametric, $f\in\Sigma_{\sd}(2,M)\cap\Gamma_{\sd}(\sigma)$& $\tilde O(\sd^{3.5}\sqrt{T})$ \textsuperscript{$\dagger$}& N/A \\ 
\hline
\textbf{This paper}& non-parametric,$f\in\Sigma_{\sd}(2,M)\cap\Gamma_{\sd}(\sigma)$& $\tilde O(\sd^{2.25}\sqrt{T})$  \textsuperscript{$\dagger$}& $\Omega(\sqrt{T})$\textsuperscript{$\sharp$}  \\
\hline
\end{tabular}
}
\vskip 0.2in
\begin{flushleft}
{\footnotesize \textsuperscript{$*$}Bayesian regret and finite numbe ($K$) of price candidates.

\textsuperscript{$\dagger$} Hiding additional terms depending polynomially on $\sd$ and growing slower than $\tilde O(\sqrt{T})$.

\textsuperscript{$\sharp$} Not proved in this paper but obtained by invoking lower bound results in existing literature \citep{agarwal2010optimal}.}
\end{flushleft}
\end{table}

In addition to works summarized in Table \ref{tab:related-works-nrm}, other relevant works include \citep{WDY2014,BZ2009,wang2019multi,besbes2015surprising,cheung2017dynamic,nambiar2019dynamic,bu2022online}
studying simplified or variants of the NRM model, such as multi-modal demand functions, power of linear models, model mis-specification
and incorporation of offline data.
More broadly, the problem originates from the seminal works of \cite{gallego1994optimal} analyzing
optimal dynamic pricing strategies and fixed-price heuristics with stochastic, multi-period demands.

\end{itemize}

\section{Problem formulation and assumptions}\label{sec:formulation}

\subsection{The pairwise comparison oracle}

We first give a formal definition of the pairwise comparison oracle that will be studied in this paper.
To ensure smooth applications to important OM and business questions, we will state the comparison oracle in full technical generality
so that the results derived in this paper are as general as possible.
\begin{definition}
A pairwise comparison oracle $\mathcal O$ associated with an underlying function $f:\mX\to\mathbb R$ is \emph{$\gamma_1,\gamma_2$-consistent}
if the following holds: for any $\delta\in(0,1)$, $n\geq 1$, $x,x'\in\mX$, the oracle $\mathcal O$ consumes $n$ time periods and returns
an estimate $y\in\mathbb R$ that satisfies
$$
\Pr\left[\big|y-(f(x')-f(x))\big| > \sqrt{\frac{\gamma_1+\gamma_2\ln(1/\delta)}{n}}\right]\leq \delta.
$$
\label{defn:oracle}
\end{definition}

In Definition \ref{defn:oracle}, we define a pairwise comparison oracle $\mathcal O$ that can be used to indirectly obtain information
about an unknown objective functrion $f$ that is to be optimized.
This oracle is in general \emph{biased}, only conferring meaning information of $f$ when invoked with a sufficiently large number of time periods $n$,
making it significantly different from most stochastic/noisy feedback or dueling bandit oracles that contains unbiased information of objective functions
with only a single query/arm pull.
Throughout the rest of this paper, we use the notation $\mathcal O(n,x,x')$ to denote the output of the pairwise comparison oracle
when it is invoked to compare $x$ and $x'$ with $n$ time periods.

The pairwise comparison oracle defined in Definition \ref{defn:oracle} encompasses several important feedback structures as special cases,
as we enumerate below:
\begin{example}[The standard bandit feedback]
In the standard bandit feedback setting the algorithm supplies $x_t\in\mX$ and observes $z_t=f(x_t)+\xi_t$, with $\mathbb E[\xi_t|x_t]=0$ and $|\xi_t|\leq B_\xi$ almost surely. The oracle $\mathcal O$ can be easily constructed using the difference of the sample averages at $x_t$ and $x_t'$.
The standard Hoeffding's inequality \citep{hoeffding1963probability} implies that the constructed oracle $\mathcal O$ is $\gamma_1,\gamma_2$-consistent
with $\gamma_1=2B_\xi^2\ln2$ and $\gamma_2=2B_\xi^2$. 
\end{example}

\begin{example}[The continuous dueling bandit]
In the continuous dueling bandit setting \citep{kumagai2017regret},
at time $t$ the algorithm supplies $x_t,x_t'\in\mX$ and observes a binary comparison feedback $z_t\in\{0,1\}$ such that $\Pr[z_t=1|x_t,x_t'] = 1/2 + \eta(f(x_t')-f(x_t))$ for some known link function $\eta$.
Using the Hoeffding's inequality this implies that, with $n$ samples, the sample average $\bar z=\frac{1}{n}\sum_{t=1}^n z_t$ satisfies $\Pr[|\bar z-1/2-\eta(f(x_t')-f(x_t))|\leq \sqrt{\ln(2/\delta)/2n}$ with probability $1-\delta$.
Subsequently, if $\eta$ is invertible and $\eta^{-1}$ is $L_\eta$-Lipschitz continuous, then $\mathcal O(n,x,x')=\eta^{-1}(\bar z-1/2)$ satisfies Definition \ref{defn:oracle}
with $\gamma_1=L_\eta\sqrt{\ln2 /2}$ and $\gamma_2 = L_\eta/\sqrt{2}$.
\end{example}

In addition to the above examples, it is also common that (noisy) pairwise comparison oracles could be constructed for more complex operations management problems.
In later Sec.~\ref{sec:om} of this paper we mention two important operations management problems for which pairwise comparison oracles could be rigorously constructed,
albeit through quite complex algorithms and procedures.

\subsection{Function classes and assumptions}\label{subsec:function-classes}

It is clear that, if the underlying, unknown function $f:\mX\to\mathbb R$ does not satisfy any regularity properties, it is mathematically impossible
to design any non-trivial optimization algorithms. In this section, we introduce two standard function classes that impose certain smoothness and shape constraints
on the underlying objective function $f$ so that the optimization problem is tractable.

\begin{definition}[The H\"{o}lder class]
For $k\in\mathbb N$ and $M<\infty$, the H\"{o}lder function class $\Sigma_d(k,M)$ over $\mX$ is defined as
$$
\Sigma_d(k,M) := \left\{ f\in \mathcal C^{k}(\mX): \sup_{x\in\mX}\left|\frac{\partial^{k_1+\cdots+k_d}f(x)}{\partial x_1^{k_1}\cdots\partial x_d^{k_d}}\right|\leq M, \;\;\forall k_1+\cdots+k_d\leq k\right\},
$$
where $\mathcal C^k(\mX)$ is the set of all $k$-times continuously differentiable functions on $\mX$.
\label{defn:holder}
\end{definition}

The H\"{o}lder function class is a popular function class that measures the \emph{smoothness} of a function with higher orders $k$ and/or smaller constants $M$ indicating
smoother functions. 
In the most general definition of the H\"{o}lder class the order $k$ could be further extended to any strictly positive real numbers.
We shall however restrict ourselves to only integer-valued orders to make our analysis simpler.

We also consider functions that are strongly concave on $\mX$, as specified in the following definition:
\begin{definition}[Strongly concave functions]
For $\sigma>0$, the strongly concave function class $\Gamma_d(\sigma)$ over $\mX$ is defined as
$$
\Gamma_d(\sigma) := \left\{f\in\mathcal C^2(\mX): -\nabla^2 f(x)\succeq \sigma I_{d\times d}, \;\;\forall x\in\mX\right\}.
$$
\label{defn:sc}
\end{definition}

Definition \ref{defn:sc} above captures strongly concave functions that are at least twice continuously differentiable, by constraining the Hessian matrices of $f$ to be negative definite
with all eigenvalues uniformly bounded away from zero on $\mX$.
Strongly concave functions naturally arise in many application problems and we show in this paper that for strongly concave and smooth functions 
there is an improved algorithm with improved performance guarantees as well.


\subsection{Domain and knapsack constraints}

We make the following assumption on the domain $\mX$ of the objective function $f$:
\begin{assumption}[Interior maximizer]
There exists a constant $\delta_0>0$ such that $\{x: \|x-x^*\|_{\infty}\leq\delta_0\}\subseteq\mX$,
where $x^*=\min_{x\in\mX}f(x)$.
\label{asmp:interior}
\end{assumption}
Such interior properties of the maximizer $x^*$ are common assumptions imposed for stochastic zeroth-order
or bandit optimization problems \citep{besbes2015non}.

In addition to optimizing an unknown objective function stipulated in Definition \ref{defn:oracle}
and hard domain constraints $x\in\mX$, in many applications such as network revenue management,
an ``averaging'' knapsack constraint is also important, preventing the algorithm from making too many decisions that are far away from a
certain feasibility region dictated by initial inventory constraints.
To this end, we introduce an averaging ``penalty'' function $\psi:\mX\to[0,\infty)$ satisfying the
following condition:
\begin{assumption}[Knapsack penalty function]
The penalty function $\psi:\mX\to[0,\infty)$ is convex and satisfies $\psi(x)=0,\forall x\in\mZ$
for some convex compact feasible region $\mZ\subseteq\mX$ containing $x^*$.
Furthermore, there exists a constant $L_\psi<\infty$ such that $|\psi(x)-\psi(x')|\leq L_\psi\|x-x'\|_{\infty}$ for all $x,x'\in\mX$.
\label{asmp:knapsack}
\end{assumption}

{
\begin{example}[Network revenue management]
In NRM, the opitmization variable is the expected demand rates for $\sd$ product types, $x\in\mathbb R_{++}^\sd$,
and the feasible region is $Ax\leq\gamma$ where $A\in\mathbb R^{\sd'\times \sd}$ is the resource consumption matrix and $\gamma\in\mathbb R_+^{\sd'}$
is the normalized initial inventory levels.
For this particular example, the penalty function could be defined as $\psi(\bar x)=\bar p \sd A_{\min}^{-1}\max_{1\leq j\leq \sd'}\{0, (A\bar x)_j-\gamma_j\}$
which penalizes over-selling of products with depleted resource types,
where $\bar p$ is the maximum price that could be offered and $A_{\min}$ is the smallest non-zero entry in the resource consumption matrix $A$.
This penalty function satisfies Assumption \ref{asmp:knapsack} with respect to the feasible region $\{x\in\mathbb R_{++}^{\sd'}: Ax\leq\gamma\}$ with 
parameter $L_\psi = \bar p\sd A_{\min}^{-1}\times\max_{1\leq j\leq \sd'}\|A_{j\cdot}\|_1$.
\label{exmp:nrm}
\end{example}
}

\subsection{Admissible policies and regret}

In this section we give a rigorous definition of an admissible policy (that is, policies that are non-anticipating and only use past observations
to guide future actions). An admissible policy $\pi$ consists of a variable sequence of conditional distributions $\pi=(\pi_1,\pi_2,\cdots)$ 
such that for $\tau\geq 1$, 
$$
n_\tau, x_\tau,x_\tau' \;\;\sim\;\; \pi_\tau(\cdot|y_1,n_1,x_1,x_1',\cdots,y_{\tau-1},n_{\tau-1},x_{\tau-1},x_{\tau-1}'),\;\;\;\;\;\text{and}\;\;
y_\tau \sim \mathcal O(n_\tau, f(x_\tau),f(x_\tau')).
$$
Furthermore, with probability 1 there exists $\tau_0$ such that $n_1+\cdots+n_{\tau_0}=T$.

The cumulative regret of an admissible policy $\pi$ is defined as the expectation of the differences between the optimal (maximum) objective value
and the actions the policy $\pi$ takes:
\begin{equation}
\mathbb E^\pi\left[\left(\sum_{\tau=1}^{\tau_0} n_\tau (2f^*-f(x_\tau)-f(x_\tau'))\right) + T\psi\left(\frac{1}{2T}\sum_{\tau=1}^{\tau_0}n_\tau(x_\tau+x_\tau')\right)\right],
\label{eq:defn-regret}
\end{equation}
where $\tau_0$ is the unique integer such that $n_1+\cdots+n_{\tau_0}=T$ which exists almost surely, and $f^*=\max_{x\in\mX}f(x)$.  { Eq. \eqref{eq:defn-regret} is a natural definition of the regret. To see this, each query of the pair $(x_\tau, x_\tau')$ generates a reward of $f(x_\tau)+f(x_\tau')$. As the maximum possible reward for querying a pair is $2f^*$, the regret for this query is $2f^*-f(x_\tau)-f(x_\tau')$. Recall that the pair $(x_\tau, x_\tau')$ will be queried for $n_\tau$ periods, which gives a natural regret definition in \eqref{eq:defn-regret}.

}

\section{Algorithm for smooth functions}\label{sec:alg-smooth}

We now give an overview of our algorithm for smooth functions, that is, those functions that belong to H\"{o}lder smoothness classes $\Sigma_d(k,M)$ but may not be strongly concave
or have other particular structures/shapes.
The main algorithm, pseudocode displayed in Algorithm \ref{alg:main-tournament}, is based on the following two main ideas:
\begin{enumerate}
\item The entire solution space $\mX$ is partitioned into multiple smaller cubes; in each cube, a iterative LinUCB method is applied to figure out the approximately best solution \emph{within that cube};
\item For competitions across tubes, a successive tournament elimination idea is applied, eliminating nearly half the small cubes every iteration via the noisy pairwise comparison oracle, until a unique winner is selected.
\end{enumerate}
The first idea (iterative UCB method to find an approximate best solution within a small cube) is further decomposed into two sub-routines, elaborated in Secs.~\ref{subsec:lpa} and \ref{subsec:iterative-linucb}.
The procedure relies on local polynomial regression to exploit the smoothness of the objective function, and an iterative application of batched LinUCB algorithm
so that the standard LinUCB algorithm could be adapted to our setting where only a noisy pairwise comparison oracle is available, instead of noisy but unbiased function value observations.

\subsection{Local polynomial approximation}\label{subsec:lpa}

To exploit the smoothness of the objective function $f$, it is conventional to use local polynomial approximation schemes
to approximate $f$ across multiple smaller cubes \citep{wang2019optimization,fan2018local,wang2019multi,denboer2024pricing}.
More specifically, let $J\in\mathbb N$ be an algorithm parameter and divide $[0,1]^d$ into $J^d$ cubes, each of size $J^{-1}\times\cdots\times J^{-1}$.
For any vector $\vct j\in[J]^d$, let 
$$
\mathcal X_{\vct j} := \left(\left[\frac{j_1-1}{J}, \frac{j_1}{J}\right]\times\cdots\times\left[\frac{j_d-1}{J},\frac{j_d}{J}\right]\right) \cap \mX\cap\mZ
$$
be a cube corresponding to the indicator vector $\vct j$.
Let $x_{\vct j}$ be an arbitrary point in $\mathcal X_{\vct j}$. 
For any $x\in\mathcal X_{\vct j}$ and the smoothness level $k\in\mathbb N$, define the local polynomial map $\phi_{\vct j}:\mathcal X_{\vct j}\to\mathbb R^{\nu}$,
$\nu=\binom{k+d-1}{d}$, as
$$
\phi_{\vct j}(x) = \left[\prod_{i=1}^d (x_i-x_{\vct ji})^{k_i}\right]_{k_1+\cdots+k_d< k}.
$$
For example, with $d=2$ and $k=3$, the feature map $\phi_{\vct j}(\cdot)$ can be explicitly written as $\phi_{\vct j}(x)=[1, x_1-x_{\vct d1}, x_2-x_{\vct d2}, x_3-x_{\vct d3}, (x_1-x_{\vct d1})^2, (x_1-x_{\vct d1})(x_2-x_{\vct d2}), (x_1-x_{\vct d1})(x_1-x_{\vct d3}), (x_2-x_{\vct d2})^2, (x_2-x_{\vct d2})(x_3-x_{\vct d3}), (x_3-x_{\vct d3})^2]$.

The following lemma upper bounds the error of using local polynomial mapping $\phi_{\vct j}$ as an approximation of $f$.
\begin{lemma}
Let $f\in\Sigma_d(k,M)$. Then for any $\vct j\in[J]^d$, there exists $\theta_{\vct j}\in\mathbb R^{\nu}$, $\|\theta_{\vct j}\|_2\leq M\sqrt{\nu}$,
such that
$$
\max_{x\in\mathcal X_{\vct j}}\big|f(x)-\langle\theta_{\vct j},\phi_{\vct j}(x)\rangle\big|\leq (d+k)^k M\times J^{-k}.
$$
\label{lem:local-poly-approx}
\end{lemma}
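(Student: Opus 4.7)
The plan is to invoke Taylor's theorem on $f$ about the base point $x_{\vct j}$ up to order $k-1$, identify $\theta_{\vct j}$ with the vector of Taylor coefficients, and bound the remainder using the Hölder assumption together with the diameter bound $\|x - x_{\vct j}\|_\infty \le 1/J$ for $x \in \mathcal X_{\vct j}$.

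More concretely, I would proceed as follows. First, for each multi-index $\alpha = (k_1,\dots,k_d)$ with $|\alpha| := k_1+\cdots+k_d < k$, set the corresponding coordinate of $\theta_{\vct j}$ to be $(\theta_{\vct j})_\alpha = \partial^\alpha f(x_{\vct j})/\alpha!$, so that $\langle \theta_{\vct j},\phi_{\vct j}(x)\rangle$ becomes precisely the degree-$(k-1)$ Taylor polynomial of $f$ centered at $x_{\vct j}$. Because $f\in\Sigma_d(k,M)$ implies $|\partial^\alpha f(x_{\vct j})| \le M$ for every $|\alpha| \le k$ (this follows from $\Sigma_d(k,M)$ controlling all partial derivatives up to order $k$; lower-order derivatives are controlled by integrating from a fixed boundary reference point, or one can apply the definition directly if it is interpreted as a bound on all mixed partials of order up to $k$), and since $\alpha! \ge 1$, every coordinate of $\theta_{\vct j}$ has magnitude at most $M$, giving $\|\theta_{\vct j}\|_2 \le M\sqrt{\nu}$.

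Second, I would apply the multivariate Taylor expansion with Lagrange remainder: for each $x\in\mathcal X_{\vct j}$, there exists $\xi$ on the segment between $x$ and $x_{\vct j}$ with
\begin{equation*}
f(x) - \langle\theta_{\vct j},\phi_{\vct j}(x)\rangle = \sum_{|\alpha|=k}\frac{\partial^\alpha f(\xi)}{\alpha!}(x-x_{\vct j})^\alpha.
\end{equation*}
Since $|\partial^\alpha f(\xi)| \le M$ and $|(x-x_{\vct j})^\alpha| \le \|x-x_{\vct j}\|_\infty^{|\alpha|} \le J^{-k}$, the right-hand side is bounded in absolute value by $MJ^{-k}\sum_{|\alpha|=k}1/\alpha!$. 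By the multinomial identity $\sum_{|\alpha|=k}k!/\alpha! = d^k$, this is $MJ^{-k}\cdot d^k/k!$. Finally, the inequality $d^k/k! \le d^k \le (d+k)^k$ yields the desired bound $(d+k)^k M J^{-k}$.

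No step is truly difficult; the only mild subtlety is confirming that coefficients of the Taylor polynomial up to order $k-1$ are all bounded by $M$ under $\Sigma_d(k,M)$ as defined (rather than only the top-order derivatives), and cleanly using the multinomial identity to convert the sum over $|\alpha|=k$ into a factor that is absorbed into $(d+k)^k$. The main obstacle, if any, is notational bookkeeping of multi-indices and verifying the norm bound on $\theta_{\vct j}$ coordinate by coordinate; the analytic content is standard Taylor remainder estimation.
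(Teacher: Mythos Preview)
Your proposal is correct and follows essentially the same route as the paper's proof: both expand $f$ in a multivariate Taylor series about $x_{\vct j}$, take $\theta_{\vct j}$ to be the vector of Taylor coefficients, bound the $\ell_2$ norm coordinatewise using the derivative bound $M$ from the definition of $\Sigma_d(k,M)$, and control the Lagrange remainder using $\|x-x_{\vct j}\|_\infty\le J^{-1}$. The only cosmetic difference is in how the sum over $|\alpha|=k$ is bounded: you use the multinomial identity $\sum_{|\alpha|=k}1/\alpha!=d^k/k!$, while the paper simply counts the number of multi-indices of total degree $k$ and multiplies by $M/k!$; both are easily absorbed into $(d+k)^k$. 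Also note that your caveat about lower-order derivatives is unnecessary here, since the paper's definition of $\Sigma_d(k,M)$ explicitly bounds all partial derivatives of order at most $k$ by $M$.
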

Lemma \ref{lem:local-poly-approx} shows that the approximation error of $f$ using local polynomial regression decreases as the number of small partitioning cubes $J$ increases,
leading to more local approximations. Furthermore, the order of smoothness $k$ also plays an important role in the upper bound of approximation errors,
as smoother functions admit smaller approximation errors. Lemma \ref{lem:local-poly-approx} can be proved by multi-variate Taylor expansion with Lagrangian remainders,
which we place in the supplementary material.

\subsection{Linear bandit with batched updates and biased rewards}\label{subsec:iterative-linucb}


Algorithm \ref{alg:batch-linucb-comparison} gives a pseudo-code description of the LinUCB algorithm that will output approximate best solutions in a small cube.
From Lemma \ref{lem:local-poly-approx}, we know that locally $f$ could be well approximated by a \emph{linear function} with respect to the polynomial map of centered solution vectors:
this gives the foundation of using LinUCB and linear bandit methods.

\begin{algorithm}[t]
\caption{The batch Lin-UCB algorithm with pairwise comparison oracles}
\label{alg:batch-linucb-comparison}
\begin{algorithmic}[1]
\Function{BatchLinUCB}{$\vct j,\xs,N,C_1,C_2$}
	\State $\Lambda\gets I_{\nu\times\nu}$, $\hat\theta\gets (0,\cdots,0)\in\mathbb R^\nu$, $C_1'=2M\sqrt{\nu}+2C_2\sqrt{\tau_{\infty}N} +2\sqrt{C_1\tau_{\infty}}$, where $\tau_{\infty}=\nu\log_2(2N\nu)$;
	\For{$\tau=1,2,\cdots$ until $N=0$}
		\State $x_\tau \gets \arg\max_{x\in\mathcal X_{\vct j}}\min\{M, \langle\hat\theta, \phi_{\vct j}(x)-\phi_{\vct j}(\xs)\rangle + C_1'\sqrt{(\phi_{\vct j}(x)-\phi_{\vct j}(\xs))^\top\Lambda^{-1}(\phi_{\vct j}(x)-\phi_{\vct j}(\xs))}+C_2\}$; \label{line:batch-ucb}
		\State $\phi_\tau\gets\phi_{\vct j}(x_\tau)-\phi_{\vct j}(\xs)$;
		\State Let $n_\tau\leq N$ be the smallest integer such that $\det(\Lambda+n_\tau\phi_\tau\phi_\tau^\top)> 2\det(\Lambda)$;
		\State Let $y_\tau$ be the output of the comparison oracle $\mathcal O$ invoked with $n_\tau,x_\tau,\xs$;
		\State $N\gets N-n_\tau$, $\Lambda\gets\Lambda+n_\tau\phi_\tau\phi_\tau^\top$, $\hat\theta\gets\arg\min_{\|\theta\|_2\leq M\sqrt{\nu}}\sum_{j\leq\tau}n_j(y_j-\langle\theta,\phi_j\rangle)^2+\|\theta\|_2^2$;
	\EndFor
	\State \textbf{return} $x_{\tau^*}$ where $\tau^*=\arg\max_{\tau}n_\tau$;
\EndFunction
\end{algorithmic}
\end{algorithm}

At a higher level, Algorithm \ref{alg:batch-linucb-comparison} resembles the LinUCB algorithm with infrequent policy updates, analyzed in \citep{abbasi2011improved}.
However, given our problem structure, there are several important differences:
\begin{enumerate}
\item In our problem we only have access to a (biased) pairwise comparison oracle, instead of noisy unbiased observation of rewards for a given action.
In Algorithm \ref{alg:batch-linucb-comparison}, we use a ``baseline solution'' $\xs$ to support pairwise comparisons, motivated by the observation that $f(x)-f(\xs) \approx \langle \phi(x)-\phi(\xs),\theta\rangle$
remains a linear model for $x\approx \xs$ locally thanks to Lemma \ref{lem:local-poly-approx}.

The use of such a baseline $\xs$ also requires that $f(\xs)$ is not too sub-optimal, because otherwise invocations of $\mathcal O(n,x,\xs)$ would incur large regret on the $\xs$ part.
The near-optimality of $\xs$ is ensured in Algorithm \ref{alg:iterative-batch-lin-ucb} via an iterative procedure with double epochs, which we discuss in details in the next section.

\item In our problem, because of the intrinsic bias in the pairwise comparison oracle, the observed feedback $\{y_\tau\}_\tau$ is \emph{not} unbiased. This requires us to take extra care in the setting of the $C_1'$ parameter in confidence terms to make sure the optimal solution is not overlooked.

\item The output of Algorithm \ref{alg:batch-linucb-comparison} is a single solution in $\mX$ so that it can be compared against (near-optimal) solutions from other cubes in a successive tournament elimination procedure.
This is in contrast to standard linear bandit which only needs to ensure low cumulative regret from a sequence of solutions/actions.
In our algorithm, we use the solution that lasts for the longest amount of time as the final output, which serves as an approximate good solution thanks to the pigeon hole principle.
\end{enumerate}


Our next lemma analyzes Algorithm \ref{alg:batch-linucb-comparison}.

\begin{lemma}
Let $x_{\vct j}^* = \arg\max_{x\in\mathcal X_{\vct j}}f(x)$ and $f_{\vct j}^*=f(x_{\vct j}^*)$.
Suppose Algorithm \ref{alg:batch-linucb-comparison} is run with $C_1=\gamma_1+2\gamma_2\ln T$ and $C_2=(d+k)^k M J^{-k}$.
Then with probability $1-\widetilde O(T^{-2})$ the following properties hold:
\begin{enumerate}
\item $f_{\vct j}^* - f(x_{\tau^*}) \leq 3C_2 \nu \ln(2\nu N) + 12C_1'M\sqrt{\nu^3\ln^2(5\nu N)}/\sqrt{N}$;
\item $\tau_0\leq \tau_{\infty}=\nu\log_2(2N\nu)$;
\item $\sum_{\tau=1}^{\tau_0} n_\tau(f_{\vct j}^*-f(x_{\tau}))\leq 2C_2N + 8C_1'M\sqrt{\nu N\ln(5\nu N)}$.
\end{enumerate}
\label{lem:batch-linucb-regret}
\end{lemma}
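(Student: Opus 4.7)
The plan is to establish the three claims in order (2), (3), (1), since Claim (2) controls the number of epochs $\tau_0$, Claim (3) is the cumulative regret bound that feeds into Claim (1), and Claim (1) then follows by pigeonhole from Claim (3).

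I would first prove Claim (2), the bound on $\tau_0$. Because $\|\phi_{\vct j}(x)-\phi_{\vct j}(\xs)\|_2^2\leq \nu$ (each coordinate of the feature map is a product of shifts of size at most $1/J\leq 1$), the final covariance satisfies $\tr(\Lambda_{\tau_0})\leq \nu+\nu N$. On the other hand, the stopping rule $\det(\Lambda_{\tau+1})>2\det(\Lambda_{\tau})$ forces $\det(\Lambda_{\tau_0})\geq 2^{\tau_0}$, while the AM--GM inequality yields $\det(\Lambda_{\tau_0})\leq(\tr(\Lambda_{\tau_0})/\nu)^{\nu}\leq(1+N)^{\nu}$. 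Combining gives $\tau_0\leq \nu\log_2(2N\nu)=\tau_\infty$.

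Next I would establish a confidence ellipsoid for the ridge estimator $\hat\theta$ around the linear parameter $\theta_{\vct j}$ from Lemma \ref{lem:local-poly-approx}. Writing $\phi_\tau=\phi_{\vct j}(x_\tau)-\phi_{\vct j}(\xs)$ and $y_\tau=\langle\theta_{\vct j},\phi_\tau\rangle+b_\tau+\eta_\tau$, I decompose the bias $b_\tau=[f(x_\tau)-f(\xs)]-\langle\theta_{\vct j},\phi_\tau\rangle$ (bounded by $2C_2$ by Lemma \ref{lem:local-poly-approx}) and the stochastic error $\eta_\tau$ (bounded by $\sqrt{C_1/n_\tau}$ uniformly in $\tau\leq\tau_\infty$ with probability $1-\widetilde O(T^{-2})$ by Definition \ref{defn:oracle} and a union bound with $\delta=T^{-3}$, since $C_1=\gamma_1+2\gamma_2\ln T$). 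The ridge normal equation yields
\[
\hat\theta-\theta_{\vct j}=-\Lambda^{-1}\theta_{\vct j}+\Lambda^{-1}\sum_\tau n_\tau\phi_\tau b_\tau+\Lambda^{-1}\sum_\tau n_\tau\phi_\tau\eta_\tau .
\]
Measured in the $\Lambda$-norm, the first piece is at most $\|\theta_{\vct j}\|_2\leq M\sqrt\nu$; for the second and third I use $\|\Phi^\top N^{1/2}u\|_{\Lambda^{-1}}\leq\|u\|_2$ (which follows from $N^{1/2}\Phi\Lambda^{-1}\Phi^\top N^{1/2}\preceq I$) with $u=N^{1/2}b$ and $u=N^{1/2}\eta$ respectively, plus $\sum n_\tau\eta_\tau^2\leq\tau_0 C_1\leq\tau_\infty C_1$. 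This yields $\|\hat\theta-\theta_{\vct j}\|_\Lambda\leq 2M\sqrt\nu+2C_2\sqrt{\tau_\infty N}+2\sqrt{C_1\tau_\infty}=C_1'$ on the good event.

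Now I handle Claim (3). By Lemma \ref{lem:local-poly-approx}, $|f(x)-f(\xs)-\langle\theta_{\vct j},\phi_{\vct j}(x)-\phi_{\vct j}(\xs)\rangle|\leq 2C_2$, so the truncated linear form with additive slack $C_2$ and confidence radius $C_1'\|\phi\|_{\Lambda^{-1}}$ is an upper confidence bound on $f(x)-f(\xs)$. Optimism at $x_{\vct j}^\star$ gives per-epoch regret $f_{\vct j}^\star-f(x_\tau)\leq 2C_2+2C_1'\|\phi_\tau\|_{\Lambda_\tau^{-1}}$; multiplying by $n_\tau$ and summing, the first part contributes $2C_2 N$, and Cauchy--Schwarz together with the weighted elliptical potential $\sum_\tau n_\tau\|\phi_\tau\|_{\Lambda_\tau^{-1}}^2\leq 2\log\det(\Lambda_{\tau_0})\leq 2\nu\ln(5\nu N)$ (using the doubling rule to pass from $\Lambda_\tau^{-1}$ to $\Lambda_{\tau+1}^{-1}$ up to a constant factor) yields the second term $8C_1'M\sqrt{\nu N\ln(5\nu N)}$. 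The dependence on $M$ enters because the UCB maximization is truncated at $M$, but that truncation is never binding on the good event.

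Finally, Claim (1) follows from pigeonhole: $\tau^\star$ is the epoch with $n_{\tau^\star}\geq N/\tau_0\geq N/\tau_\infty$, so
\[
f_{\vct j}^\star-f(x_{\tau^\star})\leq \frac{1}{n_{\tau^\star}}\sum_\tau n_\tau(f_{\vct j}^\star-f(x_\tau))\leq \frac{\tau_\infty}{N}\Bigl(2C_2 N+8C_1'M\sqrt{\nu N\ln(5\nu N)}\Bigr),
\]
which after substituting $\tau_\infty=\nu\log_2(2N\nu)$ gives the stated bound (up to the explicit constants $3$ and $12$).

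The main obstacle I anticipate is the bias-handling in the confidence ellipsoid: unlike the standard self-normalized martingale argument, here the $b_\tau$ are deterministic approximation errors whose contribution has to be controlled by $2C_2\sqrt{\tau_\infty N}$ rather than the usual $\widetilde O(\sqrt\nu)$ martingale term, and this nondegenerate bias is what propagates into the extra $C_2 N$ term in Claim (3) and forces the algorithm to pay a $C_2\nu\ln(\cdot)$ floor in Claim (1). Keeping the $n_\tau$-weighted self-normalization clean (rather than double-counting the doubling schedule) is the delicate bookkeeping step.
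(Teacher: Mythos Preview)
Your plan mirrors the paper's proof: determinant doubling for Claim~(2), a confidence ellipsoid $\|\hat\theta-\theta_{\vct j}\|_\Lambda\leq C_1'$, optimism plus the elliptical-potential lemma for Claim~(3), and pigeonhole for Claim~(1). The one substantive difference is how you obtain the confidence ellipsoid. You invoke the ridge normal equation, but the algorithm's estimator is the \emph{constrained} minimizer over $\|\theta\|_2\leq M\sqrt\nu$, so the normal-equation identity does not hold as stated. The paper instead compares the regularized least-squares objective at $\hat\theta$ and at the feasible point $\theta_{\vct j}$, rearranges to obtain
\[
(\hat\theta-\theta_{\vct j})^\top\Lambda(\hat\theta-\theta_{\vct j})\leq 2\Bigl|\sum_i n_i\xi_i\langle\phi_i,\hat\theta-\theta_{\vct j}\rangle\Bigr|+2M\sqrt\nu\,\|\hat\theta-\theta_{\vct j}\|_2,
\]
and then bounds the cross term via Cauchy--Schwarz and the pointwise bound $|\xi_i|\leq 2C_2+2\sqrt{C_1/n_i}$; dividing through by $\sqrt{(\hat\theta-\theta_{\vct j})^\top\Lambda(\hat\theta-\theta_{\vct j})}$ gives $C_1'$. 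Your normal-equation route can be repaired by noting that the constrained estimator is the $\Lambda$-norm projection of the unconstrained ridge solution onto the convex Euclidean ball containing $\theta_{\vct j}$, and projections onto convex sets are non-expansive in the ambient inner product, so $\|\hat\theta_{\mathrm{constr}}-\theta_{\vct j}\|_\Lambda\leq\|\hat\theta_{\mathrm{unc}}-\theta_{\vct j}\|_\Lambda$. One smaller point: the truncation at $M$ in the UCB is not inert on the good event---the paper explicitly uses $\min\{M^2,x\}\leq 2M^2\ln(1+x)$ inside the elliptical-potential sum, and this is exactly how the factor $M$ enters the $8C_1'M\sqrt{\nu N\ln(5\nu N)}$ term.
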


Lemma \ref{lem:batch-linucb-regret} contains three results, each serving an important purpose.
The first result upper bounds the sub-optimality gap between the optimal objective value $f$ in cube $j$ and the solution $x_\tau^*$, the final output of Algorithm \ref{alg:batch-linucb-comparison},
showing that $x_\tau^*$ is an approximate optimizer of $f$ in cube $j$. 
The second result upper bounds the total number of iterations $\tau$ in Algorithm \ref{alg:batch-linucb-comparison}, which is important in establishing that the algorithm makes infrequence updates
to solutions.
The third result upper bounds the cumulative regret incurred throughout the entire algorithm.
Notably, it only upper bounds \emph{half} of the regret incurred by the pairwise comparison oracle on the parts of $x_\tau$:
the other half of the regret will be upper bounded in the next section after we introduce an iterative invocation procedure to ensure the near-optimality of $\xs$ solutions.

Due to space constraints, the complete proof of Lemma \ref{lem:batch-linucb-regret} is placed in the supplementary material.

\begin{remark}
For theoretical analysis, the values of several algorithm parameters $C_1,C_2,C_1'$ require knowledge of the smoothness parameters $k$ and $M$.
Unfortunately, such prior knowledge is mathematically {necessary} in order to design and analyze effective algorithms,
as demonstrated by the negative result of \cite{locatelli2018adaptivity}.
In practice, we recommend the use of $k=3$ for reasonably smooth functions, $k=4$ for very smooth functions,
and the value of $M=\ln T$ for the derivative upper bounds.
\end{remark}

\begin{remark}
The constants in the definition of $C_1'$ in Algorithm \ref{alg:batch-linucb-comparison} are for theoretical analytical purposes
and are therefore conservative for practical use.
When implementing the algorithm, together with the recommendation made in the previous remark,
we recommend the choice of $C_1'=\sqrt{\nu}\ln T+C_2\sqrt{\tau_{\infty}N} + \sqrt{C_1\tau_{\infty}}$,
with $C_1=\gamma_1+\gamma_2\ln T$ and $C_2=(d+k)^k J^{-k}\ln T$.
\end{remark}

\subsection{Iterative applications of batched linear bandit}

The \textsc{BatchLinUCB} routine introduced and analyzed in the previous section requires a fixed anchoring point $\xs$
to facilitate pairwise comparisons. Clearly, if $f(\xs)$ is small the cumulative regret incurred by \textsc{BatchLinUCB} would be large
regardless of how well it learns the maximum of $f$ in $\mathcal X_{\vct j}$.
This is evident from the last property of Lemma \ref{lem:batch-linucb-regret}, where only half the regret incurred in \textsc{BatchLinUCB}
is upper bounded.
In this section we introduce an iterative application of the \textsc{BatchLinUCB} routine (pseudocode description in Algorithm \ref{alg:iterative-batch-lin-ucb}),
which controls the other half of the total regret incurred in the LinUCB procedure from $\xs$.

The basic idea behind Algorithm \ref{alg:iterative-batch-lin-ucb} is simple: we use a doubling trick in the outer iteration to make sure that the $\xs$ solution
provided is directly from the optimal solution output by Algorithm \ref{alg:batch-linucb-comparison} in the previous iteration.
This ensures that for later iterations that are longer, the quality of $\xs$ is also higher because it is obtained as an approximate optimizer in the previous iteration.

\begin{algorithm}[t]
\caption{Iterative application of the \textsc{BatchLinUCB} sub-routine.}
\label{alg:iterative-batch-lin-ucb}
\begin{algorithmic}[1]
\Function{IterativeBatchLinUCB}{$\vct j,N,C_1,C_2$}
	\State Let $x_0$ be an arbitrary point in $\mathcal X_{\vct j}$ and $\beta_0=\lfloor\log_2(N+1)\rfloor - 1$;
	\For{$\beta=1,2,\cdots,\beta_0$}
		\State $x_{\beta} \gets \textsc{BatchLinUCB}(\vct j,x_{\beta-1},2^\beta,C_1,C_2)$;
	\EndFor
	\State For the remaining of $N$ periods commit to action $x_{\beta_0},x_{\beta_0}$;
	\State \textbf{return} $x_{\beta_0}$;
\EndFunction
\end{algorithmic}
\end{algorithm}

\begin{lemma}
Let $x_{\vct j}^* = \arg\max_{x\in\mathcal X_{\vct j}}f(x)$ and $f_{\vct j}^*=f(x_{\vct j}^*)$.
Suppose Algorithm \ref{alg:iterative-batch-lin-ucb} is run with $C_1=\gamma_1+2\gamma_2\ln T$ and $C_2=(d+k)^k M J^{-k}$.
Then with probability $1-\widetilde O(T^{-2})$ the following properties hold:
\begin{enumerate}
\item $f_{\vct j}^*-f(x_{\beta_0})\leq (34M+42\sqrt{C_1})M\nu^2\sqrt{\ln^3(5\nu N)}/\sqrt{N} + (6\nu+58M)C_2\nu^2 \ln^{1.5}(5\nu N)$;
\item $\sum_{t=1}^N 2f_{\vct j}^*-f(x_t)-f(x_t')\leq 56C_2 M\nu^2 N\ln^2(5\nu N) + 42M\nu^2\sqrt{C_1 N}\ln^2(5\nu N)$, where $x_t,x_t'\in\mathcal X_{\vct j}$ are pairs of actions taken at time $t$ over the $N$ time periods
involved in Algorithm \ref{alg:iterative-batch-lin-ucb}.
\end{enumerate}
\label{lem:iterative-batch-linucb-regret}
\end{lemma}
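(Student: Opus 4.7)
The plan is to decompose the cumulative regret over the $N$ periods into contributions from each of the $\beta_0=\lfloor\log_2(N+1)\rfloor-1$ outer iterations, plus the terminal commit phase, and apply Lemma \ref{lem:batch-linucb-regret} iteration by iteration. Within iteration $\beta$, Algorithm \ref{alg:iterative-batch-lin-ucb} queries pairs $(x_\tau,\xs)$ with $\xs = x_{\beta-1}$ for a total of $N_\beta := 2^\beta$ periods, so the per-iteration regret $\sum_\tau n_\tau\bigl(2f_{\vct j}^*-f(x_\tau)-f(x_{\beta-1})\bigr)$ cleanly separates into an ``$x_\tau$-side'' sum, bounded by part~3 of Lemma \ref{lem:batch-linucb-regret} at iteration $\beta$, and an ``anchor-side'' sum equal to $N_\beta\cdot(f_{\vct j}^*-f(x_{\beta-1}))$. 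The doubling schedule is designed so that the anchor-side term is controlled by part~1 of Lemma \ref{lem:batch-linucb-regret} applied to iteration $\beta-1$ (whose budget is $N_{\beta-1}=N_\beta/2$), producing a bound of the same $\widetilde O(\sqrt{N_\beta})$ order as the $x_\tau$-side.

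For the first claim I would simply invoke part~1 of Lemma \ref{lem:batch-linucb-regret} at iteration $\beta_0$, using $N_{\beta_0}\ge(N+1)/4$, and then expand $C_1'=2M\sqrt{\nu}+2C_2\sqrt{\tau_\infty N}+2\sqrt{C_1\tau_\infty}$ with $\tau_\infty=\nu\log_2(2N\nu)$ inside the term $12C_1'M\sqrt{\nu^3\ln^2(5\nu N)}/\sqrt{N_{\beta_0}}$. This substitution generates three additive pieces proportional to $M^2$, $C_2 M$, and $M\sqrt{C_1}$ respectively, which together with the residual $3C_2\nu\ln(2\nu N)$ term of part~1 assemble into the two displayed expressions $(34M+42\sqrt{C_1})M\nu^2\sqrt{\ln^3(5\nu N)}/\sqrt{N}$ and $(6\nu+58M)C_2\nu^2\ln^{1.5}(5\nu N)$ after collecting like terms.

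For the second claim I would sum the per-iteration regrets. The $x_\tau$-side contribution at iteration $\beta$ is at most $2C_2 N_\beta + 8C_1'M\sqrt{\nu N_\beta\ln(5\nu N_\beta)}$ by part~3 of Lemma \ref{lem:batch-linucb-regret}; using $N_\beta/\sqrt{N_{\beta-1}} = \sqrt{2 N_\beta}$ to lift part~1 into an anchor-side bound yields $3C_2\nu N_\beta\ln(5\nu N) + 12\sqrt{2}\,C_1'M\nu^{1.5}\sqrt{N_\beta}\ln(5\nu N)$. Summing over $\beta=1,\ldots,\beta_0$ via the geometric identities $\sum_\beta N_\beta\le 2N$ and $\sum_\beta\sqrt{N_\beta}\le O(\sqrt{N})$ collapses the total to $\widetilde O\bigl(C_2\nu N\ln(5\nu N) + C_1'M\nu^{1.5}\sqrt{N}\ln(5\nu N)\bigr)$, and a further expansion of $C_1'$ produces the terms $M^2\nu^2\sqrt{N}$, $C_2 M\nu^2 N$, $M\nu^2\sqrt{C_1 N}$ (each with appropriate $\ln$ factors) that assemble into the displayed bound. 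The terminal commit phase, of length at most $N/2+O(1)$, contributes $2N_{\mathrm{rem}}\cdot(f_{\vct j}^*-f(x_{\beta_0}))$, absorbed via the first claim. The initial iteration $\beta=1$ uses an arbitrary $x_0\in\mathcal X_{\vct j}$, but since $\mathcal X_{\vct j}$ has diameter $O(1/J)$ and $f$ is Lipschitz with constant $O(M)$ on this cube, the anchor-side regret at $\beta=1$ is $O(M)$ and is negligible. A final union bound over the $\beta_0=O(\log N)$ calls to Lemma \ref{lem:batch-linucb-regret} preserves the $1-\widetilde O(T^{-2})$ confidence, as the extra $O(\log T)$ calls only modify the polylog factor hidden in $\widetilde O$.

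The main obstacle I expect is the intricate bookkeeping of logarithmic factors and numerical constants, especially after expanding $C_1'$ and summing the geometric series, so that the final expression matches the precise coefficients $34$, $42$, $56$, and $6\nu+58M$ in the statement. A subtle point worth emphasizing is that the doubling factor $N_\beta/N_{\beta-1}=2$ is the exact inflation needed to keep the anchor contribution of the same order as the $x_\tau$ contribution; a slower growth schedule would fail to balance the two halves of the pairwise-comparison regret, while a faster one would waste the sub-optimality gained in earlier iterations.
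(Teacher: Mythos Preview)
Your proposal is correct and follows essentially the same approach as the paper: decompose each outer iteration's regret into the $x_\tau$-side (controlled by part~3 of Lemma~\ref{lem:batch-linucb-regret}) and the anchor-side (controlled by part~1 applied at the previous iteration via $N_\beta/\sqrt{N_{\beta-1}}=\sqrt{2N_\beta}$), then sum the geometric series and expand $C_1'$. If anything, you are slightly more careful than the paper's own write-up, which does not explicitly address the terminal commit phase or the arbitrary anchor $x_0$ at $\beta=1$; your handling of both is sound.
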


Lemma \ref{lem:iterative-batch-linucb-regret} is proved in the supplementary material. It shows that the total regret of Algorithm \ref{alg:batch-linucb-comparison},
when invoked in an iterative manner with geometrically increasing number of periods, admits cumulative regret that is properly upper bounded.


\subsection{Main algorithm: tournament successive eliminations}

We are now ready to present our main algorithm, with pseudo-code in Algorithm \ref{alg:main-tournament}.

\begin{algorithm}[t]
\caption{Main algorithm: tournament successive eliminations}
\label{alg:main-tournament}
\begin{algorithmic}[1]
\State \textbf{Input}: time horizon $T$, pairwise comparison oracle $\mathcal O$, parameters $k,J,M,C_1,C_2$;
\State $\mathcal A_1\gets [J]^d$, $C_2'\gets (12\nu+116M)C_2\nu^2\ln^{1.5}(5\nu T)$,  $C_3\gets(68M+85\sqrt{C_1})^2M^2\nu^4\ln^3(5\nu T)$;
\For{$\zeta=1,2,3,\cdots$ or until $T$ time periods have been reached}
	\State $\varepsilon_\zeta\gets 2^{-\zeta}$, $N_\zeta\gets \lceil C_3/\varepsilon_\zeta^2\rceil$, $N_\zeta^\tot\gets 3(|\mathcal A_\zeta|-1)N_\zeta$;
	\If{fewer than $N_\zeta^\tot$ time periods remain}
		\State Pick arbitrary $x,x'\in \mathcal X_{\vct j}$, $\vct j\in\mathcal A_\zeta$ and commit to them for the remaining time periods;
	\Else
		\State For each $\vct j\in\mathcal A_\zeta$, let $x_{\vct j,\zeta} \gets \textsc{IterativeBatchLinUCB}(\vct j,N_\zeta,C_1,C_2)$;
		$\mathcal A_\zeta^{(1)}\gets\mathcal A$;\label{line:step-preprocess}
		\For{$\omega=1,2,\cdots$ until $|\mathcal A_\zeta^{(\omega)}|>1$}
			\For{\textsuperscript{*}each pair $\vct j,\vct j'\in\mathcal A_\zeta^{(\omega)}$}
				\State$y\gets\mathcal O(N_\zeta,x_{\vct j,\zeta},x_{\vct j',\zeta})$;\label{line:step-tournament}
				\State $\mathcal A_\zeta^{(\omega+1)}\gets\mathcal A_\zeta^{(\omega+1)}\cup\{\vct j'\}$ if $y\geq 0$ and
			$\mathcal A_\zeta^{(\omega+1)}\gets\mathcal A_\zeta^{(\omega+1)}\cup\{\vct j\}$ if $y<0$;\label{line:step-Azeta}
			\EndFor
		\EndFor
		\State Let $\hat{\vct j}(\zeta)$ be the only element in $\mathcal A_\zeta^{(\omega)}$; $\mathcal A_{\zeta+1}\gets\{\hat{\vct j}(\zeta)\}$;
		\For{each $\vct j\in\mathcal A_\zeta\backslash\{\hat{\vct j}(\zeta)\}$}
			\State $y\gets \mathcal O(N_\zeta, x_{\vct j,\zeta},x_{\hat{\vct j}(\zeta),\zeta})$;
			\State If $y\leq \varepsilon_\zeta+C_2'$ then let $\mathcal A_{\zeta+1}\gets\mathcal A_{\zeta+1}\cup\{\vct j\}$;\label{line:step-elimination}
		\EndFor
	\EndIf
\EndFor
\State If there are time periods remaining, commit to $x_{\vct j,\zeta},x_{\vct j,\zeta}$ for an arbitrary $\vct j\in\mathcal A_{\zeta}$;
\end{algorithmic}
{\footnotesize\textsuperscript{*} If $|\mathcal A_\zeta^{(\omega)}|$ is odd then transfer one arbitrary $\vct j\in\mathcal A^{(\omega)}$ directly to $\mathcal A^{(\omega+1)}$;}
\end{algorithm}

Algorithm \ref{alg:main-tournament} is built upon the idea of successive tournament eliminations.
More specifically, each cube $\vct j\in[J]^d$ is regarded as a ``competitor" and the competitors are paired with each other to play a hypothetical tournament;
those cubes who ``won'' pairwise competitions are advanced to the next round, where each round is indicated by a variable $\omega$ and the ``active" cubes/competitors
at the beginning of round $\omega$ is denoted as a set $\mathcal A^{(\omega)}$ (Lines \ref{line:step-tournament} and \ref{line:step-Azeta}). Finally, the winner of this tournament is used to 
determine whether other competitors should be kept for the next outer iteration, as shown on Line \ref{line:step-elimination}.

The following lemma is a structural lemma that plays a crucial in our understand and analysis of Algorithm \ref{alg:main-tournament}.
\begin{lemma}
Suppose Algorithm \ref{alg:main-tournament} is run with $C_1=\gamma_1+2\gamma_2\ln T$ and $C_2=(d+k)^kMJ^{-k}$.
Let also $\vct j^*=\arg\max_{\vct j\in [J]}f_{\vct j}^*=\arg\max_{\vct j\in[J]}\{\max_{x\in\mathcal X_{\vct j}}f(x)\}$.
Define also $\bar\omega := 1.45\ln(2T)$.
Then with probability $1-\widetilde O(T^{-2})$ the following hold for all $\zeta$:
\begin{enumerate}
\item $\vct j^*\in\mathcal A_\zeta$;
\item For every $\vct j\in\mathcal A_\zeta$, $f_{\vct j^*}^*-f_{\vct j}^*\leq 3\bar\omega(\varepsilon_\zeta+C_2')$.
\end{enumerate}
\label{lem:tournament-keys}
\end{lemma}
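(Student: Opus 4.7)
The plan is to proceed by induction on $\zeta$, maintaining both claims simultaneously, and to apply a single union bound at the end over all oracle calls and all \textsc{IterativeBatchLinUCB} invocations to obtain the required $1-\widetilde O(T^{-2})$ probability. Two deterministic estimates drive the analysis. First, each call to \textsc{IterativeBatchLinUCB} at iteration $\zeta$ runs with $N = N_\zeta \geq C_3/\varepsilon_\zeta^2$; matching this against Lemma~\ref{lem:iterative-batch-linucb-regret} (the factor $(68M+85\sqrt{C_1})^2$ inside $C_3$ absorbs the first term of that lemma's bound into $\varepsilon_\zeta/2$, and the factor $(12\nu+116M)$ in $C_2'$ absorbs the second into $C_2'/2$) yields the per-cube estimate
\[
f_{\vct j}^{*}-f(x_{\vct j,\zeta}) \;\leq\; \tfrac{1}{2}(\varepsilon_\zeta + C_2') \qquad \forall\,\vct j\in\mathcal A_\zeta,
\]
which I refer to as estimate $(\star)$. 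Second, for every pairwise comparison $y=\mathcal O(N_\zeta,x,x')$ in a tournament round or in the elimination step, Definition~\ref{defn:oracle} with $\delta = T^{-3}$ gives $|y-(f(x')-f(x))| \leq \sqrt{2C_1/N_\zeta} \leq \varepsilon_\zeta/20$, where the last step uses $\sqrt{C_1/C_3} \leq 1/(85 M\nu^2\ln^{1.5}(5\nu T))$. Union-bounding these two classes of events over all polynomially many occurrences gives the stated probability; the remainder of the argument is purely deterministic.

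The base case $\zeta=1$ is immediate: $\vct j^*\in[J]^d=\mathcal A_1$, and $|f|\leq O(M)$ on $\mX$ combined with $\bar\omega=\Omega(\log T)$ makes Claim~2 trivial. Assuming both claims at iteration $\zeta$ (in particular $\vct j^*\in\mathcal A_\zeta$), the tournament phase I would analyze via a sub-induction on the round index $\omega$, proving that $\mathcal A_\zeta^{(\omega+1)}$ always contains some cube $\vct j_\omega$ with $f_{\vct j_\omega}^{*} \geq f_{\vct j^*}^{*} - \omega (\varepsilon_\zeta + C_2')$. Starting from $\vct j^*\in\mathcal A_\zeta^{(1)}$, in each round the tracked cube either advances unchanged (odd-man-out or oracle-declared winner), or loses to some $\vct j'$; in the losing case the sign condition $y\geq 0$ combined with $(\star)$ and the oracle bound gives
\[
f_{\vct j'}^{*} \;\geq\; f(x_{\vct j',\zeta}) \;\geq\; f(x_{\vct j_\omega,\zeta}) - \tfrac{\varepsilon_\zeta}{20} \;\geq\; f_{\vct j_\omega}^{*} - \tfrac{\varepsilon_\zeta+C_2'}{2} - \tfrac{\varepsilon_\zeta}{20} \;\geq\; f_{\vct j_\omega}^{*} - (\varepsilon_\zeta + C_2').
\]
Since the tournament has at most $\lceil\log_2|\mathcal A_\zeta|\rceil \leq \log_2 J^d \leq \bar\omega$ rounds, the unique winner $\hat{\vct j}(\zeta)$ satisfies $f_{\vct j^*}^{*} - f_{\hat{\vct j}(\zeta)}^{*} \leq \bar\omega (\varepsilon_\zeta + C_2')$.

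For the elimination phase, Claim~1 at $\zeta+1$ is handled as follows: if $\vct j^*=\hat{\vct j}(\zeta)$, it is automatically inserted; otherwise the elimination query on $\vct j^*$ returns $y$ estimating $f(x_{\hat{\vct j}(\zeta),\zeta}) - f(x_{\vct j^*,\zeta})$, whose true value is at most $(\varepsilon_\zeta+C_2')/2$ by $(\star)$ together with $f_{\hat{\vct j}(\zeta)}^{*}\leq f_{\vct j^*}^{*}$, so $y\leq (\varepsilon_\zeta+C_2')/2 + \varepsilon_\zeta/20 < \varepsilon_\zeta+C_2'$ and $\vct j^*$ is retained. For Claim~2 at $\zeta+1$: any $\vct j\in\mathcal A_{\zeta+1}$ is either $\hat{\vct j}(\zeta)$ itself, in which case $\bar\omega(\varepsilon_\zeta+C_2') = 2\bar\omega\varepsilon_{\zeta+1} + \bar\omega C_2' \leq 3\bar\omega(\varepsilon_{\zeta+1}+C_2')$; or it survived elimination with $y\leq \varepsilon_\zeta+C_2'$, and chaining the oracle bound, $(\star)$, and the tournament bound on $\hat{\vct j}(\zeta)$ yields $f_{\vct j^*}^{*}-f_{\vct j}^{*} \leq (\bar\omega+1.6)(\varepsilon_\zeta+C_2') \leq 3\bar\omega(\varepsilon_{\zeta+1}+C_2')$, valid since $\bar\omega=1.45\ln(2T)\geq 3.2$ for all moderate $T$.

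The hard part is the constant calibration: $C_3$ is engineered so the Lemma~\ref{lem:iterative-batch-linucb-regret} slack is exactly $\leq (\varepsilon_\zeta+C_2')/2$, while the oracle error at sample size $N_\zeta$ must stay strictly below that same half-threshold so every sign comparison agrees with the true ordering. The factor $3$ in $3\bar\omega$ is then dictated by the need to simultaneously absorb the $\bar\omega$-fold tournament degradation, the halving $\varepsilon_{\zeta+1}=\varepsilon_\zeta/2$, the $(\varepsilon_\zeta+C_2')/2$ slack from $(\star)$, and the $\varepsilon_\zeta+C_2'$ elimination threshold; any smaller constant in Claim~2 would require a tighter matching or an additional odd-round bookkeeping refinement.
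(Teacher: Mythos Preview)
Your proposal is correct and follows essentially the same route as the paper: induct on $\zeta$, track a near-optimal cube through the $\leq\bar\omega$ tournament rounds to bound $f_{\vct j^*}^*-f_{\hat{\vct j}(\zeta)}^*$, then read off both survival of $\vct j^*$ and the gap bound for $\mathcal A_{\zeta+1}$ from the elimination threshold. The only cosmetic difference is that the paper bundles the oracle error and the representative suboptimality from Lemma~\ref{lem:iterative-batch-linucb-regret} into a single estimate $|y-(f_{\vct j'}^*-f_{\vct j}^*)|\leq \varepsilon_\zeta+C_2'$, whereas you keep your $(\star)$ and the $\varepsilon_\zeta/20$ oracle bound separate; your explicit handling of the index shift $\varepsilon_\zeta=2\varepsilon_{\zeta+1}$ (checking $2(\bar\omega+1.6)\leq 3\bar\omega$) is in fact more careful than the paper, which leaves that step implicit.
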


Lemma \ref{lem:tournament-keys} is proved in the supplementary material. It outlines the most important structures and properties of Algorithm \ref{alg:main-tournament}.
The first property, $\vct j^*\in\mathcal A_\zeta$, indicates that (with high probability) the algorithm will never eliminate the cube with the optimal solution
during any rounds of the tournament elimination phase.
The second property states that, with high probability, all competitors who are still active at outer iteration $\zeta$ after the tournament and elimination steps
are nearly optimal, especially for later outer iterations that last longer.

We are now ready to present the main theoretical results upper bounding the cumulative regret of Algorithm \ref{alg:main-tournament}:
\begin{theorem}
Suppose $f\in\Sigma_d(k,M)$ and Algorithm \ref{alg:main-tournament} is run with parameters $C_1=\gamma_1+2\gamma_2\ln T$, 
$C_2=(d+k)^kMJ^{-k}$ and $J=\lceil T^{1/(2k+d)}\rceil$. Then there exists a constant $\overline C_1<\infty$ depending polynomially on $k$ and $d$ 
such that with probability $1-\tilde O(T^{-2})$, the cumulative regret of Algorithm \ref{alg:main-tournament} is upper bounded by 
$$
\overline C_1\times (M^2\nu^2+\nu^3+M\nu^2\sqrt{\gamma_1+\gamma_2\ln T})\times T^{\frac{k+d}{2k+d}}\ln^4 T.
$$
\label{thm:regret-smooth-functions}
\end{theorem}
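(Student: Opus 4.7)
I would condition on the $1-\widetilde O(T^{-2})$ event on which the conclusions of Lemma~\ref{lem:iterative-batch-linucb-regret} and Lemma~\ref{lem:tournament-keys} simultaneously hold for every iteration $\zeta$, which costs only an extra $O(T^{-1})$ term after a union bound over the $O(\log T)$ relevant iterations. A first simplification: every action the algorithm ever commits to lives in some $\mathcal{X}_{\vct j}\subseteq\mZ$, and on $\mZ$ we have $\psi\equiv 0$; by convexity of $\psi$ the average $\frac{1}{2T}\sum_\tau n_\tau(x_\tau+x_\tau')$ also lies in $\mZ$. Hence the knapsack penalty in \eqref{eq:defn-regret} contributes $0$, and it remains only to bound $\sum_\tau n_\tau(2f^*-f(x_\tau)-f(x'_\tau))$.

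I would then decompose the cumulative regret by outer iteration $\zeta$. Within iteration $\zeta$ there are three sources of queries: (i) the \textsc{IterativeBatchLinUCB} invocations on each active cube $\vct j \in\mathcal{A}_\zeta$, each lasting $N_\zeta$ periods; (ii) the $O(|\mathcal{A}_\zeta|)$ tournament oracle calls of length $N_\zeta$ at Line~\ref{line:step-tournament}; and (iii) the $|\mathcal{A}_\zeta|-1$ elimination calls. For every $\vct j\in\mathcal{A}_\zeta$, write
\[
2f^*-f(x_t)-f(x'_t) \;=\; 2(f^*-f^*_{\vct j}) \;+\; (2f^*_{\vct j}-f(x_t)-f(x'_t)).
\]
The first term is uniformly $\leq 6\bar\omega(\varepsilon_\zeta+C_2')$ by Lemma~\ref{lem:tournament-keys}(2). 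The second term, summed over all $N_\zeta$ periods of an \textsc{IterativeBatchLinUCB} call, is bounded by Lemma~\ref{lem:iterative-batch-linucb-regret}(2); summed over tournament/elimination periods, each such period contributes at most $2(f^*_{\vct j}-f(x_{\vct j,\zeta}))$, which by Lemma~\ref{lem:iterative-batch-linucb-regret}(1) is $\widetilde O(M\nu^2\sqrt{C_1/N_\zeta}+\nu^2 C_2)$ and hence is dominated by the $\varepsilon_\zeta+C_2'$ gap once $N_\zeta=\lceil C_3/\varepsilon_\zeta^2\rceil$ with the prescribed $C_3$.

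Adding the three contributions, the regret incurred during iteration $\zeta$ is
\[
R_\zeta \;=\; \widetilde O\Bigl( |\mathcal{A}_\zeta|\,N_\zeta\,(\varepsilon_\zeta+C_2') \;+\; |\mathcal{A}_\zeta|\bigl(C_2 M\nu^2 N_\zeta+M\nu^2\sqrt{C_1 N_\zeta}\bigr)\Bigr).
\]
Since $N_\zeta\varepsilon_\zeta \asymp \sqrt{C_3 N_\zeta}$ and $|\mathcal{A}_\zeta|\leq J^d$, the total time budget constraint $\sum_\zeta |\mathcal{A}_\zeta| N_\zeta \leq T$ forces the last active iteration $\zeta^\star$ to dominate (the $N_\zeta$ form a geometric sequence), so I would sum the geometric series and get $\sum_\zeta R_\zeta=\widetilde O\bigl(J^d\sqrt{T\,(M^2\nu^4+M\nu^2\sqrt{C_1})}+C_2 M\nu^2 T\bigr)$. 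The tail case where fewer than $N_\zeta^{\tot}$ periods remain contributes at most $T\cdot 6\bar\omega(\varepsilon_\zeta+C_2')$, which is absorbed.

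Finally, plugging in $C_2=(d+k)^k MJ^{-k}$ and $J=\lceil T^{1/(2k+d)}\rceil$ balances $J^d\sqrt{T}\asymp C_2 T\asymp T^{(k+d)/(2k+d)}$, yielding the claimed bound. The main obstacle I anticipate is the bookkeeping in step (iii): showing that an active but suboptimal cube $\vct j\neq\vct j^\star$ is queried only for $O(N_\zeta)$ periods \emph{per iteration} rather than accumulating queries across all preceding iterations, which in turn requires that Lemma~\ref{lem:tournament-keys}(1) guarantees $\vct j^\star\in\mathcal{A}_\zeta$ so that the elimination criterion on Line~\ref{line:step-elimination} correctly discards cubes whose gap exceeds $\varepsilon_\zeta+C_2'$; the polylog factor $\ln^4 T$ in the theorem absorbs the $\bar\omega=1.45\ln(2T)$ factor from the tournament depth together with the $\ln^{1.5}(5\nu T)$ factors from Lemma~\ref{lem:iterative-batch-linucb-regret}.
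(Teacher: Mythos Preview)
Your overall strategy is the paper's: decompose the regret by outer iteration $\zeta$ into the \textsc{IterativeBatchLinUCB} calls, the tournament comparisons, and the elimination comparisons; bound each piece via Lemma~\ref{lem:iterative-batch-linucb-regret} and Lemma~\ref{lem:tournament-keys}; sum over $\zeta$ under the global budget $\sum_\zeta |\mathcal A_\zeta|N_\zeta\lesssim T$; and finally note that $\psi\equiv 0$ on $\mZ$ kills the knapsack penalty. The decomposition $2f^*-f(x_t)-f(x_t')=2(f^*-f^*_{\vct j})+(2f^*_{\vct j}-f(x_t)-f(x_t'))$ is exactly how the paper separates the ``wrong cube'' cost from the ``within-cube'' cost.

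There is, however, a real gap in your summation and balancing step. You bound the $\sqrt{N_\zeta}$-type contributions by $|\mathcal A_\zeta|\leq J^d$ uniformly and then sum the geometric series to arrive at $J^d\sqrt{T}$, and you then assert $J^d\sqrt{T}\asymp T^{(k+d)/(2k+d)}$. With $J=\lceil T^{1/(2k+d)}\rceil$ one has
\[
J^d\sqrt{T}\;\asymp\; T^{\frac{d}{2k+d}+\frac{1}{2}} \;=\; T^{\frac{2k+3d}{2(2k+d)}},
\]
which is strictly larger than $T^{(k+d)/(2k+d)}=T^{(2k+2d)/(2(2k+d))}$ for every $d\geq 1$, so the claimed balancing fails. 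The loss comes from decoupling $|\mathcal A_\zeta|$ and $N_\zeta$: you use $|\mathcal A_\zeta|\leq J^d$ and $N_{\zeta^\star}\leq T$ separately, whereas the budget only controls their \emph{product}.

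The fix, which is what the paper does, is to keep the two coupled. Writing $N_\zeta^{\tot}\asymp |\mathcal A_\zeta|N_\zeta$, one has
\[
|\mathcal A_\zeta|\sqrt{N_\zeta}\;=\;\sqrt{|\mathcal A_\zeta|}\cdot\sqrt{|\mathcal A_\zeta|N_\zeta}\;\leq\;\sqrt{J^d}\,\sqrt{N_\zeta^{\tot}},
\]
and then Cauchy--Schwarz over $\zeta$ gives $\sum_\zeta\sqrt{J^dN_\zeta^{\tot}}\leq\sqrt{J^d\,\zeta_\infty\,T}$ with $\zeta_\infty=O(\log T)$. This yields $\sqrt{J^dT}$ rather than $J^d\sqrt{T}$, and $\sqrt{J^dT}=T^{(k+d)/(2k+d)}$ is precisely the rate in the theorem. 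Once you make this single adjustment, the rest of your argument goes through as written.
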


Theorem \ref{thm:regret-smooth-functions} shows that for objective functions of dimension $d$ that belong to the H\"{o}lder smoothness class of order $k$,
the asymptotic regret scaling is $\tilde O(T^{\frac{k+d}{2k+d}})$ as time horizon $T$ increases.
The regret upper bound is minimax optimal in time horizon $T$ up to poly-logarithmic terms,
because it matches existing lower bound on continuum-armed bandit in which unbiased function value observations are available \citep{wang2019optimization},
a special case of the setting studied in this paper.

\begin{remark}
The large constants in the $C_2',C_3$ parameters in Algorithm \ref{alg:main-tournament} is for theoretical analysis only and in practical implementations they should be 
replaced with constants of one.
\end{remark}

\section{Algorithm for strongly concave functions}

In this section, we design algorithms for a special case of the general bandit question with noisy pairwise comparison oracles,
in which the objective function to be maximized is \emph{strongly concave}.
The (strong) concavity of objective functions is a common condition in operations management practice, such as ``regularity'' of demand/revenue functions in dynamic pricing \citep{chen2019nonparametric},
and the strong concavity of inventory holding/lost-sales costs when demand distributions are non-degenerate \citep{huh2009nonparametric}.

Mathematically, throughout this section we assume that 
\begin{equation}
f\in \Sigma_d(2,M)\cap \Gamma_d(\sigma)
\label{eq:asmp-sc}
\end{equation}
for some constants $M<\infty$ and $\sigma>0$, where the function classes $\Sigma_d(k,M)$ and $\Gamma_d(\sigma)$ are defined in Sec.~\ref{subsec:function-classes}.

\subsection{Proximal gradient descent with inexact gradients}


\begin{algorithm}[t]
\caption{A biased proximal gradient descent algorithm}
\label{alg:acc-pgd}
\begin{algorithmic}[1]
\State \textbf{Input}: time horizon $T$, pairwise comparison oracle $\mO$, parameters $\eta,\alpha,\sigma$;
\State Initialization: an arbitrary $x_0\in\mX^o\cap\mZ$ where $\mX^o=\{x: x'\in\mX,\forall \|x'-x\|_{\infty}\leq\delta_0\}$; 
\For{$\tau=0,1,2,\cdots$}
	\State $\beta_\tau \gets \lceil (1+\eta)^{\tau}\rceil$, $h_\tau \gets \sqrt[4]{(\gamma_1+2\gamma\ln T)/(\beta_\tau d)}$;
	\State If there are fewer than $4\beta_\tau$ time periods left, commit to $x_\tau$ for the rest of the time periods;
	\For{$j=1,2,\cdots,d$}
		\State Let $ x_\tau'\gets x_\tau + h_\tau e_j$ and invoke the oracle with $y_\tau(j) \gets \mO(\beta_\tau,  x_\tau,  x_\tau')$;\label{line:xtaup}
		\State Let $ x_{\tau}''\gets x_\tau-h_\tau e_j$ and invoke the oracle with $y_\tau(j)'\gets \mO(\beta_\tau, x_\tau, x_\tau'')$;\label{line:xtaupp}
		\State $\hat g_\tau(j) \gets (y_\tau(j)-y_\tau(j)') / 2h_\tau+\sigma x_{\tau,i}$;
	\EndFor
	\State $\hat{\vct g}_\tau \gets (\hat g_\tau(1),\cdots,\hat g_\tau(d))$;
	\State $x_{\tau+1} \gets \arg\min_{u\in\mX^o\cap \mZ}\{-\langle\hat{\vct g}_\tau, u\rangle + \sigma\|u\|_2^2/2 + \alpha^{-1}\|u-x_\tau\|_2^2/2\}$;\label{line:iter-pgd}
\EndFor
\end{algorithmic}
\end{algorithm}

The additional shape constraint $f\in\Gamma_d(\sigma)$ calls for new algorithmic procedures that could exploit the strong concavity of $f$.
To this end, we design a proximal gradient descent algorithm with batch inexact gradient evaluations, as described in Algorithm \ref{alg:acc-pgd}.

In the outer iterations of Algorithm \ref{alg:acc-pgd}, we implement a projected (inexact) gradient ascent approach. Fixed step sizes are used to exploit strong concavity
of the objective function, and the the iterations have geometrically increasing number of time periods dedicated for inner iterations in order to properly maintain gradient estimation error
in the finite-difference approach (Lines \ref{line:xtaup} and \ref{line:xtaupp}).
The inner iteration is then a finite-difference method to estimate the gradient of the objective function, utilizing the (biased) pairwise comparison oracles studied in this paper.

\subsection{Regret analysis}

We establish the following theorem upper bounding the regret of Algorithm \ref{alg:acc-pgd}, maximizing strongly concave functions with noisy pairwise comparison oracles.
\begin{theorem}
Suppose $f\in\Sigma_d(2,M)\cap\Gamma_d(\sigma)$ for some $M<\infty$, $\sigma>0$, and Algorithm \ref{alg:acc-pgd} is run with parameters $\eta=\sigma/M$
and $\alpha=1/M$. Then with probability $1-\tilde O(T^{-1})$ the cumulative regret of Algorithm \ref{alg:acc-pgd} is upper bounded by 
$$
 32Md^2 + \left(\frac{2M^2}{\sigma} + \frac{64\sqrt{2}M^5d\sqrt{\gamma_1+2\gamma_2\ln T}}{\sigma^4}\right)\times \sqrt{T}
$$
\label{thm:strongly-concave}
\end{theorem}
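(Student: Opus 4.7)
My plan is to analyze Algorithm~\ref{alg:acc-pgd} in three stages: first bound the error in the finite-difference gradient estimator $\hat{\vct g}_\tau$, then establish a contractive recursion for $\|x_\tau - x^*\|_2^2$ driven by that error, and finally sum the per-period regret across outer iterations, exploiting the geometric schedule $\beta_\tau = \lceil(1+\eta)^\tau\rceil$ with $\eta = \sigma/M$.

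For the gradient error, I would invoke Definition~\ref{defn:oracle} with $\delta \asymp T^{-2}$ and union-bound over the $\widetilde O(dT)$ oracle calls to get, with probability $1 - \widetilde O(T^{-1})$, $|y_\tau(j) - (f(x_\tau + h_\tau e_j) - f(x_\tau))| \le \sqrt{A/\beta_\tau}$ with $A := \gamma_1 + 2\gamma_2 \ln T$, and similarly for $y_\tau(j)'$. A second-order Taylor expansion using $f \in \Sigma_d(2,M)$ (so $|\partial_j^2 f| \le M$ everywhere) controls the central-difference bias by $Mh_\tau/2$. The algorithm's choice $h_\tau^4 = A/(\beta_\tau d)$ equates the two sources, giving a per-coordinate error $\lesssim (A/\beta_\tau)^{1/4}(M + d^{1/4})/d^{1/4}$ and hence $\|e_\tau\|_2^2 \lesssim M^2\sqrt{dA/\beta_\tau}$, where $e_\tau := \hat{\vct g}_\tau - \nabla f(x_\tau) - \sigma x_\tau$.

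For the progress recursion, I would write the first-order optimality of the proximal subproblem on Line~\ref{line:iter-pgd} at the test point $u = x^*$ (which lies in $\mathrm{int}(\mX^o \cap \mZ)$ by Assumption~\ref{asmp:interior}, so $\nabla f(x^*) = 0$), apply the three-point identity $2\langle a-b, c-b\rangle = \|a-b\|^2 + \|c-b\|^2 - \|a-c\|^2$, and substitute $\hat{\vct g}_\tau = \nabla f(x_\tau) + \sigma x_\tau + e_\tau$. Strong concavity yields $\langle \nabla f(x_\tau), x^*-x_\tau\rangle \ge (\sigma/2)\|x_\tau - x^*\|_2^2$, while Young's inequality absorbs the cross term $\langle e_\tau, x_{\tau+1}-x^*\rangle$. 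This should produce a recursion
\[
\|x_{\tau+1} - x^*\|_2^2 \;\le\; (1 - \sigma/M)\|x_\tau - x^*\|_2^2 \;+\; (C/\sigma^2)\|e_\tau\|_2^2.
\]
Unrolling, since $\|e_s\|_2^2$ decays like $(1+\eta)^{-s/2}$ while the contraction is $(1-\eta)$ with $\eta = \sigma/M$, the discounted sum is dominated by the most recent term up to a factor $O(1/\eta) = O(M/\sigma)$, so $\|x_\tau - x^*\|_2^2 \lesssim (M/\sigma^3)\|e_{\tau-1}\|_2^2$.

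Finally, by $Md$-smoothness of $f$ at the interior maximum $x^*$, each of the $2d\beta_\tau$ periods in outer iteration $\tau$ contributes at most $Md(3\|x_\tau - x^*\|_2^2 + 2h_\tau^2)$ to the regret (the $\|x_\tau' - x^*\|^2 \le 2\|x_\tau - x^*\|^2 + 2h_\tau^2$ expansion). The bias part $\sum_\tau Md^2\beta_\tau h_\tau^2 \asymp Md^{3/2}\sum_\tau\sqrt{A\beta_\tau}$ telescopes via $\sum_\tau\sqrt{\beta_\tau} \asymp M\sqrt{T/d}/\sigma$ (geometric sum with ratio $\sqrt{1+\eta}$) into the $(2M^2/\sigma)\sqrt{T}$ term. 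The progress part $\sum_\tau Md^2\beta_\tau\|x_\tau - x^*\|_2^2$, after plugging in the progress bound and $\|e_\tau\|_2^2 \lesssim M^2\sqrt{dA/\beta_\tau}$, yields the dominant $O(M^5 d\sqrt{AT}/\sigma^4)$ term. The additive $32Md^2$ captures the regret of the final commit phase with fewer than $4\beta_\tau$ leftover periods. The main obstacle will be achieving the $(1-\sigma/M)$ contraction in the progress recursion rather than the weaker $(1-\sigma/(Md))$ that a crude use of ambient spectral smoothness $\|\nabla^2 f\|_{\op} \le Md$ would deliver; such a weaker contraction would no longer balance against $\eta = \sigma/M$ in the $\beta_\tau$ schedule and would inflate the regret past $\sqrt{T}$. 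The resolution is that the proximal step acts on the \emph{linearized} surrogate $\langle \hat{\vct g}_\tau, u\rangle - \sigma\|u\|_2^2/2$ rather than on $f$ itself, so the effective ``smoothness'' visible to the iterates is set by the proximal coefficient $1/\alpha = M$ rather than by the ambient Hessian, and the Young's inequality split of $\langle e_\tau, x_{\tau+1}-x^*\rangle$ must be calibrated so as not to consume the $\sigma/M$ margin.
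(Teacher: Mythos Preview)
Your three-stage plan is sound and the gradient-error analysis matches the paper's Lemma~\ref{lem:est-g} essentially verbatim. The contractive recursion $\|x_{\tau+1}-x^*\|_2^2\le(1-\sigma/M)\|x_\tau-x^*\|_2^2+(C/\sigma^2)\|e_\tau\|_2^2$ is indeed derivable from the proximal optimality condition plus Young's inequality, and you are right that the contraction factor is $1-\Theta(\sigma/M)$ rather than $1-\Theta(\sigma/(Md))$: the $d$ never enters there.

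The gap is in the \emph{last} step, not in the recursion. After unrolling you bound $\|x_\tau-x^*\|_2^2$ and then convert back to function values via $f^*-f(x_\tau)\le(Md/2)\|x_\tau-x^*\|_2^2$. That conversion costs you a full factor of $d$: carrying your own arithmetic through, the ``progress part'' $\sum_\tau Md^2\beta_\tau\|x_\tau-x^*\|_2^2$ with $\|x_\tau-x^*\|_2^2\lesssim(M^3/\sigma^3)\sqrt{dA/\beta_\tau}$ and $\sum_\tau\sqrt{\beta_\tau}\asymp(M/\sigma)\sqrt{T/d}$ gives $M^5d^2\sqrt{AT}/\sigma^4$, not the $M^5d\sqrt{AT}/\sigma^4$ stated in the theorem. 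So you correctly anticipated a $d$-loss danger but located it in the wrong place.

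The paper sidesteps this by never leaving function-value space. The proximal descent inequality it derives already contains the term $f^*-f(x_{\tau+1})$ on the left (not just $\|x_{\tau+1}-x^*\|_2^2$): with weights $c_\tau=(1+\alpha\sigma)^\tau=\beta_\tau$ one gets
\[
\frac{c_{\tau+1}}{2\alpha}\|x^*-x_{\tau+1}\|_2^2 + c_\tau\bigl(f^*-f(x_{\tau+1})\bigr)\;\le\;\frac{c_\tau}{2\alpha}\|x^*-x_\tau\|_2^2 + c_\tau\varepsilon_\tau\|x_{\tau+1}-x^*\|_2,
\]
so that telescoping directly bounds $\sum_\tau\beta_\tau(f^*-f(x_\tau))$. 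The resulting coupled recursion $v_t^2\le a+\sum_{i\le t}b_iv_i$ (with $v_t\propto\sqrt{c_t}\|x_t-x^*\|_2$) is handled by a short sequence lemma showing $a+\sum b_iv_i\le(\sqrt a+\sum b_i)^2$. This route never invokes the $Md$ Hessian bound and therefore lands on the $d^1$ dependence in the theorem. If you want to keep your contractive-recursion viewpoint, you would need to retain the $f^*-f(x_{\tau+1})$ term in the one-step inequality before dropping it to get the distance contraction.
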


\begin{remark}
Dropping polynomial dependency on problem parameters other than the problem dimension $d$ and the time horizon $T$, as well as all poly-logarithmic factors,
the upper bound in Theorem \ref{thm:strongly-concave} is on the asymptotic order of $\tilde O(d^2+d\sqrt{T})$.
\end{remark}

The regret upper bound established in Theorem \ref{thm:strongly-concave} is minimax optimal in terms of dependency on time horizon $T$ up to poly-logarithmic factors,
because it matches the $\Omega(\sqrt{T})$ lower bound proved in \citep{agarwal2010optimal}.


\section{Application to operations management problems}\label{sec:om}

In this section we show how the pairwise comparison oracle discussed in this paper is common in the solution to important operations management questions
with demand learning.
As a clarification, all algorithms and analysis in this section are \emph{not} new;
however, reformulating the existing results into pairwise comparison oracles allows us to use the new algorithms and analysis derived in this paper
to yield new or improved results for these classical learning-while-doing problems in operations management.

\subsection{Example: joint pricing and inventory control with censored demand}\label{subsec:om-inventory}
\subsubsection{Problem formulation.} This example primarily follows \citep[Sec.~EC.6]{chen2023optimal}.
Consider a multi-period joint pricing and inventory replenishment problem for a firm with $\sd$ different types of products.
At the beginning of a time period $t\in[T]$, the firm makes two sets of decisions: the \emph{price decision} $\vct p_t=(p_{t1},\cdots,p_{t\sd})\in[\underline p,\overline p]^{\sd}$
and the \emph{inventory decision} $\vct y_t=\{y_{t1},\cdots,y_{t\sd}\}\in\mathbb R_+^{\sd}$ consisting of inventory order-up-to levels for each product type.
Given $\vct p$, the demands for all products are modeled as $\vct D_t=\vct\lambda(\vct p_t)+\vct\varepsilon_t$ where $\vct\lambda=(\lambda_1(\cdot),\cdots,\lambda_\sd(\cdot))$
are $\sd$ demand curves and $\vct\varepsilon_t=(\varepsilon_{t1},\cdots,\varepsilon_{t\sd})$ are centered noise random variables such that $\vct\varepsilon_1,\cdots,\vct\varepsilon_T\overset{i.i.d.}{\sim}\mu$
for some unknown joint distribution $\mu$ on $\mathbb R^{\sd}$. The firm does \emph{not} know either $\vct\lambda$ or $\mu$.
The instantaneous rewards and inventory level transitions are then the same as in Eq.~(\ref{eq:pricing-inventory-r}) for each product type.

At each time period $t$, the firm observes censored demand $\vct o_t=\max\{\vct y_t-\vct D_t,\vct 0\}$, where the max operator is applied in an element-wise fashion.
With full information of $\mu,\vct\lambda$ and under the asymptotic regime of $T\to\infty$, the near-optimal solution is myopic which solves
\begin{align*}
\vct p^*,\vct y^*=\arg\max_{\vct p,\vct y}&Q(\vct p,\vct y)=\arg\max_{\vct p,\vct y}\sum_{j=1}^{\sd}\left\{p_j\mathbb E_{\mu_j}[\min\{\lambda_j(\vct p)+\varepsilon_j,y_j\}] - b_j\mathbb E_{\mu_j}[(\lambda_j(\vct p)+\varepsilon_j-y_j)^+]\right.\\
&\left.-h_j\mathbb E_{\mu_j}[(y_j-\lambda_j(\vct p)-\varepsilon_j)^+]\right\},
\end{align*}
where $\mu_j$ is the marginal distribution of $\varepsilon_j$ induced by $\mu$ and $(\cdot)^+=\max\{0,\cdot\}$.
The question is then to carry out bandit optimization of $Q(\cdot,\cdot)$ in $2\sd$-dimension with pairwise comparison oracle, to be introduced in the next section.

\subsubsection{Pairwise comparison oracle.}

While $Q(\vct p,\vct y)$ or an unbiased estimate of it cannot be directly accessed due to demand censoring, it is possible to obtain, when given a \emph{pair} of 
price vectors $\vct p,\vct p'\in[\underline p,\overline p]^{\sd}$, an unbiased estimate of 
$$
G(\vct p')-G(\vct p),
$$
where $G(\vct p) := \max_{\vct y\in\mathbb R_+^\sd}Q(\vct p,\vct y)$.
Such a pairwise comparison oracle is constructed in \cite[Algorithm 8]{chen2023optimal}, with the following guarantee summarizing \citep[Lemma EC.6.2]{chen2023optimal}
adapted to the notation of this paper:
\begin{lemma}
Given $\vct p,\vct p'\in[\underline p,\overline p]^{\sd}$ and $n$ time periods, \citep[Algorithm 8]{chen2023optimal} is a pairwise comparison oracle that outputs $\hat\Delta\sim\mathcal O(n,G(\vct p),G(\vct p'))$ that satisfies with probability $1-\tilde O(T^{-2})$ that
$$
\big|\hat\Delta - (G(\vct p')-G(\vct p))\big| \leq O\left(\sd \sqrt{\frac{\ln(\sd nT)}{n}}\right).
$$
\label{lem:pairwise-comparison-inventory}
\end{lemma}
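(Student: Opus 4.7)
The plan is essentially to transcribe Lemma EC.6.2 of \cite{chen2023optimal} into the pairwise-comparison-oracle language of Definition \ref{defn:oracle}. The construction itself (Algorithm 8 in \cite{chen2023optimal}) is not new; what is needed is to verify that its concentration guarantee, when adapted to the notation of the present paper, delivers precisely the bound $|\hat\Delta-(G(\vct p')-G(\vct p))|\leq O(\sd\sqrt{\ln(\sd nT)/n})$ claimed.

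At a high level, the oracle is built by devoting $n/2$ periods to price $\vct p$ and $n/2$ periods to price $\vct p'$. At each price, the inventory decisions $\vct y_t$ are chosen according to the inflating/cyclic-probing scheme of \cite{chen2023optimal} so that, even though lost sales are unobservable, the censored observations $\vct o_t=\max\{\vct y_t-\vct D_t,\vct 0\}$ determine enough of the marginal distribution of $\varepsilon_j+\lambda_j(\vct p)$ at each probed quantile to furnish an estimator $\hat G(\vct p)$ of $G(\vct p)=\max_{\vct y}Q(\vct p,\vct y)$. The key observation that makes this possible is that, via integration by parts (or the envelope theorem applied at the optimal $y^*_j(\vct p)$), $G(\vct p)$ can be written in terms of stockout/overage probabilities that are functionals of the censored distribution, hence estimable without direct access to lost sales. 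The oracle then outputs $\hat\Delta=\hat G(\vct p')-\hat G(\vct p)$.

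The core of the proof is concentration: each of $\hat G(\vct p)$ and $\hat G(\vct p')$ concentrates around its mean at rate $\tilde O(\sqrt{\ln(nT)/n})$ per product type, and a union bound across the $\sd$ product types introduces the stated factor of $\sd$ in front. The main obstacle, and the reason \cite{chen2023optimal} requires a carefully designed sub-routine, is that the inventory probing must be done in a way that neither biases the estimator nor inflates variance by more than a constant factor; this is handled in their proof by bounding the gap between the discretized probing grid and the true optimizer $\vct y^*(\vct p)$, a gap that contributes lower-order terms absorbed into the $O(\sd\sqrt{\ln(\sd nT)/n})$ rate. With these three ingredients assembled, the oracle inherits $\gamma_1,\gamma_2$-consistency with $\gamma_1,\gamma_2=\tilde O(\sd^2)$, completing the proof by direct invocation of \cite[Lemma EC.6.2]{chen2023optimal}.
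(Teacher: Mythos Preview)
Your proposal is correct and matches the paper's approach exactly: the paper does not prove this lemma at all but simply states it as a restatement of \cite[Lemma EC.6.2]{chen2023optimal} adapted to the present notation, with the remark that detailed dependency terms and regularity conditions are deferred to that reference. Your added sketch of the oracle construction and concentration argument is more detail than the paper itself provides, but the substance---direct invocation of the cited external result---is identical.
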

\begin{remark}
In the $O(\cdot)$ notation in Lemma \ref{lem:pairwise-comparison-inventory} we drop dependency on other parameters in regularity conditions.
More detailed dependency terms, including a list of regularity conditions, can be found in \citep{chen2023optimal} and its online supplementary material.
\end{remark}

\subsubsection{Improved regret analysis.} One major improvement, which is a direct consequence of the algorithm and analysis of this paper,
is to incorporate the pairwise comparison oracle in Lemma \ref{lem:pairwise-comparison-inventory} into Algorithm \ref{alg:main-tournament} to deliver an improved regret bound
when the partially optimized objective function $G(\cdot)$ is smoother. More specifically, we have the following result:
\begin{corollary}
Suppose $G\in\Sigma_\sd(k,M)$ for $k\in\mathbb N$, $k\geq 2$. Algorithm \ref{alg:main-tournament} instantiated with pairwise comparison oracle satisfying Lemma \ref{lem:pairwise-comparison-inventory}
has cumulative regret upper bounded by 
$$
O\left((M^2\nu^2+\nu^3+M\nu^2\sqrt{\sd})\times T^{\frac{k+\sd}{2k+\sd}}\ln^5 T\right),
$$
where $\nu=\binom{k+\sd-1}{\sd}$.
\label{cor:om-inventory}
\end{corollary}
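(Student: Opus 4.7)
The plan is to deduce Corollary \ref{cor:om-inventory} as a direct consequence of Theorem \ref{thm:regret-smooth-functions}, with the oracle parameters specialized to those coming from Lemma \ref{lem:pairwise-comparison-inventory}. The argument proceeds in three steps.

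First, I would recast the oracle constructed in Lemma \ref{lem:pairwise-comparison-inventory} into the $(\gamma_1,\gamma_2)$-consistent form required by Definition \ref{defn:oracle}. The lemma gives an error bound of $O(\sd\sqrt{\ln(\sd nT)/n})$ holding with probability $1-\tilde O(T^{-2})$. Identifying the failure probability with $\delta=\tilde O(T^{-2})$ so that $\ln(1/\delta)=O(\ln T)$, and using $n\leq T$ to absorb the $\ln n$ factor into a constant, the squared error bound can be written in the form $(\gamma_1+\gamma_2\ln(1/\delta))/n$ with $\gamma_1$ and $\gamma_2$ depending polynomially on $\sd$ and polylogarithmically on $T$.

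Second, I would apply Theorem \ref{thm:regret-smooth-functions} to the partially optimized objective $G\in\Sigma_\sd(k,M)$, treating $\sd$ as the ambient dimension. This yields a cumulative regret bound of the form
\[
\overline C_1\times\left(M^2\nu^2+\nu^3+M\nu^2\sqrt{\gamma_1+\gamma_2\ln T}\right)\times T^{(k+\sd)/(2k+\sd)}\ln^4 T,
\]
with $\nu=\binom{k+\sd-1}{\sd}$. Substituting the calibrated $(\gamma_1,\gamma_2)$ then produces the $M\nu^2\sqrt{\sd}$ term in the corollary, with one additional logarithmic factor absorbed into the $\ln^5 T$ appearing in the statement. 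I would also note that the regret on $G$ translates directly into regret for the joint pricing and inventory problem: the oracle of \citep[Algorithm~8]{chen2023optimal} already internalizes the maximization over order-up-to levels $\vct y$ for a given price vector $\vct p$, so the pair of price decisions used in each round of Algorithm \ref{alg:main-tournament} induces near-optimal joint pricing and replenishment decisions under censored demand.

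The main obstacle will be the careful bookkeeping of factors of $\sd$ and of logarithms when translating the raw error bound of Lemma \ref{lem:pairwise-comparison-inventory} into the $(\gamma_1,\gamma_2)$ parameters of Definition \ref{defn:oracle}. In particular, one has to track which powers of $\sd$ sit inside the square-root term $\sqrt{\gamma_1+\gamma_2\ln T}$ appearing in Theorem \ref{thm:regret-smooth-functions}, since this determines whether the final regret exhibits $\sqrt{\sd}$ or $\sd$ dependence in the third additive term. Beyond this bookkeeping the argument is purely a specialization: no new probabilistic inequalities or algorithmic analyses are needed, and the $T^{(k+\sd)/(2k+\sd)}$ scaling is inherited verbatim from Theorem \ref{thm:regret-smooth-functions}.
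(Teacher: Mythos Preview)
Your proposal is correct and matches the paper's treatment: the paper presents Corollary~\ref{cor:om-inventory} as an immediate specialization of Theorem~\ref{thm:regret-smooth-functions} with $d=\sd$ and the oracle parameters read off from Lemma~\ref{lem:pairwise-comparison-inventory}, and gives no further argument. Your three-step outline (calibrate $(\gamma_1,\gamma_2)$, invoke the theorem on $G$, collect the extra log factor) is exactly that specialization.

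One remark on the bookkeeping you flagged: the oracle bound in Lemma~\ref{lem:pairwise-comparison-inventory} is $O(\sd\sqrt{\ln(\sd nT)/n})$, so after squaring you obtain $\gamma_1+\gamma_2\ln T$ of order $\sd^2\ln T$, and hence $\sqrt{\gamma_1+\gamma_2\ln T}=O(\sd\sqrt{\ln T})$ rather than $O(\sqrt{\sd})$. The $\sqrt{\sd}$ factor in the corollary's third additive term therefore appears to be a minor inconsistency in the paper's stated constant (it should read $\sd$), not a defect in your approach; your plan will simply surface this when you carry out the substitution.
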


Compared with the $\tilde O(T^{\frac{2+\sd}{4+\sd}})$ regret upper bound obtained in \citep{chen2023optimal}, Corollary \ref{cor:om-inventory} improves the existing bound for $k>2$. 
This is because for smoother functions our Algorithm \ref{alg:main-tournament} can better exploit the additional smoothness in contrast to the simplified tournament elimination approach
adopted in \citep{chen2023optimal}.
See also Table \ref{tab:related-works-inventory} for comparisons against additional existing results.

\subsection{Example: network revenue management}\label{subsec:om-nrm}

\subsubsection{Problem formulation.} This example follows \citep{miao2021network} and, to a lesser extent, the setting studied in \citep{chen2019network}.
Consider multi-period pricing of $\sd$ types of products using $\sd'$ types of resources over $T$ time periods with non-replenishable initial inventory levels of the resources
which also scale linearly with $T$. The demand curve $\mathbb E[d_t|p_t]$ which is an $\mathbb R^\sd\to\mathbb R^\sd$ map is unknown, non-parametric and must be learnt
on the fly. Regret is measured against the fluid approximation with the expected revenue as objective and subject to linear feasibility constraints on demand rates.
More specifically, on the demand domain, the fluid approximation is defined as the following constrained optimization problem:
\begin{align}
\lambda^* = \arg\max_{\lambda\in\mathbb R_{++}^\sd} r(\lambda) \;\;\;\;\;\;s.t.\;\; A\lambda\leq \gamma,
\label{eq:nrm-fluid}
\end{align}
where $A\in\mathbb R_{++}^{\sd'\times\sd}$ is the known resource consumption matrix, $\gamma\in\mathbb R_+^{\sd'}$ is a known vector of \emph{normalized}
inventory levels for all resource types, and $r(\lambda)=\langle\lambda, D^{-1}(\lambda)\rangle$ is the expected revenue as a function of the demand rate vector $\lambda$,
where $D^{-1}$ is the inverse function of the demand curve $D(p)=\mathbb E[y|d]$ mapping from $\mathbb R^\sd$ to $\mathbb R^\sd$.

It is a common assumption in the literature that $r$ is both smooth and strongly concave, essentially implying $r\in\Sigma_\sd(2,M)\cap\Sigma_\sd(\sigma)$ for some $M<\infty$ and $\sigma>0$.
This together with the convexity of the feasible region (which is actually a polytope) makes Algorithm \ref{alg:acc-pgd} and Theorem \ref{thm:strongly-concave}
good candidates for obtaining results under this setting of the problem.

\subsubsection{Pairwise comparison oracle.}
While Eq.~(\ref{eq:nrm-fluid}) is a concave maximization problem in $\lambda$, in the bandit setting where $D(\cdot)=\mathbb E[y|\cdot]$ (and therefore $D^{-1}$) is unknown
it is difficult to directly optimize it because it is unknown which price vector $p$ is leading to a target demand rate vector $\lambda$.
This means that if we use optimization methods to obtain a sequence of $\lambda$, it is difficult to either evaluate $r(\lambda)$ or check whether the solution $\lambda$ is feasible
because both involve the unknown demand curve $D$ or its inverse $D^{-1}$.

In \citet{miao2021network} a two-step procedure is proposed which can serve as a pairwise comparison oracle as defined in this paper.
More specifically, the algorithm first uses inverse batch gradient descent type methods similar to the one designed in \citep{chen2019network} to 
find a price that converges to a neighborhood of $D^{-1}(\lambda)$. Afterwards, the algorithm uses Jacobian iteration steps to converge to $D^{-1}(\lambda)$ at a much faster pace.
The following lemma establishes the theoretical property of the two-step procedure in \citep{miao2021network}, which is adapted from \citep[Lemma 3, property 3]{miao2021network}.
\begin{lemma}
Given $\lambda,\lambda'\in\mathbb R_{++}^\sd$ satisfying $A\lambda\leq\gamma$, $A\lambda'\leq\gamma$ and $n$ time periods, \citep[Algorithm 2]{miao2021network} applied to 
$\lambda,\lambda'$ separately is a pairwise-comparison oracle that outputs $\hat\Delta\sim\mathcal O(n,r(\lambda),r(\lambda'))$ that satisfies with probability $1-\tilde O(T^{-2})$ that
\begin{align*}
\big|\hat\Delta-(r(\lambda')-r(\lambda))\big| \leq O\left(\frac{\sd^{2.5}\ln(\sd T)}{\sqrt{n}} + \frac{\sd^5\ln^3(\sd T)}{n}\right),
\end{align*}
where in the $O(\cdot)$ notation we omit polynomial dependency on other problem parameters.
\label{lem:pairwise-comparison-nrm}
\end{lemma}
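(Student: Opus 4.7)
The plan is to construct the comparison oracle by running the two-step procedure of Miao et al. independently on the two target demand-rate vectors $\lambda$ and $\lambda'$. Specifically, I would split the $n$ time periods into two halves of size $\lfloor n/2\rfloor$ each, devote one half to invoking \citep[Algorithm~2]{miao2021network} with target $\lambda$ and the other half with target $\lambda'$, and record the corresponding realized-revenue estimates $\hat r(\lambda)$ and $\hat r(\lambda')$; the oracle output is then defined as $\hat\Delta := \hat r(\lambda') - \hat r(\lambda)$.

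The core of the argument is to invoke \citep[Lemma~3, property~3]{miao2021network}, which certifies that the revenue estimate produced by their Algorithm~2 on a single target $\lambda$ with $m$ periods satisfies, with probability $1 - \tilde O(T^{-2})$,
$$
\big|\hat r(\lambda) - r(\lambda)\big| \;\leq\; O\!\left(\frac{\sd^{2.5}\ln(\sd T)}{\sqrt{m}} + \frac{\sd^{5}\ln^{3}(\sd T)}{m}\right).
$$
Here the $\sd^{2.5}/\sqrt{m}$ term is the CLT-type error coming from the inverse-gradient-descent warm-up stage used to locate a neighborhood of the unknown price $D^{-1}(\lambda)$, while the $\sd^{5}/m$ term reflects the faster, quadratic contraction of the subsequent Jacobian-iteration refinement stage. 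Applying this bound with $m = \lfloor n/2\rfloor$ to each of the two invocations, taking a union bound over the two $\tilde O(T^{-2})$ failure events, and using the triangle inequality
$$
\big|\hat\Delta - (r(\lambda')-r(\lambda))\big| \;\leq\; \big|\hat r(\lambda')-r(\lambda')\big| + \big|\hat r(\lambda)-r(\lambda)\big|
$$
immediately yield the stated rate after absorbing constants.

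The main obstacle is not the statistical analysis, which reduces directly to the single-target guarantee of Miao et al., but rather verifying the regularity preconditions of that guarantee under our formulation: both targets $\lambda,\lambda'$ must lie strictly inside the feasible polytope $\{A\lambda\leq\gamma\}$ (supplied by assumption), and the inverse demand map $D^{-1}$ must be sufficiently smooth and strongly monotone on a neighborhood containing both targets. These preconditions follow from the assumption $r\in\Sigma_{\sd}(2,M)\cap\Gamma_{\sd}(\sigma)$ together with standard manipulations relating the Hessian of $r(\lambda)=\langle\lambda,D^{-1}(\lambda)\rangle$ to the Jacobian of $D$ (in particular, strong concavity of $r$ translates into uniform invertibility and Lipschitzness of $\nabla D^{-1}$, which are exactly what is needed for the Jacobian-iteration stage to contract at a quadratic rate). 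Once these preconditions are checked, the remainder is bookkeeping of constants and observing that the two error terms of the single-target bound combine additively without any blow-up in the exponents of $\sd$ or $T$.
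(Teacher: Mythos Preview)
Your proposal is correct and matches the paper's treatment: the paper does not actually prove this lemma but simply states it as ``adapted from \citep[Lemma 3, property 3]{miao2021network}'' (and explicitly notes at the start of Sec.~\ref{sec:om} that all algorithms and analysis in that section are not new). Your write-up---split the $n$ periods between the two targets, invoke the single-target guarantee of Miao et al.\ on each half, then combine via triangle inequality and union bound---is exactly the intended reduction, and in fact spells out the construction more explicitly than the paper itself does.
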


\subsubsection{Improve regret analysis.} Incorporating the pairwise comparison oracle in Lemma \ref{lem:pairwise-comparison-nrm} into Algorithm \ref{alg:acc-pgd} for $\Sigma_\sd(2,M)\cap \Gamma_\sd(\sigma)$
objective functions, we obtain the following result which improves the result in \citep{miao2021network} without imposing additional assumptions or conditions:
\begin{corollary}
Suppose $r\in\Sigma_\sd(2,M)\cap\Gamma_\sd(\sigma)$. Algorithm \ref{alg:acc-pgd} instantiated with pairwise comparison oracle satisfying Lemma \ref{lem:pairwise-comparison-nrm} has cumulative regret upper bounded by 
$$
O\left(M\sd^2  + \frac{M^5 \sd^{3.5}\ln^{1.5}(\sd T)}{\sigma^4}+ \frac{M^5 \sd^{2.25}\sqrt{ T\ln(\sd T)}}{\sigma^4}\right),
$$
where in the $O(\cdot)$ notation we omit polynomial dependency on other problem parameters.
\label{cor:om-nrm}
\end{corollary}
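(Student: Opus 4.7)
The plan is to derive Corollary \ref{cor:om-nrm} by instantiating Theorem \ref{thm:strongly-concave} for the NRM fluid objective $r(\lambda)=\langle\lambda,D^{-1}(\lambda)\rangle$, with the pairwise comparison oracle supplied by Lemma \ref{lem:pairwise-comparison-nrm} and the feasibility penalty supplied by Example \ref{exmp:nrm}. Because the hypothesis $r\in\Sigma_\sd(2,M)\cap\Gamma_\sd(\sigma)$ coincides exactly with the domain of Theorem \ref{thm:strongly-concave}, the strongly concave machinery of Algorithm \ref{alg:acc-pgd} applies once these two ingredients are in place.

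The first step is to recast Lemma \ref{lem:pairwise-comparison-nrm} in the $(\gamma_1,\gamma_2)$-consistent form of Definition \ref{defn:oracle}. Taking $\delta=T^{-2}$, the lemma's $O(\sd^{2.5}\ln(\sd T)/\sqrt{n})$ stochastic term fits the $\sqrt{(\gamma_1+\gamma_2\ln(1/\delta))/n}$ template with $\gamma_1+\gamma_2\ln T=O(\sd^5\ln^2(\sd T))$, so that $\sqrt{\gamma_1+2\gamma_2\ln T}=O(\sd^{2.5}\sqrt{\ln(\sd T)})$. The second, $O(\sd^5\ln^3(\sd T)/n)$, term lies outside Definition \ref{defn:oracle} but is strictly subdominant once $n\gtrsim \sd^5\ln^4(\sd T)$. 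I would absorb its contribution by exploiting the geometric growth $\beta_\tau=\lceil(1+\sigma/M)^\tau\rceil$ in Algorithm \ref{alg:acc-pgd}: the union of iterations with $\beta_\tau$ below the threshold consumes at most $\tilde O(\sd^5)$ periods, contributing only a lower-order term that is compatible with the $M^5\sd^{3.5}\ln^{1.5}(\sd T)/\sigma^4$ summand in the corollary.

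The second step is routine: verify that the NRM feasible set $\{\lambda:A\lambda\leq\gamma\}$ instantiates the knapsack framework. Example \ref{exmp:nrm} already exhibits a convex Lipschitz penalty $\psi$ with $L_\psi=\bar p\sd A_{\min}^{-1}\max_j\|A_{j\cdot}\|_1$ satisfying Assumption \ref{asmp:knapsack}, and the standard NRM regularity implies Assumption \ref{asmp:interior} at $\lambda^*$, so the regret notion in Eq.~\eqref{eq:defn-regret} matches the blind NRM regret of \citep{miao2021network}.

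The main obstacle I anticipate is recovering the sharpened $\sd^{2.25}$ exponent in the dominant $\sqrt T$ term. A mechanical plug-in of $d=\sd$ and $\sqrt{\gamma_1+2\gamma_2\ln T}=O(\sd^{2.5}\sqrt{\ln(\sd T)})$ into Theorem \ref{thm:strongly-concave} produces a leading factor of $\sd\cdot\sd^{2.5}\sqrt{\ln(\sd T)}=\sd^{3.5}\sqrt{\ln(\sd T)}$, which only reproduces the MW21 state of the art. To close the $\sd^{1.25}$ gap I would re-open the inexact-gradient step in the proof of Theorem \ref{thm:strongly-concave}: in Algorithm \ref{alg:acc-pgd} the oracle is invoked on coordinate perturbations $\pm h_\tau e_j$, so the stochastic errors in $\hat g_\tau(1),\ldots,\hat g_\tau(\sd)$ are conditionally independent and aggregate in $\ell_2$ rather than summing linearly as in a worst-case $\ell_1$ bound. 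Combining this independence with a step size $h_\tau$ that explicitly balances the oracle's $1/\sqrt n$ noise against the $M h_\tau$ Taylor-bias in the central difference, and feeding the resulting sharper bound on $\|\hat g_\tau-\nabla r(x_\tau)\|_2$ into the strongly-concave descent potential, is where I expect to save the factor of $\sd^{5/4}$. Formalizing that refined book-keeping is the delicate part of the argument; the remaining steps---telescoping the potential, summing the epoch contributions, and adding back the lower-order $M\sd^2$ initialization term---are routine.
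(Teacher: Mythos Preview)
Your reading of the paper is accurate: the paper offers no separate proof of Corollary~\ref{cor:om-nrm} beyond the sentence ``incorporating the pairwise comparison oracle in Lemma~\ref{lem:pairwise-comparison-nrm} into Algorithm~\ref{alg:acc-pgd}\ldots'', so the intended argument is indeed a direct substitution into Theorem~\ref{thm:strongly-concave}. You carry out that substitution correctly and arrive at the $\sd^{3.5}$ coefficient in the $\sqrt{T}$ term. That is what the stated ingredients actually give.

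The gap you then try to close is not closable by the mechanism you propose. First, a small arithmetic point: from Lemma~\ref{lem:pairwise-comparison-nrm} the coefficient of $n^{-1/2}$ is $\sd^{2.5}\ln(\sd T)$, so matching Definition~\ref{defn:oracle} forces $\gamma_1+2\gamma_2\ln T=O(\sd^{5}\ln^{2}(\sd T))$ and hence $\sqrt{\gamma_1+2\gamma_2\ln T}=O(\sd^{2.5}\ln(\sd T))$, not $O(\sd^{2.5}\sqrt{\ln(\sd T)})$. More importantly, your ``conditional independence'' refinement cannot recover the missing $\sd^{1.25}$: the bound in Lemma~\ref{lem:est-g} already aggregates the coordinate errors in $\ell_2$ (the $\sqrt{d}$ factor there is precisely $\|\cdot\|_2\le\sqrt{d}\|\cdot\|_\infty$, not an $\ell_1$ worst case), and the oracle of Definition~\ref{defn:oracle} is explicitly \emph{biased}, so the per-coordinate deviations need not be centered and no concentration across coordinates is available. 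Even under unbiasedness you would shave at most $\sqrt{\sd}$, not $\sd^{1.25}$.

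If you trace where the paper's $\sd^{2.25}$ comes from, it is exactly what Theorem~\ref{thm:strongly-concave} yields when one substitutes $\gamma_1+2\gamma_2\ln T=O(\sd^{2.5}\ln(\sd T))$ rather than its square $O(\sd^{5}\ln^{2}(\sd T))$; the same square-root slip is visible in Corollary~\ref{cor:om-inventory}, where $\sqrt{\gamma_1+\gamma_2\ln T}$ is rendered as $\sqrt{\sd}$ instead of $\sd\sqrt{\ln(\sd T)}$. In short, your plug-in is the paper's intended proof and your $\sd^{3.5}$ conclusion is the one that actually follows from Theorem~\ref{thm:strongly-concave} and Lemma~\ref{lem:pairwise-comparison-nrm} as stated; the sharper $\sd^{2.25}$ exponent does not emerge from any argument present in the paper, and your independence-based repair does not supply one either.
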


For large time horizon $T$, the dominating term in Corollary \ref{cor:om-nrm} is $O(\sd^{2.25}\sqrt{T\ln(\sd T)})$, which significantly improves the $O(\sd^{3.5}\sqrt{T}\ln^6(\sd T))$
regret upper bound in \citep[Theorem 3]{miao2021network}.
See also Table \ref{tab:related-works-nrm} for comparisons against additional existing results.

\section{Numerical Results}\label{sec:Exp}

In this section, we provide three different streams of experimental studies to demonstrate the effectiveness of the proposed pairwise comparison algorithms.
We report numerical results on both synthetic objective functions, and also synthetic problem instances arising from inventory management problems.

\subsection{Bandit Optimization with Smooth Objective Functions}\label{subsec: Exp_bandit_opt_smooth}

To mimic the scenario considered in this paper, we conduct two simple bandit optimization problems with smooth objective functions to evaluate the proposed Algorithm 3's finite sample performance. 
We construct the following objective functions:

\begin{equation}
f_1(\mathbf{x}) := 
\begin{cases} 
(1 - \|\mathbf{x}\|_2)^4 (4\|\mathbf{x}\|_2 + 1), & 0 \leq \|\mathbf{x}\|_2 \leq 1, \, \mathbf{x} \in \mathbb{R}^3, \\
0, & \|\mathbf{x}\|_2 > 1.
\end{cases}
\end{equation}


 \[
f_2(\mathbf{x}) :=
\begin{cases} 
(1 - \|\mathbf{x}\|_2)^6 (35\|\mathbf{x}\|_2^2 + 18\|\mathbf{x}\|_2 + 3), & 0 \leq \|\mathbf{x}\|_2 \leq 1, \, x \in \mathbb{R}^3, \\
0, & \|\mathbf{x}\|_2 > 1.
\end{cases}
\]

We follow conventional settings of noises in continuum-armed bandit problems.
That is, for $j\in\{1,2\}$ and every $t\in[T]$, an optimization algorithm takes an action $\mathbf{x}_t\in[0,1]^d$ and observes $z_t'-z_t$ where $z_t=f_j(\mathbf{x}_t)+\epsilon_t$, 
$z_t'=f_j(\mathbf{x}_t')+\epsilon_t'$, with $\epsilon_t,\epsilon_t'\overset{i.i.d.}{\sim}\mathrm{Uniform}([-0.1,0.1])$.
$f_1$ and $f_2$ are the so-called \textit{Wendland functions} that satisfy $f_1\in\Sigma_3(2,M_1)$ and $f_2\in\Sigma_3(4,M_2)$ respectively \citep{chernih2014wendland}.
We then apply Algorithm \ref{alg:main-tournament} to both functions with smoothness levels $k=2,3,4$ for $f_1$ and $k=4,5,6$ for $f_2$.

Throughout our numerical experiments we report the percentage of relative regret, defined as
\begin{equation}
    \frac{[2T\cdot f_j^*-\sum_{t=1}^T(f_j(\mathbf{x}_t)+f_j(\mathbf{x}'_t))]}{2T\cdot f_j^*}\times 100\%,\ j=1,2
\end{equation}
where $f^*_j=\max_{\mathbf{x}\in[0,1]^d}f_j(\mathbf{x})$.
Then, the percentage of relative regret is utilized to evaluate the effectiveness of the proposed pairwise comparison algorithms.
Other parameters for implementing Algorithm 3 are shown in  Appendix \ref{app:parameter_setting}.

\begin{figure}[!t]{}
\centering
{\includegraphics[scale =0.5]{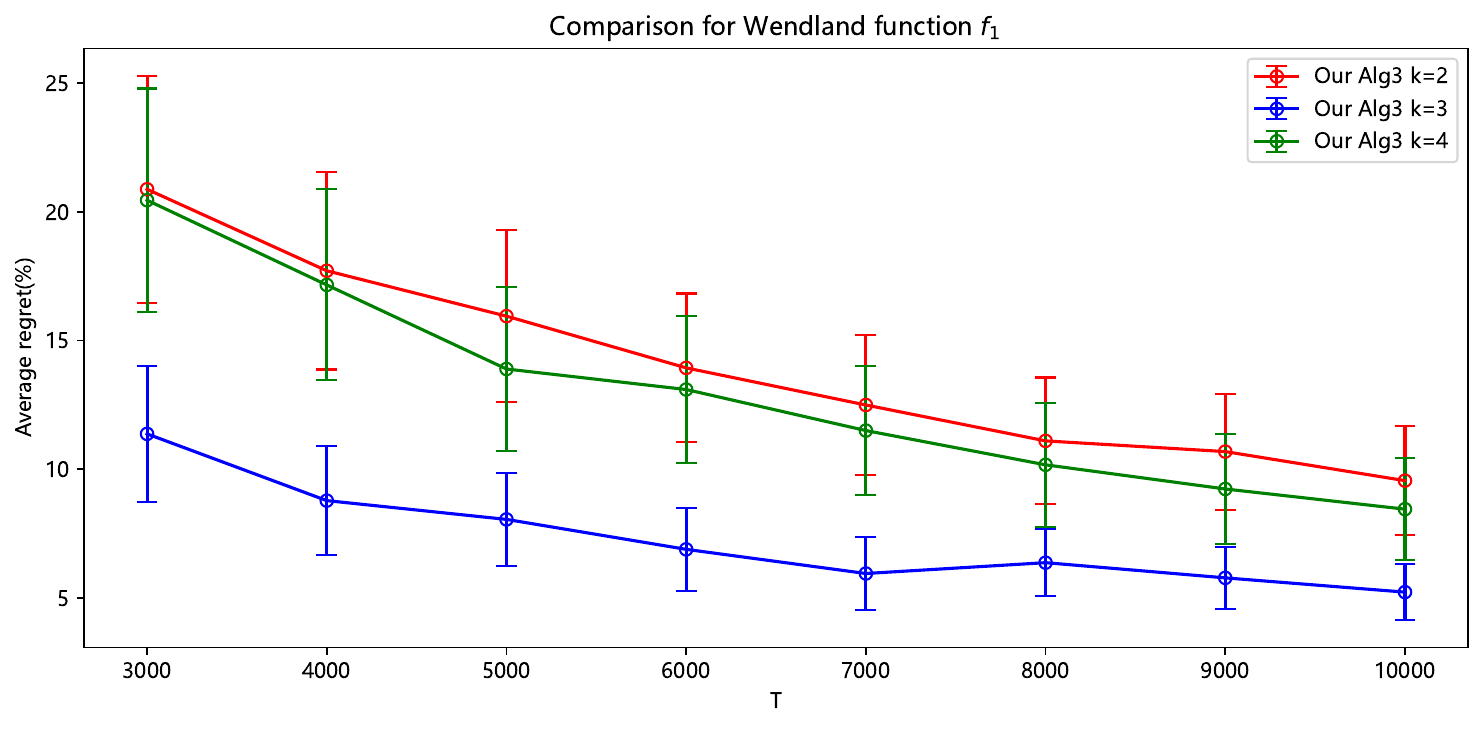}}
\caption{Average regret of $f_1$}\label{fig:f1}
\end{figure}

\begin{figure}[!t]{}
\centering
{\includegraphics[scale =0.5]{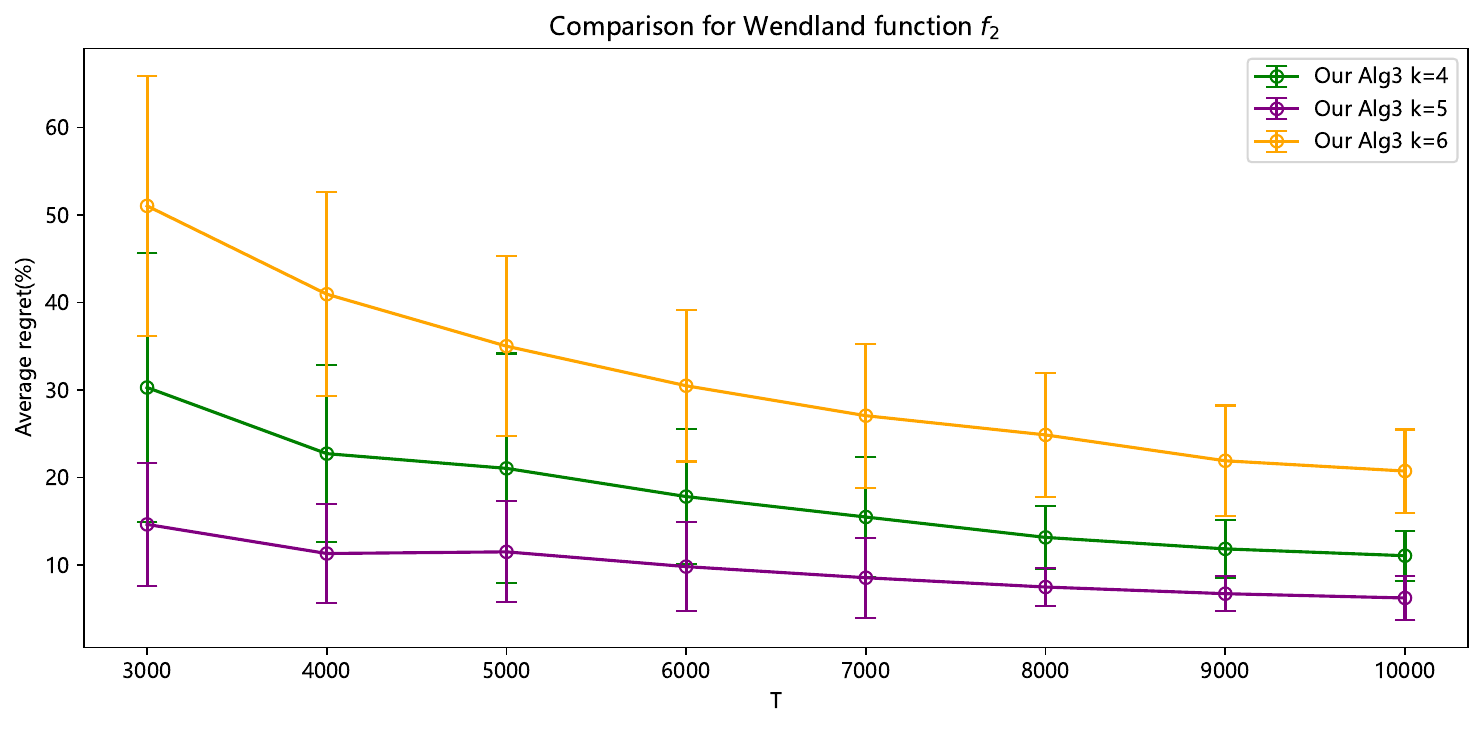}}
\caption{Average regret of $f_2$}\label{fig:f2}
\end{figure}

Figures \ref{fig:f1} and \ref{fig:f2} depict the simulation results of the proposed Algorithm 3 with different smooth levels, with 50 runs, respectively.
From Figure \ref{fig:f1} and \ref{fig:f2}, we can find that as $T\rightarrow\infty$, all the average regrets of Algorithm 3 decrease dramatically.
An interesting result is that the average regret of $f_1$ with the input smooth level $k=3$ is always smaller than others at the same periods.
The reason may be the selected smooth level $k=3$, which is the same as the smoothness of $f_1$.
The same phenomenon can also be observed for the average regret of $f_2$ with the input smooth level $k=5$.
All the above results are consistent with our proposed theorems, which provide more evidence to demonstrate the correctness and effectiveness of the proposed pairwise comparison algorithms.

\subsection{Bandit Optimization with Concave Objective Functions}\label{subsec: Exp_bandit_opt_concave}

Following almost the same setting as the above subsection, we conduct two simple bandit optimization problems with the concave objective functions to compare Algorithm 3 and 4's finite sample performance. 
We consider four objective functions:
\begin{itemize}
    \item $f_3(\mathbf{x})=-\frac{1}{2}\sum_{i=1}^dx_i+1$;
    \item $f_4(\mathbf{x})=-\frac{1}{2}\sum_{i=1}^d(x_i-\frac{1}{4})^2+1$;
\end{itemize}
Obviously, the objective functions $f_j,j=3,4$ are concave and smooth in infinite dimensions.
We set $d=2,3$ and {$k=2, 3, 4$} for this example, and Algorithms 3 and 4 both can apply to the two objective functions.



\begin{figure}[!t]{}
\centering
{\includegraphics[scale =0.45]{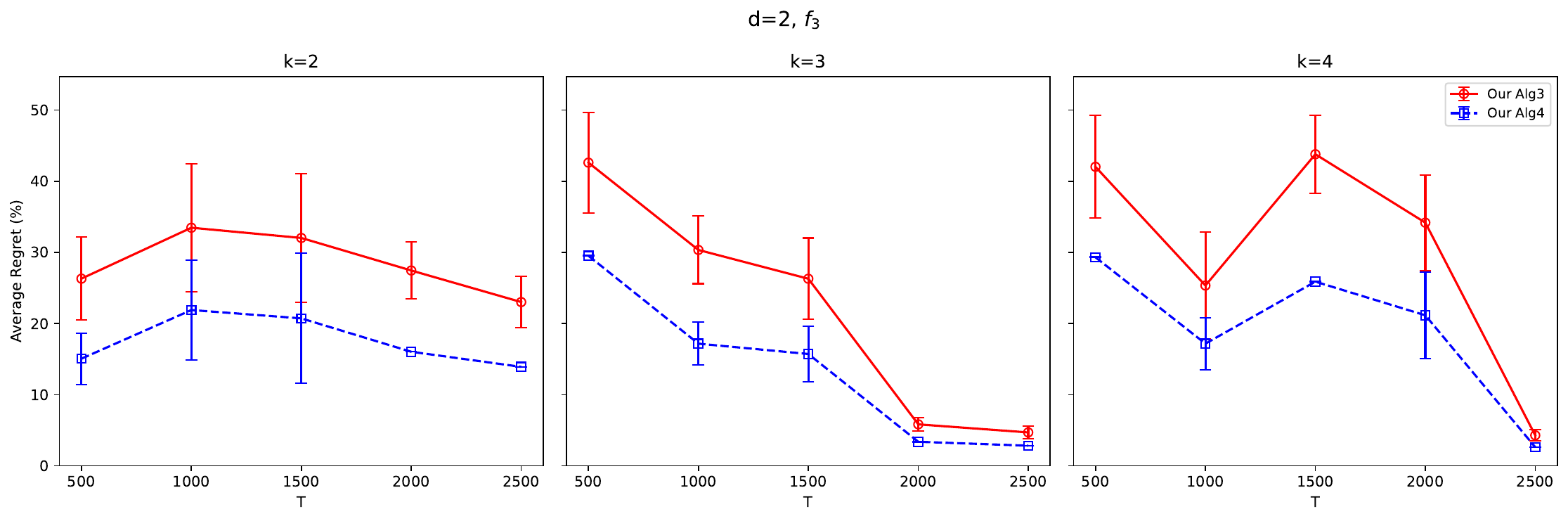}}
\caption{Average regrets of $f_3$ with $d=2$.}\label{fig:d2_f3}
\end{figure}

\begin{figure}[!t]{}
\centering
{\includegraphics[scale =0.45]{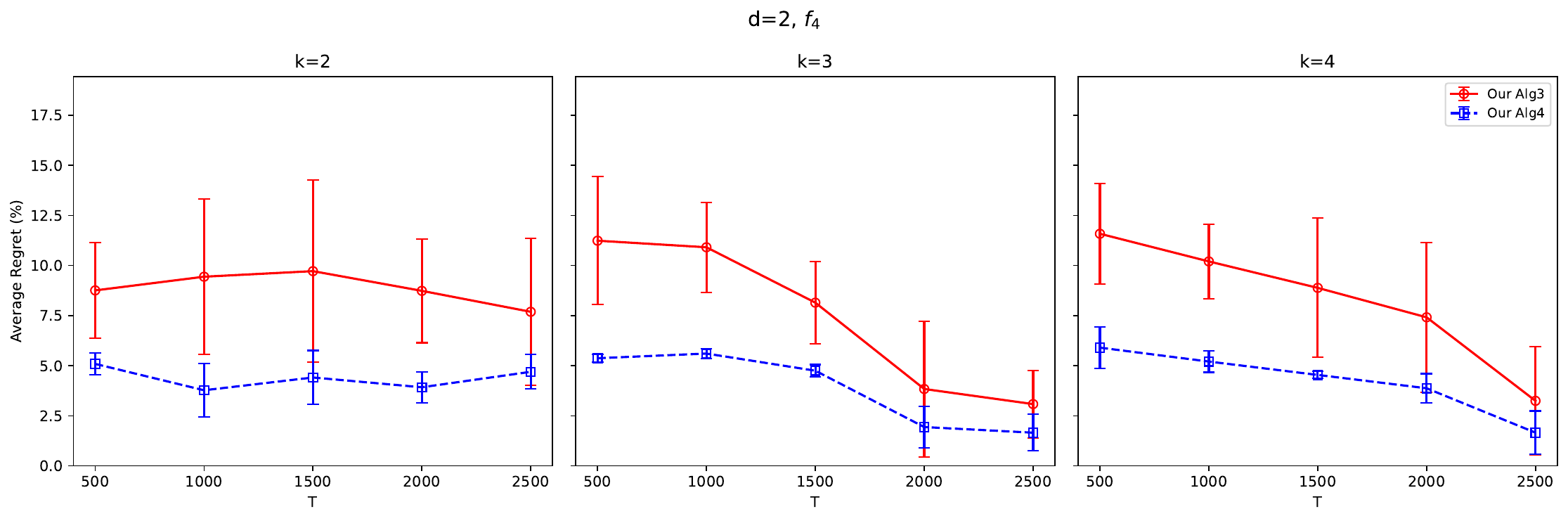}}
\caption{Average regrets of $f_4$ with $d=2$.}\label{fig:d2_f4}
\end{figure}


{Figures \ref{fig:d2_f3} and \ref{fig:d2_f4} show the simulation results of proposed Algorithms 3 and 4 for $f_3$ and $f_4$ with $d=2$, with 50-time runs, respectively.}
The results for $d=3$ can be found in Appendix \ref{app:exp2_results}.
According to the figures, we can find two interesting facts.
First, all the average regrets of Algorithms 3 and 4 decrease dramatically as $T\rightarrow\infty$.
Second, the average regrets of Algorithm 4 are always smaller than those of Algorithm 3 at the same periods and smoothness $k$.
The reason may be that Algorithm 4 is designed for concave objective functions.
Thus, the simulation example is well-suited to the setting of Algorithm 4.

\subsection{Example: Inventory Management with Censored Demand}\label{subsec:Exp_inven}
\subsubsection{Problem formulation}

Let us consider a firm selling one type of product over a time horizon of $T$ periods.
At $1\leq t\leq T$ period, the firm can observe the inventory level $x_t$ before replenishment and need to make a pricing decision $p_t$ and inventory order-up-to decision $y_t\geq x_t$ with zero ordering lead time and zero ordering cost.
Suppose that the unknown demand is influenced by the price $p_t$, namely, $d_t=\lambda(p_t)+\epsilon_t$, where $\lambda(\cdot)$ is a deterministic demand function and $\epsilon_t$ is a noise random variable.
We further assume the \textit{unsatisfied demands are lost and unobservable.}
Thus, the lost-sales quantity $\max\{0,d_t-y_t\}$ is not observable.
At the end of period $t$, the firm incurs a profit of 

\begin{equation}
    r(p_t,y_t) =\underbrace{p_t\min\{d_t,y_t\}}_\text{sales revenue}-\underbrace{b\max\{0,d_t-y_t\}}_\text{lost sales cost}-\underbrace{h\max\{0,y_t-d_t\}}_\text{holding cost},
\end{equation}
where $h$ and $b$ are the per-unit holding and lost-sales penalty costs, respectively.
Note that because the lost-sales cost $b\max\{0,d_t-y_t\}$ is not observable, neither is the realized profit $r(p_t,y_t)$.

It is well known that the optimal pricing and inventory replenishment policy with known $\lambda(\cdot)$ and noise distribution of $\epsilon$ has been investigated by \citet{sobel1981myopic} (called \textit{Clairvoyant optimal policy}), taking the form of 
\begin{equation}
    (p^*,y^*)=\arg\max_{p,y}R(p,y),
\end{equation}
where $R(p,y):=\mathbb{E}_{\epsilon}[r(p,y)]$.
Denote that $y^*(p):=\arg\max_yR(p,y)$ and $G(p):=\max_yR(p,y)=R(p,y^*(p))$.
Therefore, the performance of an admissible policy $\{(p_t,y_t),t\geq 1\}$ is measured by the cumulative regret compared against the Clairvoyant policy $(p^*,y^*)$, i.e.,
\begin{equation}
    \text{regret}=\mathbb{E}\Big[\sum_{t=1}^TR(p^*,y^*)-R(p_t,y_t)\Big].
\end{equation}

In the next part, we test Algorithm 3 under smooth and non-concave $G(\cdot)$ and Algorithm 4 under smooth and concave $G(\cdot)$, respectively, then compare the performance with corresponding algorithms proposed by \citet{chen2023optimal}.
For the noisy pairwise comparison oracle, we adopt the same designed and analyzed in \citep[Algorithm 2, Lemma 2]{chen2023optimal}.
The percentage of relative regret, defined as 
\begin{equation}
    \frac{(T\cdot R(p^*,y^*)-\sum_{t=1}^TR(p_t,y_t))}{T\cdot R(p^*,y^*)}\times 100\%,
\end{equation}
is adopted as the measurement, where $(p^*,y^*)$ is the Clairvoyant optimal reward.

\subsubsection{Settings under different types of $G$}

We test our algorithms using the same settings as in the numerical section of \cite{chen2021nonparametric,chen2023optimal}.
Because the concavity and non-concavity of the objective function $G(\cdot)$ have been conducted by \citet{chen2023optimal}.

To obtain the concave objective function $G(\cdot)$, we set
\begin{itemize}
    \item Cost parameters: $h\in\{2,5\}$ and $b\in\{10,30,50\}$. 
    \item Two demand functions: (i) Exponential function $\lambda(p)=\exp(3-0.04p),p\in[20,30]$; (ii) Logit function $\lambda(p)=\exp(3-0.1p)/(1+\exp(3-0.1p)),p\in[20,30]$.
    \item Random error distribution: (i) Uniform distribution on $[-3,3]$ and normal distribution with mean 0 and standard deviation 2 for the exponential demand function. (ii)Uniform distribution on $[-0.3,0.3]$ and normal distribution with mean 0 and standard deviation 0.2 for the logit demand function.
\end{itemize}

To obtain the non-concave objective function $G(\cdot)$, we set
\begin{itemize}
    \item Cost parameters: $h\in\{0.5,1\}$ and $b\in\{4,6,8\}$. 
    \item Demand functions: $\lambda(p)=5\cdot(2-\Phi(\frac{p-4.5}{0.81})-\Phi(\frac{p-8.5}{1.44})),p\in[1,10]$, where $\Phi(\cdot)$ is the CDF of standard normal distribution.
    
    \item Random error distribution: Uniform distribution on $[-3,3]$ and normal distribution with mean 0 and standard deviation 2 for the exponential demand function. 
\end{itemize}

Combining the cost parameters, demand functions, and error distributions, it shows 24 distinct simulations.
We run the proposed Algorithms 3 and 4 for 50 times and compare the average with the results of corresponding methods in \citet{chen2023optimal}, respectively.
The detailed parameter settings of Algorithms 3 and 4 are given by Appendix \ref{app:parameter_setting}.

\subsubsection{Performance under different types of $G$}

\begin{figure}[t]{}
\centering
{\includegraphics[scale =0.4]{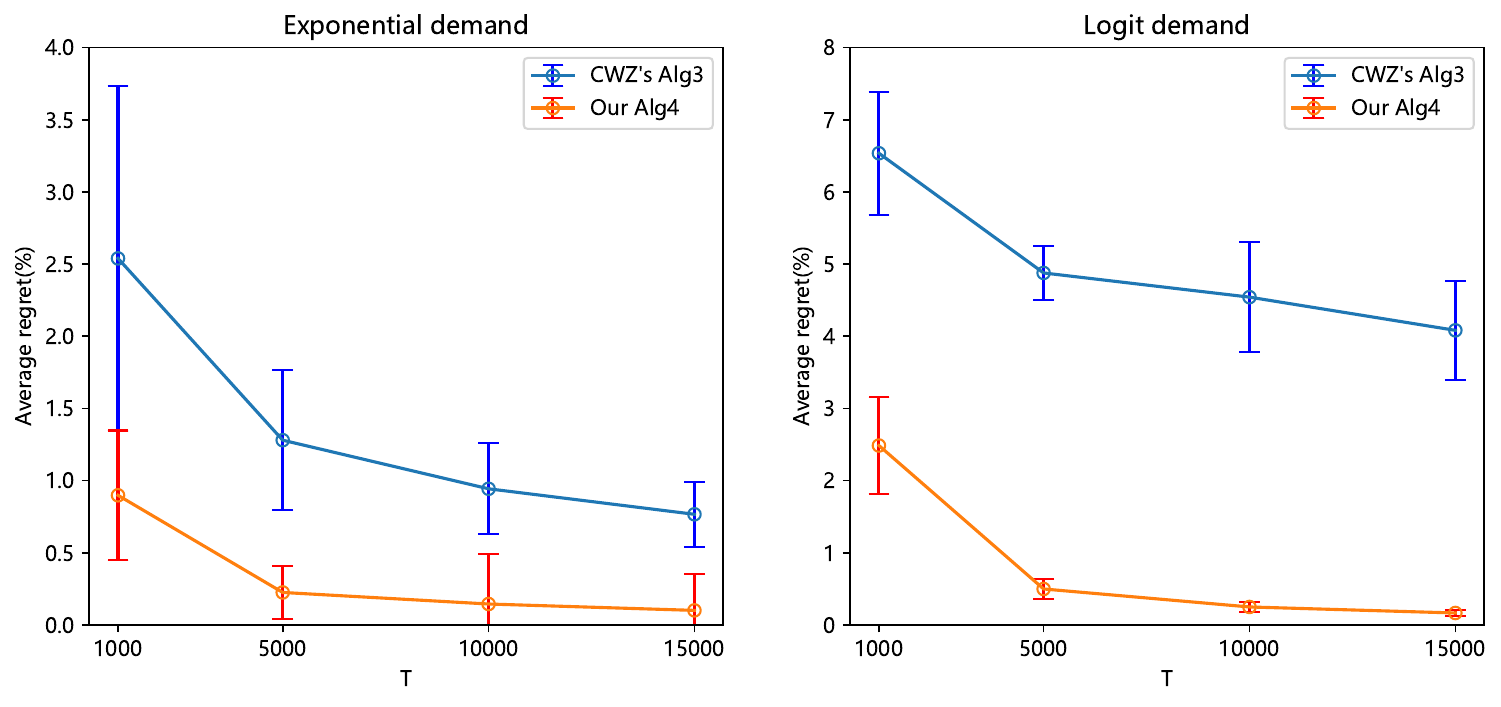}}
{\includegraphics[scale =0.4]{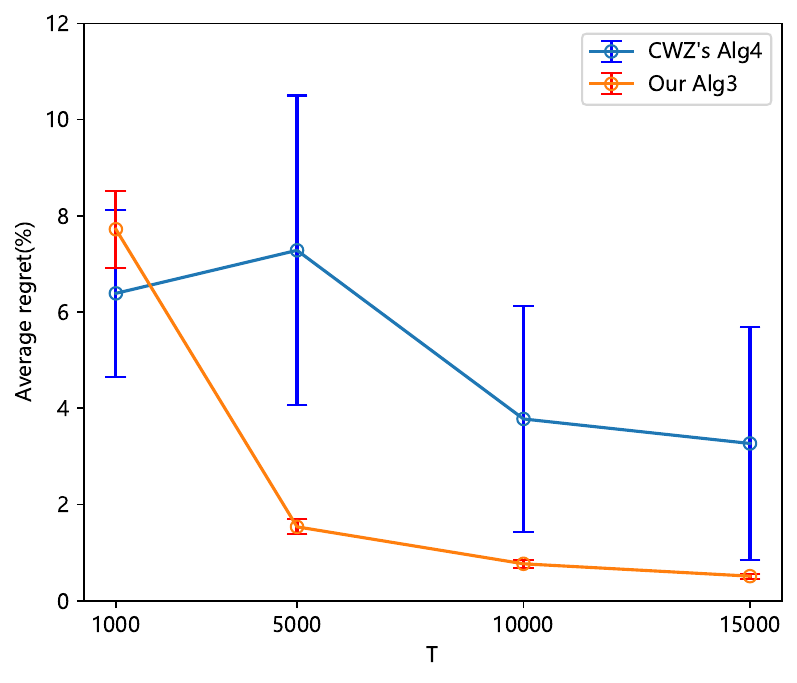}}
\caption{Overall average regret under concave $G(\cdot)$ (left two panels), and non-concave $G(\cdot)$ (the rightmost panel)}\label{fig:concave_inventory}
\end{figure}

Figure~\ref{fig:concave_inventory} reports the results of overall average regrets with the comparison between the proposed Algorithms 3 and 4 with CWZ's Algorithm 4 and 3~\citep{chen2023optimal} for concave and non-concave $G(\cdot)$, respectively.
As it can be seen, Algorithms 3 and 4 significantly outperform CWZ's Algorithms 4 and 3, especially $T>5000$.
The error bars demonstrate that our algorithm is more stable than CWS's.
These results corroborate our theoretical findings that the proposed algorithm is superior on smoother objectives.

\section{Conclusion and future directions}\label{sec:conclusion}

In this paper, we study a biased pairwise comparison oracle abstracted from several important operations management problems.
We proposed bandit algorithms that operate under two different sets of smoothness and/or concavity assumptions,
and derive improved results for two important OM problems (joint pricing and inventory control, network revenue management)
applying our proposed algorithms and their analysis.

To conclude our paper, we mention the following two future directions, one from the bandit optimization perspective and the other one
from applications to OM problems.

\paragraph{Incorporating growth conditions in function classes.} In this paper, we studied two types of function classes: H\"{o}lder smoothness class $\Sigma_d(k,M)$
and strongly concave function class $\Gamma_d(\sigma)$, which is one form of shape constraints.
It would be interesting to extend our results to other important function classes, such as those equipped with growth conditions
that regulate how fast the Lesbesgue measure of level sets of objective functions could grow.
Existing results in continuum-armed bandit show that such growth conditions have significant impacts on the optimal regret of bandit algorithms \citep{locatelli2018adaptivity,wang2019optimization}, and it would be interesting to explore extensions of these results to biased pairwise comparison oracles.

\paragraph{Joint pricing and inventory replenishment with censored demands and fixed costs.} In this paper, we apply our algorithms and analytical frameworks to a joint pricing and inventory replenishment problem with demand learning settings, where demands are censored and inventory ordering is subject to variable, linear costs.
In pracice, however, it is common that inventory ordering is also subject to \emph{fixed costs} that are incurred regardless of ordering amounts.
It is an interesting yet challenging question to extend algorithms and analysis in this paper to problem settings with both demand censoring and inventory ordering costs,
which remains as far as we know an open question in the field.

\bibliographystyle{ormsv080}
\bibliography{refs}

\newpage

\ECSwitch
\numberwithin{lemma}{section}
\numberwithin{proposition}{section}

\ECHead{Supplementary material: additional proofs}

\section{Proofs of results in Sec.~\ref{sec:alg-smooth}}

\subsection{Proof of Lemma \ref{lem:local-poly-approx}}
Because $f\in\Sigma_d(k,M)\subseteq\mathcal C^k([0,1]^d)$, by Taylor expansion with Lagrangian residuals, there exists $\tilde x\in\mathcal X_{\vct j}$
such that
$$
f(x) = \sum_{k_1+\cdots+k_d<k}\frac{1}{(k_1+\cdots+k_d)!}\frac{\partial^{k_1+\cdots+k_d} f(x_{\vct j})}{\partial x_1^{k_1}\cdots\partial x_d^{k_d}}\prod_{i=1}^d (x_i-x_{\vct ji})^{k_i} + \sum_{k_1+\cdots+k_d=k}\frac{1}{k!}\frac{\partial^k f(\tilde x)}{\partial x_1^{k_1}\cdots\partial x_d^{k_d}}\prod_{i=1}^d(x_i-x_{\vct ji})^{k_i}.
$$
With 
$$
\theta_{\vct j} := \left[\frac{1}{(k_1+\cdots+k_d)!}\frac{\partial^{k_1+\cdots+k_d} f(x_{\vct j})}{\partial x_1^{k_1}\cdots\partial x_d^{k_d}}\right]_{k_1+\cdots+k_d<k},
$$
we have that
\begin{align*}
\big|f(x)-\langle\theta_{\vct j},\phi_{\vct j}(x)\rangle\big|
&\leq \sum_{k_1+\cdots+k_d=k}\frac{1}{k!}\left|\frac{\partial^k f(\tilde x)}{\partial x_1^{k_1}\cdots\partial x_d^{k_d}}\prod_{i=1}^d(x_i-x_{\vct ji})^{k_i}\right|
\leq  \sum_{k_1+\cdots+k_d=k}\frac{1}{k!}\times M\times J^{-k}\\
&= \binom{k+d-1}{d}\frac{M}{k!}\times J^{-k} \leq (d+k)^k M\times J^{-k}.
\end{align*}
The $\ell_2$ norm of $\theta_{\vct j}$ can be upper bounded as
\begin{align*}
\|\theta_{\vct j}\|_2 &\leq \sqrt{\nu}\times\max_{k_1+\cdots+k_d<k}\frac{1}{(k_1+\cdots+k_d)!}\left|\frac{\partial^{k_1+\cdots+k_d} f(x_{\vct j})}{\partial x_1^{k_1}\cdots\partial x_d^{k_d}}\right|\leq M\sqrt{\nu}.
\end{align*}
This proves Lemma \ref{lem:local-poly-approx}. $\square$

\subsection{Proof of Lemma \ref{lem:batch-linucb-regret}}
We first prove the second inequality. Let $\Lambda_\tau$ be the $\Lambda$ matrix at the end of iteration $\tau$.
By definition, $\det(\Lambda_1)=\det( I_{\nu\times \nu})=1$ and each iteration the determinant of the matrix doubles.
Hence, $\det(\Lambda_{\tau_0})\geq 2^{\tau_0}$. On the other hand, $\|\Lambda_{\tau_0}\|_\op\leq 1+N\max_\tau\|\phi_\tau\|_2^2\leq
2N\nu$, and therefore $\det(\Lambda_{\tau_0})\leq (2N\nu)^\nu$.
Consequently, $\tau_0\leq \nu\log_2(2N\nu)$.

We next prove the third inequality in Lemma \ref{lem:batch-linucb-regret}.
Let $\hat\theta_\tau\in\mathbb R^\nu$ be the estimated linear model $\hat\theta$ at the beginning of iteration $\tau$.
Let also $\xi_\tau = y_\tau - \langle\theta_{\vct j},\phi_\tau\rangle$.
By definition, we have
\begin{equation}
\sum_{i<\tau}n_i(\langle\theta_{\vct j}-\hat\theta_{\tau},\phi_i\rangle+\xi_i)^2 + \|\hat\theta_\tau\|_2^2\leq \sum_{i<\tau}n_i\xi_i^2 + \|\theta_{\vct j}\|_2^2.
\label{eq:proof-batch-linucb-1}
\end{equation}
Re-arranging terms on both sides of Eq.~(\ref{eq:proof-batch-linucb-1}) and noting that $\Lambda_{\tau-1}=I_{\nu\times\nu}+\sum_{i<\tau} n_i\phi_\tau\phi_\tau^\top$
and $\|\theta_{\vct j}\|_2\leq M\sqrt{\nu}$ thanks to Lemma \ref{lem:local-poly-approx}, we have
\begin{equation}
(\hat\theta_\tau-\theta_{\vct j})^\top\Lambda_{\tau-1}(\hat\theta_{\tau}-\theta_{\vct j})
\leq 2\left|\sum_{i<\tau} n_i\xi_i\langle\phi_i,\hat\theta_{i}-\theta_{\vct j}\rangle\right| + 2M\sqrt{\nu}\|\hat\theta_\tau-\theta_{\vct j}\|_2.
\label{eq:proof-batch-linucb-2}
\end{equation}
Note that $\xi_i=y_i-\langle\theta_{\vct j},\phi_i\rangle$ satisfies $|\xi_i|\leq 2(d+k)^kMJ^{-K}+2\sqrt{(\gamma_1+2\gamma_2\ln T)/n_i}$
with probability $1-O(T^{-2})$, following Definition \ref{defn:oracle} and Lemma \ref{lem:local-poly-approx}.
Subsequently, by the Cauchy-Schwarz inequality, 
\begin{align}
&\left|\sum_{i<\tau} n_i\xi_i\langle\phi_\tau,\hat\theta_{\tau}-\theta_{\vct j}\rangle\right|\leq \sqrt{\tau_0}\sqrt{\sum_{i<\tau}n_i^2\xi_i^2|\langle\phi_i,\hat\theta_\tau-\theta_{\vct j}\rangle|^2}\nonumber\\
&\leq \sqrt{\tau_0}\sqrt{\sum_{i<\tau}n_i^2\left(\frac{4(d+k)^{2k}M^2}{J^{2k}}+\frac{4(\gamma_1+2\gamma_2\ln T)}{n_i}\right)|\langle\phi_i,\hat\theta_\tau-\theta_{\vct j}\rangle|^2}\nonumber\\
&\leq 2\sqrt{((d+k)^{2k}M^2J^{-2k}N+ \gamma_1+2\gamma_2\ln T)\tau_0 (\hat\theta_{\tau}-\theta_{\vct j})^\top\Lambda_{\tau-1}(\hat\theta_{\tau}-\theta_{\vct j})}.
\label{eq:proof-batch-linucb-3}
\end{align}
Here, Eq.~(\ref{eq:proof-batch-linucb-4}) holds because $\sum_i n_i|\langle\phi_i,\hat\theta_\tau-\theta_{\vct j}\rangle|^2 = (\hat\theta_\tau-\theta_{\vct j})^\top(\sum n_i\phi_i\phi_i^\top)(\hat\theta_\tau-\theta_{\vct j}) = (\hat\theta_\tau-\theta_{\vct j}^\top\Lambda_{\tau-1}(\hat\theta-\theta_{\vct j})$ and
$\sum_i n_i^2|\langle\phi_i,\hat\theta_\tau-\theta_{\vct j}\rangle|^2 \leq (\max_i n_i)\times (\sum_i n_i|\langle\phi_i,\hat\theta_\tau-\theta_{\vct j}\rangle|^2)$.
Because $(\hat\theta_\tau-\theta_{\vct j})^\top\Lambda_{\tau-1}(\hat\theta_{\tau}-\theta_{\vct j})\geq \|\hat\theta_\tau-\theta_{\vct j}\|_2^2$
since $\Lambda_{\tau-1}\succeq I$, dividing both sides of Eqs.~(\ref{eq:proof-batch-linucb-2},\ref{eq:proof-batch-linucb-3})
by $\sqrt{(\hat\theta_\tau-\theta_{\vct j})^\top\Lambda_{\tau-1}(\hat\theta_{\tau}-\theta_{\vct j})}$ we obtain
\begin{align}
\sqrt{(\hat\theta_\tau-\theta_{\vct j})^\top\Lambda_{\tau-1}(\hat\theta_{\tau}-\theta_{\vct j})}
&\leq 2M\sqrt{\nu} + 2(d+k)^kMJ^{-k}\sqrt{\tau_0 N} +2 \sqrt{(\gamma_1+\gamma_2N)\tau_0} \nonumber\\
&\leq 2M\sqrt{\nu}+2C_2\sqrt{\tau_{\infty}N} +2\sqrt{C_1\tau_{\infty}} =C_1'.
\label{eq:proof-batch-linucb-4}
\end{align}

Using H\"{o}lder's inequality, it holds for all $x\in\mathcal X_{\vct j}$ that
\begin{align}
\big|\langle\hat\theta_{\tau-1}-\theta_{\vct j}, \phi_{\vct j}(x)-\phi_{\vct j}(\xs)\rangle\big|
&\leq \sqrt{(\hat\theta_\tau-\theta_{\vct j})^\top\Lambda_{\tau-1}(\hat\theta_{\tau}-\theta_{\vct j})}\times \sqrt{(\phi_{\vct j}(x)-\phi_{\vct j}(\xs))^\top\Lambda_{\tau-1}^{-1}(\phi_{\vct j}(x)-\phi_{\vct j}(\xs))}\nonumber\\
&\leq C_1'\sqrt{(\phi_{\vct j}(x)-\phi_{\vct j}(\xs))^\top\Lambda_{\tau-1}^{-1}(\phi_{\vct j}(x)-\phi_{\vct j}(\xs))}.
\label{eq:proof-batch-linucb-5}
\end{align}
On the other hand, Lemma \ref{lem:local-poly-approx} implies that $|f(x)-\langle\theta_{\vct j},\phi_{\vct j}(x)\rangle|\leq (d+k)^kMJ^{-k}$ for all $x\in\mathcal X_{\vct j}$. Consequently, we have for all $x\in\mathcal X_{\vct j}$ that
\begin{align}
\big|[f(x)-f(\xs)] - \langle\theta_{\vct j},\phi_{\vct j}(x)-\phi_{\vct j}(\xs)\rangle\big| &\leq 2(d+k)^kMJ^{-k} + C_1'\sqrt{(\phi_{\vct j}(x)-\phi_{\vct j}(\xs))^\top\Lambda_{\tau-1}^{-1}(\phi_{\vct j}(x)-\phi_{\vct j}(\xs))}\nonumber\\
&\leq C_1'\sqrt{(\phi_{\vct j}(x)-\phi_{\vct j}(\xs))^\top\Lambda_{\tau-1}^{-1}(\phi_{\vct j}(x)-\phi_{\vct j}(\xs))} + C_2.
\label{eq:proof-batch-linucb-6}
\end{align}

Eq.~(\ref{eq:proof-batch-linucb-6}) demonstrates that, with probability $1-\widetilde O(T^{-2})$, the estimate
$\hat f_\tau(x) = \min\{M, \langle\hat\theta, \phi_{\vct j}(x)-\phi_{\vct j}(\xs)\rangle + C_1'\sqrt{(\phi_{\vct j}(x)-\phi_{\vct j}(\xs))^\top\Lambda^{-1}(\phi_{\vct j}(x)-\phi_{\vct j}(\xs))}+C_2\}$ constructed in Line \ref{line:batch-ucb} of Algorithm \ref{alg:batch-linucb-comparison} is a uniform upper bound on $f(\cdot)$,
or more specifically $\hat f_\tau(x)\geq f(x)$ for all $x\in\mathcal X_{\vct j}$.
Subsequently, with probability $1-\tilde O(T^{-2})$ it holds that 
\begin{align}
\sum_{\tau=1}^{\tau_0} n_\tau(f_{\vct j}^*-f(x_\tau))
&\leq  \sum_{\tau=1}^{\tau_0} n_\tau[(\hat f_\tau(x_{\vct j}^*)-\hat f_\tau(x_\tau)) + (\hat f_\tau(x_\tau) - f_\tau(x_\tau)) ]
\leq \sum_{\tau=1}^{\tau_0} n_\tau(\hat f_\tau(x_\tau)-f_\tau(x_\tau))\nonumber\\
&\leq \sum_{\tau=1}^{\tau_0}2n_\tau\min\big\{M, C_1'\sqrt{\phi_\tau^\top\Lambda_{\tau-1}^{-1}\phi_\tau} + C_2\big\}
\leq \sum_{\tau=1}^{\tau_0}2n_\tau(C_2+C_1'\min\big\{M, \sqrt{\phi_\tau^\top\Lambda_{\tau-1}^{-1}\phi_\tau}\big\}) \nonumber\\
&\leq 2C_2N + \sum_{\tau=1}^{\tau_0} 4C_1'\sum_{\ell=0}^{n_\tau-1} \min\{M,\sqrt{2\phi_\tau^\top (\Lambda_{\tau-1}+\ell\phi_\tau\phi_\tau^\top)^{-1}\phi_\tau})\}
\label{eq:proof-batch-linucb-7}\\
&\leq 2C_2 N + 4\sqrt{2}C_1'\sqrt{N}\sqrt{\sum_{\tau=1}^{\tau_0}\sum_{\ell=0}^{n_\tau-1}\min\{M^2,\phi_\tau^\top (\Lambda_{\tau-1}+\ell\phi_\tau\phi_\tau^\top)^{-1}\phi_\tau)\}}\label{eq:proof-batch-linucb-8}\\
&\leq 2C_2 N+8C_1'M\sqrt{N}\sqrt{\sum_{\tau=1}^{\tau_0}\sum_{\ell=0}^{n_\tau-1}\ln(1+\phi_\tau^\top (\Lambda_{\tau-1}+\ell\phi_\tau\phi_\tau^\top)^{-1}\phi_\tau))}
\label{eq:proof-batch-linucb-9}\\
&\leq 2C_2 N+8C_1'M\sqrt{N}\sqrt{\sum_{\tau=1}^{\tau_0}\sum_{\ell=0}^{n_\tau-1}\ln\frac{\det(\Lambda_{\tau-1}+(\ell+1)\phi_\tau\phi_\tau^\top)}{\det(\Lambda_{\tau-1}+\ell\phi_\tau\phi_\tau^\top)}}\nonumber\\
&=2C_2 N+8C_1'M\sqrt{N}\sqrt{\ln\det(\Lambda_{\tau_0})}\nonumber\\
&\leq 2C_2 N+8C_1'M\sqrt{N}\sqrt{\nu\ln(1+N\max_{x\in\mathcal X_{\vct j}}\|\phi_{\vct j}(x)-\phi_{\vct j}(\xs)\|_2^2)}\nonumber\\
&\leq 2C_2N + 8C_1'M\sqrt{\nu N\ln(5\nu N)}.\label{eq:proof-batch-linucb-10}
\end{align}
Here, 
Eq.~(\ref{eq:proof-batch-linucb-7}) holds because $\det(\Lambda_{\tau-1})\leq \det(\Lambda_{\tau-1}+\ell\phi_\tau\phi_\tau^\top)\leq 2\det(\Lambda_{\tau-1})$
by the definition of $n_\tau$, and hence $\phi_\tau^\top\Lambda_{\tau-1}^{-1}\phi_\tau\leq 2\phi_\tau^\top(\Lambda_{\tau-1}+\ell\phi_\tau\phi_\tau^\top)^{-1}\phi_\tau$
thanks to \citep[Lemma 4]{abbasi2011improved};
Eq.~(\ref{eq:proof-batch-linucb-8}) holds by the Cauchy-Schwarz inequality and the fact that $\sum_{\tau=1}^{\tau_0}n_\tau=N$;
Eq.~(\ref{eq:proof-batch-linucb-9}) holds because for any $M\geq 1$ and $x>0$, $\min\{M,x\}=M\min\{1,x/M\} \leq 2M\ln(1+x/M)\leq 2M\ln(1+x)$.

Finally, the first inequality in Lemma \ref{lem:batch-linucb-regret} holds because $n_{\tau^*}\geq N/\tau_0$, and therefore
$$
f_{\vct j}^*-f(x_{\tau^*}) \leq \frac{N}{n_{\tau^*}}\sum_{\tau=1}^{\tau_0} n_\tau(f_{\vct j}^*-f(x_\tau)) \leq (2C_2N + 8C_1'M\sqrt{\nu N\ln(5\nu N)})\frac{\nu\log_2(2\nu N)}{N},
$$
which is to be proved. $\square$

\subsection{Proof of Lemma \ref{lem:iterative-batch-linucb-regret}}
The first property holds by directly invoking Lemma \ref{lem:batch-linucb-regret} and noting that $2^{\beta_0}\geq N/2$ and the definition of $C_1'$.

To prove the second property, note that for any iteration $\beta\leq\beta_0$ of length $N_\beta=2^{\beta}$, 
Algorithm \ref{alg:iterative-batch-lin-ucb} invokes the \textsc{BatchLinUCB} with $\xs=x_{\beta-1}$.
The first and third properties of Lemma \ref{lem:batch-linucb-regret} show that, with probability $1-\tilde O(T^{-2})$, 
the total regret cumulated in the $N_\beta$ time periods (accounting for both $f(\xs)$ and $f(x_\tau)$ in Algorithm \ref{alg:batch-linucb-comparison}) is upper bounded by
\begin{align*}
2C_2N_\beta&+8C_1'M\sqrt{\nu N_\beta\ln(5\nu N_\beta)} + N_\beta\left(3C_2 \nu \ln(2\nu N_\beta) + 12C_1'M\frac{\sqrt{\nu^3\ln^2(5\nu N_\beta)}}{\sqrt{N_{\beta-1}}}\right) \\
&\leq 6C_2\nu N_\beta\ln(2\nu N) + 17C_1' M\sqrt{\nu^3 N_\beta\ln^2(5\nu N)}.
\end{align*}
Subsequently, 
\begin{align*}
\sum_{\beta=1}^{\beta_0}&6C_2\nu N_\beta\ln(2\nu N) + 17C_1' M\sqrt{\nu^3 N_\beta\ln^2(5\nu N)}\\
&\leq 6C_2\nu N\ln(2\nu N) + 21C_1'M\sqrt{\nu^3 N\ln^3(5\nu N)}\\
&\leq 56C_2 M\nu^2 N\ln^2(5\nu N) + 42M\nu^2\sqrt{C_1 N}\ln^2(5\nu N),
\end{align*}
which is to be proved. $\square$

\subsection{Proof of Lemma \ref{lem:tournament-keys}.}
We use induction to prove the first property.
The inductive hypothesis $\vct j^*\in\mathcal A_\zeta$ for $\zeta=1$ is trivially true because $\mathcal A_1=[J]$.
Now assume $\vct j^*\in\mathcal A_\zeta$ holds, we shall prove that $\vct j^*\in\mathcal A_{\zeta+1}$ holds as well with probability $1-\tilde O(T^{-2})$.

Fix arbitrary $\vct j,\vct j'\in\mathcal A_\zeta$ and let $y=\mathcal O(N_\zeta, x_{\vct j,\zeta},x_{\vct j,\zeta'})$.
By Lemma \ref{lem:iterative-batch-linucb-regret} and Definition \ref{defn:oracle}, we have with probability $1-\tilde O(T^{-2})$ that
\begin{align}
&\big|y-(f_{\vct j'}^*-f_{\vct j}^*)\big|
\leq \big|y-(f(x_{\vct j',\zeta})-f(x_{\vct j,\zeta}))\big| + \big|f_{\vct j}^*-f(x_{\vct j,\zeta})\big| + \big|f_{\vct j'}^*-f(x_{\vct j',\zeta})\big|\nonumber\\
&\leq \frac{\sqrt{\gamma_1+2\gamma_2\ln T}}{\sqrt{N_\zeta}} + \frac{2(34M+42\sqrt{C_1})M\nu^2\sqrt{\ln^3(5\nu T)}}{\sqrt{N_\zeta}} + 2(6\nu+58M)C_2\nu^2 \ln^{1.5}(5\nu T)\nonumber\\
&\leq \frac{\sqrt{C_3}}{\sqrt{N_\zeta}} + {C_2'} \leq {\varepsilon_\zeta+C_2'}.\label{eq:proof-tournament-keys-1}
\end{align}
The elimination steps (Line \ref{line:step-elimination}) then satisfy the following properties: with Eq.~(\ref{eq:proof-tournament-keys-2}),
the $y\gets \mathcal O(N_\zeta,x_{\vct j^*,\zeta},x_{\hat{\vct j}(\zeta),\zeta})$ feedback satisfies with probability $1-\tilde O(T^{-2})$ that
\begin{align*}
y \leq f_{\hat{\vct j}(\zeta)}^*-f_{\vct j^*}^* + {\varepsilon_\zeta+C_2'}\leq\varepsilon_\zeta+C_2',
\end{align*}
where the second inequality holds because $f_{\hat{\vct j}(\zeta)}^*\leq f_{\vct j^*}^*$ thanks to the definition of $\vct j^*$.
This shows that $\vct j^*\in\mathcal A_{\zeta+1}$, which is to be proved.

We next prove the second property of Lemma \ref{lem:tournament-keys}.
Because $\omega\leq \log_2 J+1\leq 1.45\ln(2J)\leq 1.45\ln(2T)=:\bar\omega$ almost surely, Eq.~(\ref{eq:proof-tournament-keys-1}) and the tournament mechanism in Algorithm 
\ref{alg:main-tournament} imply that
\begin{equation}
f_{\vct j^*}^* - f_{\hat{\vct j}(\zeta)}^* \leq \bar\omega(\varepsilon_\zeta + C_2').
\label{eq:proof-tournament-keys-2}
\end{equation}
On the other hand, according to Line \ref{line:step-elimination} of Algorithm \ref{alg:main-tournament}, for any $\vct j\in\mathcal A_{\zeta+1}$,
the $y\gets\mathcal O(N_\zeta,x_{\vct j,\zeta}, x_{\hat{\vct j}(\zeta),\zeta})$ satisfies $y\leq\varepsilon_\zeta+C_2'$.
Subsequently, applying Eqs.~(\ref{eq:proof-tournament-keys-1},\ref{eq:proof-tournament-keys-2}), we obtain
\begin{align}
f_{\vct j^*}^*-f_{\vct j}^* 
&\leq [f_{\vct j^*}^*-f_{\hat{\vct j}(\zeta)}^*] + \big|y-(f_{\hat{\vct j}(\zeta)}^*-f_{\vct j}^*)\big| + y\leq 3\bar\omega(\varepsilon_\zeta+C_2').
\end{align}
This completes the proof of Lemma \ref{lem:tournament-keys}. $\square$

\subsection{Proof of Theorem \ref{thm:regret-smooth-functions}}
To simplify notations, throughout this proof we shall drop numerical constants and polynomial dependency on $d,k$.
Let $\zeta_{\infty}$ be the last complete iteration in Algorithm \ref{alg:main-tournament} and fix arbitrary $\zeta\leq\zeta_{\infty}$.
The cumulative regret incurred by all operations during iteration $\zeta$ can be upper bounded, with probability $1-\tilde O(T^{-2})$, by
\begin{align}
\underbrace{\bar\omega N_\zeta\left[\sum_{\vct j\in\mathcal A_\zeta}f_{\vct j^*}^*-f(x_{\vct j,\zeta})\right]}_{=:\mathfrak A(\zeta), \text{from Step \ref{line:step-tournament}}}
+ \underbrace{N_\zeta\left[\sum_{\vct j\in\mathcal A_\zeta}2f_{\vct j^*}^* - f(x_{\vct j,\zeta})-f(x_{\hat{\vct j}(\zeta),\zeta})\right]}_{=:\mathfrak B(\zeta), \text{from Step \ref{line:step-elimination}}}
+ \underbrace{\left[\sum_{\vct j\in\mathcal A_\zeta}\sum_t 2f_{\vct j^*}^*-f(x_t)-f(x_t')\right]}_{=:\mathfrak C(\zeta), \text{from Step \ref{line:step-preprocess}}}
\label{eq:proof-thm-smooth-1}
\end{align}

Using Lemmas \ref{lem:iterative-batch-linucb-regret} and \ref{lem:tournament-keys}, and a union bound over all $|\mathcal A_\zeta|\leq J^d\leq T$ hypercubes,
it holds with probability $1-\tilde O(T^{-1})$ that
\begin{align}
\mathfrak A(\zeta) &\leq \bar\omega N_\zeta\times \sum_{\vct j\in\mathcal A_\zeta} \big\{[f_{\vct j^*}^*-f_{\vct j}^* ] + [f_{\vct j}^*-f(x_{\vct j,\zeta})]\big\}\nonumber\\
&\leq \bar\omega N_\zeta\times \sum_{\vct j\in\mathcal A_\zeta}\left[3\bar\omega(\varepsilon_\zeta+C_2') + O\left((\sqrt{C_1}+M)\nu^2\sqrt{\frac{\ln^3 T}{N_\zeta}}+(M+\nu)C_2\nu^2\ln^{1.5} T\right)\right]\nonumber\\
&\leq O(((M+\sqrt{C_1})\nu^2\ln^{1.5}T + \overline\omega\sqrt{C_3})\times \bar\omega|\mathcal A_\zeta|\sqrt{N_\zeta} + ((M+\nu)C_2\ln^{1.5}T+\overline\omega C_2')\times \bar\omega|\mathcal A_\zeta|N_\zeta)\nonumber\\
&\leq O(((M+\sqrt{C_1})\nu^2\ln^{1.5}T + \overline\omega\sqrt{C_3})\times \bar\omega \sqrt{J^dN_\zeta^\tot} + ((M+\nu)C_2\ln^{1.5}T+\overline\omega C_2')\times \bar\omega N_\zeta^\tot).
\label{eq:proof-thm-smooth-2}
\end{align}

Using the same analysis, $\mathfrak B(\zeta)$ can also be upper bounded by 
\begin{equation}
\mathfrak B(\zeta)\leq O(((M+\sqrt{C_1})\nu^2\ln^{1.5}T + \overline\omega\sqrt{C_3})\times \bar\omega \sqrt{J^dN_\zeta^\tot} + ((M+\nu)C_2\ln^{1.5}T+\overline\omega C_2')\times \bar\omega N_\zeta^\tot).
\label{eq:proof-thm-smooth-3}
\end{equation}

Using Lemma \ref{lem:iterative-batch-linucb-regret} and a union bound over all $|\mathcal A_\zeta|\leq J^d\leq T$ hypercubes,
it holds with probability $1-\tilde O(T^{-1})$ that
\begin{align}
\mathfrak C(\zeta) &\leq O(C_2M\nu^2|\mathcal A_\zeta|N_\zeta\ln^2 T + M\nu^2|\mathcal A_\zeta|\sqrt{C_1 N_\zeta}\ln^2 T)
\leq O(M\nu^2(C_2N_\zeta^\tot + \sqrt{C_1 J^d N_\zeta^\tot})\ln^2 T).
\label{eq:proof-thm-smooth-4}
\end{align}

Define the following constants:
\begin{align*}
K_1 &:= (M+\sqrt{C_1})\nu^2\bar\omega\ln^{1.5}T + \bar\omega^2\sqrt{C_3}\leq O((M+\sqrt{\gamma_1+\gamma_2\ln T})M\nu^2\ln^{3.5}T);\\
K_2 &:= C_2 M\nu^2\bar\omega\ln^2 T +C_2'\bar\omega^2\leq O((M^2\nu^2+\nu^3)J^{-k}\ln^{3.5}T).  
\end{align*}
Summing over $\zeta=1,2,\cdots,\zeta_{\infty}$ and noting that $\sum_{\zeta=1}^{\zeta_{\infty}}N_\zeta^\tot\leq T$, we have that
\begin{align}
&\sum_{\zeta=1}^{\zeta_{\infty}} \mathfrak A(\zeta)+\mathfrak B(\zeta)+\mathfrak C(\zeta)
\leq \sum_{\zeta=1}^{\zeta_{\infty}}O(K_1\sqrt{J^d N_\zeta^\tot} + K_2 N_\zeta^\tot)\leq O(K_1\sqrt{\zeta_{\infty} J^dT} + K_2 T).
\label{eq:proof-thm-smooth-5}
\end{align}

Finally, note that $N_\zeta^\tot\geq N_\zeta \geq 1/\varepsilon_\zeta^{2} \geq 4^{-\zeta}$ and therefore $\zeta_{\infty}\leq 0.5\log_2 T$
because $\sum_\zeta N_\zeta^\tot\leq T$.
Expanding all definitions of constants in Eq.~(\ref{eq:proof-thm-smooth-5}), the regret of Algorithm \ref{alg:main-tournament} is upper bounded by
\begin{align*}
 O(K_1\sqrt{J^dT\ln T} + K_2 T) &\leq (M^2\nu^2+\nu^3+M\nu^2\sqrt{\gamma_1+\gamma_2\ln T})\times O((\sqrt{J^dT}+J^{-k}T)\ln^4 T)\\
&\leq (M^2\nu^2+\nu^3+M\nu^2\sqrt{\gamma_1+\gamma_2\ln T})\times O(T^{\frac{k+d}{2k+d}}\ln^4 T),
\end{align*}
which is to be proved. Note that in this algorithm the implemented solutions $\{x_t\}_{t=1}^T$ are always feasible; that is, $x_t\in\mathcal Z$ for all $t$.
Therefore, the regret of the $\psi(\cdot)$ term is exactly zero. $\square$

\section{Proof of Theorem \ref{thm:strongly-concave}}

The first part of the proof analyzes the estimation error of the gradient estimate $\hat{\vct g}_\tau$ as well as the regret incurred in the gradient estimation procedure, by establishing the following lemma:
\begin{lemma}
For every $\tau$, let $x_{\tau,j}'$ and $x_{\tau,j}''$ be the solutions involved on Lines \ref{line:xtaup} and \ref{line:xtaupp} corresponding to dimension $j\in[d]$.
 Then with probability $1-\tilde O(T^{-1})$ the following hold:
\begin{enumerate}
\item $\|-\hat{\vct g}_\tau-\nabla \tilde f(x_\tau)\|_2\leq \sqrt{d}(Mh_\tau+h_\tau^{-1}\sqrt{(\gamma_1+\gamma_2\ln(NT))/\beta_\tau})$, where $\tilde f(x) = -f(x) - \sigma\|x\|_2^2/2$;
\item $\sum_{j=1}^d \beta_\tau[2f(x_\tau)-f(x_{\tau,j}')-f(x_{\tau,j}'')] \leq \beta_\tau dMh_\tau^2$.
\end{enumerate}
\label{lem:est-g}
\end{lemma}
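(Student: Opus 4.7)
The plan is to handle the two assertions separately, with Part 1 combining an oracle-concentration argument with a finite-difference Taylor expansion, and Part 2 being a pure Taylor estimate that exploits $f\in\Sigma_d(2,M)$.

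For Part 1, I would start by invoking Definition \ref{defn:oracle} on each of the $2d$ queries issued in every iteration $\tau$. Picking a failure probability $\delta$ polynomially small in $T$, $d$, and the (geometrically growing) iteration count, and taking a union bound over all $\tau$ and all coordinates $j\in[d]$, the total failure probability is $\tilde O(T^{-1})$ and, on the good event,
\[
\big|y_\tau(j) - (f(x_{\tau,j}')-f(x_\tau))\big| \;\le\; \sqrt{(\gamma_1+\gamma_2\ln(NT))/\beta_\tau},
\]
with the same bound holding for $y_\tau(j)'$. Divided by $2h_\tau$, this is the stochastic contribution to the error of $\hat g_\tau(j)$. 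Next I would address the deterministic finite-difference bias. Using Taylor's theorem with Lagrangian remainder for $f\in\Sigma_d(2,M)\subseteq \mathcal C^2$,
\[
f(x_\tau\pm h_\tau e_j) = f(x_\tau) \pm h_\tau\partial_j f(x_\tau) + \tfrac{h_\tau^2}{2}\partial_j^2 f(\xi_\pm),
\]
so the central-difference quotient satisfies $|(f(x_{\tau,j}')-f(x_{\tau,j}''))/(2h_\tau) - \partial_j f(x_\tau)|\le Mh_\tau$. Combining the two error sources, adding the exactly-known $\sigma x_{\tau,j}$ correction, and using $\nabla\tilde f(x) = -\nabla f(x) - \sigma x$, I would conclude
\[
\big|-\hat g_\tau(j) - \partial_j\tilde f(x_\tau)\big| \;\le\; Mh_\tau + h_\tau^{-1}\sqrt{(\gamma_1+\gamma_2\ln(NT))/\beta_\tau},
\]
and then take the $\ell_2$ norm (bounded by $\sqrt d$ times the coordinate-wise $\ell_\infty$ bound) to obtain the stated inequality.

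For Part 2, I would run the same second-order Taylor expansion but now add the two sides instead of subtracting: the first-order terms cancel and
\[
f(x_{\tau,j}') + f(x_{\tau,j}'') = 2f(x_\tau) + \tfrac{h_\tau^2}{2}\big(\partial_j^2 f(\xi_+) + \partial_j^2 f(\xi_-)\big).
\]
Since $\partial_j^2 f$ is bounded by $M$ in absolute value on the entire domain, this yields $|2f(x_\tau)-f(x_{\tau,j}')-f(x_{\tau,j}'')|\le Mh_\tau^2$. (In fact, by concavity this quantity is nonnegative, so it is a genuine regret term rather than a signed fluctuation.) Summing over $j\in[d]$ and multiplying by $\beta_\tau$ gives the second claim.

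The only genuine subtlety, rather than an obstacle, is that everything above requires the perturbed queries $x_{\tau,j}' = x_\tau\pm h_\tau e_j$ to lie in $\mathcal X$ so that $f$ and its derivatives are defined and bounded there. This is precisely why Algorithm \ref{alg:acc-pgd} constrains iterates to the shrunken domain $\mathcal X^o = \{x : x'\in\mathcal X\;\forall\|x'-x\|_\infty\le\delta_0\}$ of Assumption \ref{asmp:interior}; I would simply verify that the chosen step $h_\tau = \sqrt[4]{(\gamma_1+2\gamma_2\ln T)/(\beta_\tau d)}$ satisfies $h_\tau\le\delta_0$ for all $\tau\ge 0$, which follows from $\beta_\tau\ge 1$ once $\gamma_1+2\gamma_2\ln T$ is no larger than $d\delta_0^4$ (and otherwise one absorbs small-$\tau$ iterations into the additive $32Md^2$ constant of Theorem \ref{thm:strongly-concave}). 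Once that feasibility check is in hand, Parts 1 and 2 follow routinely from the Taylor estimate plus the oracle concentration inequality.
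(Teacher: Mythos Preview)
Your proposal is correct and follows essentially the same route as the paper: Taylor expansion with Lagrangian remainder for the deterministic finite-difference bias, the oracle concentration inequality plus a union bound for the stochastic error, and then the $\ell_2\le\sqrt{d}\,\ell_\infty$ step. You are in fact slightly more careful than the paper---you flag the feasibility requirement $x_\tau\pm h_\tau e_j\in\mathcal X$ and trace it back to the $\mathcal X^o$ constraint, whereas the paper's proof leaves this implicit.
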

\begin{proof}{Proof of Lemma \ref{lem:est-g}.}
We first prove the second property of Lemma \ref{lem:est-g}. By Taylor expansion and the fact that $f\in\Sigma_d(2,M)$, it holds for every $j\in[d]$ that
\begin{align}
\big|f(x_{\tau,j}')-f(x_\tau)-h_\tau\partial_j f(x_\tau)\big| \leq Mh_\tau^2;\label{eq:proof-estg-1}\\
\big|f(x_{\tau,j}'')-f(x_\tau)+h_\tau\partial_j f(x_\tau)\big| \leq Mh_\tau^2.\label{eq:proof-estg-2}
\end{align}
Combining Eqs.~(\ref{eq:proof-estg-1},\ref{eq:proof-estg-2}) we proved the second property. 

We next turn to the first property. Eqs.~(\ref{eq:proof-estg-1},\ref{eq:proof-estg-2}) together yield that
\begin{equation}
\left|\frac{f(x_{\tau,j}')-f(x_{\tau,j}'')}{2h_\tau} - \partial_j f(x_\tau)\right| \leq Mh_\tau^2, \;\;\;\;\;\;\forall j\in[d]
\label{eq:proof-estg-3}
\end{equation}
On the other hand, the definition of the noisy pairwise comparison oracle $\mathcal O$ combined together with the union bound yield that, with probability $1-\tilde O(T^{-1})$, the following
hold for all $j\in[d]$:
\begin{align}
\big|y_\tau(j) - (f(x_{\tau,j}')-f(x_\tau))\big| &\leq \sqrt{\frac{\gamma_1+\gamma_2\ln(NT)}{\beta_\tau}};\label{eq:proof-estg-4}\\
\big|y_\tau(j)'-(f(x_{\tau,j}'')-f(x_\tau))\big| &\leq \sqrt{\frac{\gamma_1+\gamma_2\ln(NT)}{\beta_\tau}}.\label{eq:proof-estg-5}
\end{align}
Combining Eqs.~(\ref{eq:proof-estg-3},\ref{eq:proof-estg-4},\ref{eq:proof-estg-5}) and noting that $\nabla\tilde f(x)=-\nabla f(x)-\sigma x$, we complete the proof of Lemma \ref{lem:est-g}. $\square$
\end{proof}

Our next lemma establishes convergence rate of the proximal gradient descent/ascent approach adopted in Algorithm \ref{alg:acc-pgd},
using fixed step sizes and \emph{inexact} gradient estimates $\hat{\vct g}_\tau$.
\begin{lemma}
Let $\tilde f(x)=-f(x)-\sigma\|x\|_2^2/2$, which is convex on $\mathcal X$ thanks to the condition that $f\in\Gamma_d(\sigma)$.
For each epoch $\tau$, let $\varepsilon_\tau := \|-\hat{\vct g}_\tau-\nabla\tilde f(x_\tau)\|_2$. Then for every $\tau_0\geq 1$ until $T$ time periods are elapsed,
$$
\sum_{\tau=1}^{\tau_0} 4d\beta_\tau [f^*-f(x_\tau)] \leq 4d(1+M/\sigma)\left(\sqrt{\frac{\sigma}{2}}\|x_0-x^*\|_2 + \sqrt{\frac{2}{\sigma}}\sum_{\tau=1}^{\tau_0}\sqrt{\beta_\tau}\varepsilon_\tau\right)^2,
$$
where $f^*=f(x^*)=\max_{x\in\mathcal Z}f(x)$.
\label{lem:inexact-gd}
\end{lemma}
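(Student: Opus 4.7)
The plan is to view Algorithm~\ref{alg:acc-pgd} as an inexact proximal gradient method for minimizing $F := -f = \tilde f + h$, where $h(x) := \sigma\|x\|_2^2/2$. Here $\tilde f$ is convex (because $f\in\Gamma_d(\sigma)$), $h$ is $\sigma$-strongly convex, and the step size $\alpha=1/M$ matches the Lipschitz constant of $\nabla\tilde f$ afforded by $f\in\Sigma_d(2,M)$, so that $\eta=\alpha\sigma=\sigma/M$ is the corresponding contraction factor.

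First I would derive a per-epoch inequality. The first-order variational optimality of the proximal subproblem on Line~\ref{line:iter-pgd} reads $\langle -\hat{\vct g}_\tau+\sigma x_{\tau+1}+\alpha^{-1}(x_{\tau+1}-x_\tau),\,u-x_{\tau+1}\rangle \geq 0$ for every feasible $u\in\mX^o\cap\mZ$. Taking $u=x^*$ and combining with (i) convexity of $\tilde f$ at $x_\tau$, (ii) $\sigma$-strong convexity of $h$ at $x^*$, (iii) the descent lemma $\tilde f(x_{\tau+1})-\tilde f(x_\tau)\leq \langle\nabla\tilde f(x_\tau),x_{\tau+1}-x_\tau\rangle+\frac{1}{2\alpha}\|x_{\tau+1}-x_\tau\|_2^2$, and (iv) the Cauchy--Schwarz bound $|\langle-\hat{\vct g}_\tau-\nabla\tilde f(x_\tau),x_{\tau+1}-x^*\rangle|\leq\varepsilon_\tau\|x_{\tau+1}-x^*\|_2$ on the inexact-gradient error yields, writing $r_\tau:=\|x_\tau-x^*\|_2$ and $\Delta_\tau:=F(x_\tau)-F^*$,
$$
\Delta_{\tau+1}+\tfrac{1}{2\alpha}r_{\tau+1}^2 \;\leq\; \tfrac{1-\eta}{2\alpha}r_\tau^2+\varepsilon_\tau r_{\tau+1}. \qquad(\star)
$$

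Next I would multiply $(\star)$ by $\beta_{\tau+1}$ and telescope. The key identity $(1+\eta)^{\tau+1}(1-\eta)=(1+\eta)^\tau(1-\eta^2)\leq(1+\eta)^\tau$ gives $\beta_{\tau+1}(1-\eta)\leq\beta_\tau$, modulo an $O(1)$ slack from the ceiling which I handle either via $r_\tau^2\leq 2\Delta_\tau/\sigma$ (reabsorbed into the left-hand sum) or, more cleanly, by running the telescoping on the continuous weights $\tilde\beta_\tau=(1+\eta)^\tau$ and invoking $\beta_\tau\leq 2\tilde\beta_\tau$ only at the end. The $r_\tau^2$-chain then collapses to $\sum_\tau\beta_\tau\Delta_\tau \leq \frac{\beta_0}{2\alpha}r_0^2+\sum_\tau\beta_\tau\varepsilon_{\tau-1}r_\tau$. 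To convert the cross term into the form in the statement, I apply strong convexity $r_\tau\leq\sqrt{2\Delta_\tau/\sigma}$ and the termwise bound $\sqrt{\beta_\tau\Delta_\tau}\leq\sqrt{\sum_\tau\beta_\tau\Delta_\tau}$ to obtain
$$
\sum_\tau\beta_\tau\varepsilon_{\tau-1}r_\tau \;\leq\; \sqrt{\tfrac{2}{\sigma}}\,\sqrt{\sum_\tau\beta_\tau\Delta_\tau}\cdot\sum_\tau\sqrt{\beta_\tau}\,\varepsilon_{\tau-1}.
$$
Setting $S:=\sqrt{\sum_\tau\beta_\tau\Delta_\tau}$ and $B:=\sqrt{2/\sigma}\sum_\tau\sqrt{\beta_\tau}\varepsilon_{\tau-1}$, the resulting inequality $S^2\leq Mr_0^2/2+BS$ solves via completing the square to $S\leq r_0\sqrt{M/2}+B$. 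Squaring and comparing term-by-term with $(1+M/\sigma)(\sqrt{\sigma/2}r_0+\sqrt{2/\sigma}\sum\sqrt{\beta_\tau}\varepsilon_\tau)^2$ (which is larger under the regime $M\geq\sigma$ corresponding to $\eta\leq 1$) yields the stated bound up to the overall factor of $4d$.

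The main obstacle is the bookkeeping around the ceiling operator in $\beta_\tau=\lceil(1+\eta)^\tau\rceil$, since strict telescoping $\beta_{\tau+1}(1-\eta)r_\tau^2\leq\beta_\tau r_\tau^2$ holds only for the unrounded weights and any residue must be reabsorbed through strong convexity without spoiling the constant $1+M/\sigma$. A minor secondary issue is the index mismatch between the $\varepsilon_{\tau-1}$ that the telescoping naturally produces and the $\varepsilon_\tau$ in the statement, resolved by $\sqrt{\beta_\tau/\beta_{\tau-1}}=\sqrt{1+\eta}\leq\sqrt{2}$. A fallback, if the direct telescoping proves too delicate, is to unroll $(\star)$ via the Schmidt--Le Roux--Bach recursion $r_{\tau+1}\leq\sqrt{1-\eta}\,r_\tau+2\alpha\varepsilon_\tau$ and substitute into the smoothness bound $\Delta_\tau\leq(M/2)r_\tau^2$ weighted by $\beta_\tau$.
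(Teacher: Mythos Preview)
Your overall architecture matches the paper's: derive a per-epoch inequality from the optimality condition of the proximal subproblem plus the descent lemma and convexity of $\tilde f$, multiply by geometric weights $c_\tau=(1+\alpha\sigma)^\tau$, and telescope. One small slip: the ingredients (i)--(iv) you list actually produce
\[
\Delta_{\tau+1}+\frac{1+\eta}{2\alpha}\,r_{\tau+1}^2\;\le\;\frac{1}{2\alpha}\,r_\tau^2+\varepsilon_\tau r_{\tau+1},
\]
i.e.\ the strong-convexity term lands as a $(1+\eta)$ factor on the \emph{left} (on $r_{\tau+1}^2$), not as a $(1-\eta)$ factor on the right. This is what the paper writes as $c_\tau(\sigma+\alpha^{-1})=c_{\tau+1}\alpha^{-1}$, and it telescopes identically once you multiply by $c_\tau$ rather than $\beta_{\tau+1}$, so your subsequent steps go through unchanged.

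The one genuine difference is how you close the telescoped inequality $\sum_i c_{i-1}\delta_i+\frac{c_t}{2\alpha}r_t^2\le \frac{1}{2\alpha}r_0^2+\sum_i c_{i-1}\varepsilon_{i-1}r_i$. The paper drops the $\delta$-sum, keeps $v_t:=\sqrt{c_t/(2\alpha)}\,r_t$, and invokes a short sequence lemma (if $v_t^2\le a+\sum_i b_iv_i$ then $a+\sum_ib_iv_i\le(\sqrt a+\sum_ib_i)^2$), which is exactly the Schmidt--Le Roux--Bach device you mention as a fallback. You instead drop the $r_{\tau_0}^2$-term, keep $S^2=\sum_\tau\beta_\tau\Delta_\tau$, and use strong convexity $r_\tau\le\sqrt{2\Delta_\tau/\sigma}$ together with the crude $\sqrt{\beta_\tau\Delta_\tau}\le S$ to get a quadratic in $S$. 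Both routes are valid here; yours is slightly more self-contained (no auxiliary lemma) but spends the strong-convexity assumption a second time, while the paper's sequence-lemma route needs only the single occurrence of $\sigma$ already baked into the weights. The ceiling-vs.-continuous-weight and $\varepsilon_{\tau-1}$-vs.-$\varepsilon_\tau$ discrepancies you flag are real but harmless; the paper itself identifies $c_\tau$ with $\beta_\tau$ without further comment and leaves the same index shift implicit.
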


The technical proof of Lemma \ref{lem:inexact-gd} is involved and therefore deferred to the end of this section.

We are now ready to complete the proof of Theorem \ref{thm:strongly-concave}. The first property of Lemma \ref{lem:est-g} implies that, with high probability, 
$\varepsilon_\tau \leq \sqrt{d}(Mh_\tau+h_\tau^{-1}\sqrt{(\gamma_1+\gamma_2\ln(NT))/\beta_\tau})\leq \sqrt{d}(Mh_\tau+h_\tau^{-1}\sqrt{(\gamma_1+2\gamma_2\ln T)/\beta_\tau}$,
where the second inequality holds because $N\leq T$ almost surely. Incorporating this into Lemma \ref{lem:inexact-gd},
we have with probability $1-\tilde O(T^{-1})$ that
\begin{align}
\sum_{\tau=1}^{\tau_0}& 4d\beta_\tau [f^*-f(x_\tau)] 
\leq 8Md\|x_0-x^*\|_2^2 + \frac{32Md}{\sigma^{2}}\left(\sum_{\tau=1}^{\tau_0}M\sqrt{\beta_\tau d}h_\tau + \frac{\sqrt{\gamma_1+2\gamma_2\ln T}}{h_\tau}\right)^2\nonumber\\
&\leq 8Md\|x_0-x^*\|_2^2 + \frac{128M^3 \sqrt{\gamma_1+2\gamma_2\ln T}}{\sigma^2} \left(\sum_{\tau=1}^{\tau_0}\sqrt[4]{\beta_\tau d}\right)^2\nonumber\\
&= 8Md\|x_0-x^*\|_2^2 + \frac{128M^3 d^{3/2} \sqrt{\gamma_1+2\gamma_2\ln T}}{\sigma^2}\left(\sum_{\tau=1}^{\tau_0}(1+\eta)^{\tau/4}\right)^2\nonumber\\
&\leq 8Md\|x_0-x^*\|_2^2 + \frac{128M^3 d^{3/2} \sqrt{\gamma_1+2\gamma_2\ln T}}{\sigma^2}\times \frac{(1+\eta)^{(\tau_0+1)/2}}{\eta^2}\nonumber\\
&\leq 8Md\|x_0-x^*\|_2^2 + \frac{128M^3 d^{3/2} \sqrt{\gamma_1+2\gamma_2\ln T}}{\sigma^2}\times \frac{\sqrt{1+\eta}}{\eta^2}\times \sqrt{\beta_{\tau_0}}\nonumber\\
&\leq 8Md\|x_0-x^*\|_2^2 + \frac{128M^3 d^{3/2} \sqrt{\gamma_1+2\gamma_2\ln T}}{\sigma^2}\times \frac{\sqrt{1+\eta}}{\eta^2}\times \sqrt{\frac{T}{4d}},\label{eq:proof-strongly-concave-2}
\end{align}
where the last inequality holds because a complete epoch $\tau$ lasts for $4d\beta_\tau$ periods, and therefore $4d\beta_{\tau_0}\leq T$.
Because $\eta=\sigma/M\leq 1$ and $x_0,x^*\in\mathcal X\subseteq[0,1]^d$ which implies $\|x_0-x^*\|_2\leq 2\sqrt{d}$, Eq.~(\ref{eq:proof-strongly-concave-2}) yields
\begin{align}
\sum_{\tau=1}^{\tau_0}& 4d\beta_\tau [f^*-f(x_\tau)] \leq 32Md^2 + \frac{64\sqrt{2}M^5d \sqrt{\gamma_1+2\gamma_2\ln T}}{\sigma^4}\times \sqrt{T}.
\label{eq:proof-strongly-concave-3}
\end{align}
Combining Eq.~(\ref{eq:proof-strongly-concave-3}) with the second property of Lemma \ref{lem:est-g}, and noting that the regret incurred by the $\psi(\cdot)$ term is zero
because the violation constraints from $x_\tau'$ and $x_\tau''$ are symmetric and cancel out each other, we would complete the proof of Theorem \ref{thm:strongly-concave}.
More specifically, the regret arising from the second property of Lemma \ref{lem:est-g} can be upper bounded by
\begin{align}
\sum_{\tau=1}^{\tau_0}Md\beta_\tau h_\tau^2 &\leq \sum_{\tau=1}^{\tau_0} M\sqrt{\beta_\tau d} \leq M\sqrt{d}\left(\sum_{\tau=1}^{\tau_0}(1+\eta)^{\tau/2}\right)
\leq M\sqrt{d}\times \frac{(1+\eta)^{(\tau_0+1)/2}}{\sqrt{1+\eta}-1}\nonumber\\
&\leq 4M\sqrt{d}\times \frac{\sqrt{\beta_{\tau_0}}}{\eta} \leq \frac{4M\sqrt{d}}{\eta}\sqrt{\frac{T}{4d}} \leq \frac{2M^2}{\sigma}\times \sqrt{T}.
\end{align}

\subsection{Proof of Lemma \ref{lem:inexact-gd}}

By Line \ref{line:iter-pgd} of Algorithm \ref{alg:acc-pgd}, it holds for every epoch $\tau$ that
\begin{equation}
\langle-\hat{\vct g}_\tau, x-x_{\tau+1}\rangle + \frac{\sigma\|x\|_2^2}{2} - \frac{\sigma\|x_{\tau+1}\|_2^2}{2} + \frac{1}{2\alpha_\tau}\left(\|x-x_\tau\|_2^2 -\|x_{\tau+1}-x_\tau\|_2^2\right) \geq \frac{\alpha^{-1}+\sigma}{2}\|x-x_{\tau+1}\|_2^2.
\label{eq:proof-inexact-gd-1}
\end{equation}
Because $\tilde f$ is continuously differentiable with $M$-Lipschitz continuous gradients, it holds that
\begin{align}
\tilde f(x_{\tau+1}) &\leq \tilde f(x_\tau) + \langle\nabla\tilde f(x_\tau),x_{\tau+1}-x_\tau\rangle + \frac{M}{2}\|x_{\tau+1}-x_\tau\|_2^2.
\label{eq:proof-inexact-gd-2}
\end{align}
Additionally, using the definition that $\varepsilon_\tau = \|-\hat{\vct g}_\tau-\nabla\tilde f(x_\tau)\|_2$ and the Cauchy-Schwarz inequality, 
we have for every $x\in\mathcal Z$ that
\begin{align}
\langle -\hat{\vct g}_\tau, x-x_{\tau+1}\rangle + \langle-\hat{\vct g}_\tau, x_{\tau+1}-x_\tau\rangle 
&\leq \tilde f(x)-\tilde f(x_\tau)+\varepsilon_\tau\|x-x_{\tau+1}\|_2.
\label{eq:proof-inexact-gd-3}
\end{align}
Combining Eqs.~(\ref{eq:proof-inexact-gd-1},\ref{eq:proof-inexact-gd-2},\ref{eq:proof-inexact-gd-3}) and using the fact that $\alpha=1/M$, we have for every $x\in\mathcal Z$ that
\begin{align}
\frac{\sigma+\alpha^{-1}}{2}\|x-x_{\tau+1}\|_2^2 -f(x_{\tau+1})
&\leq -f(x) + \frac{M-\alpha^{-1}}{2}\|x_{\tau+1}-x_\tau\|_2^2 + \frac{1}{2\alpha}\|x-x_\tau\|_2^2 + \varepsilon_\tau\|x-x_{\tau+1}\|_2\nonumber\\
&\leq -f(x)+ \frac{1}{2\alpha}\|x-x_\tau\|_2^2 + \varepsilon_\tau\|x-x_{\tau+1}\|_2.\label{eq:proof-inexact-gd-4}
\end{align}

Let $c_\tau = (1+M/\sigma)^\tau = (1+\alpha\sigma)^\tau$ be a coefficient associated with epoch $\tau$, so that $c_0=1$ and $c_\tau(\sigma+\alpha^{-1})=c_{\tau+1}\alpha^{-1}$.
By Eq.~(\ref{eq:proof-inexact-gd-4}) setting $x=x^*= \arg\max_{x\in\mathcal Z}f(x)$, we have that
\begin{align}
\frac{c_{\tau+1}}{2\alpha}\|x^*-x_{\tau+1}\|_2^2 + c_\tau(f(x^*)-f(x_{\tau+1})) \leq \frac{c_\tau}{2\alpha}\|x^*-x_\tau\|_2^2 + c_\tau\varepsilon_\tau\|x_{\tau+1}-x^*\|_2.\label{eq:proof-inexact-gd-5}
\end{align}
Now let $\delta_\tau := f(x^*)-f(x_\tau)$ be the sub-optimality gap at the solution of epoch $\tau$. Summing both sides of Eq.~(\ref{eq:proof-inexact-gd-5}) through $\tau=0,\cdots,t-1$ and telescoping,
we obtain for every $t\leq\tau_0$ that
\begin{align}
\frac{c_{t}}{2\alpha}\|x_t-x^*\|_2^2 \leq \frac{c_t}{2\alpha}\|x_t-x^*\|_2^2 + \sum_{i=1}^t c_{i-1}\delta_i \leq \frac{c_0}{2\alpha}\|x^*-x_0\|_2^2 + \sum_{i=1}^t c_{i-1}\varepsilon_{i-1}\|x_i-x^*\|_2.
\label{eq:proof-inexact-gd-6}
\end{align}

We next present and prove a technical lemma that is helpful for our recursive analysis:
\begin{lemma}\label{lem:seq_sqbound}
Let $\{v_t\}_{t\ge 0}$ and $\{b_t\}_{t\ge 1}$ be nonnegative real sequences. If $v_t^2 \le a + \sum_{i=1}^t b_iv_i$ for all $t\ge 1$ and $v_0^2\le a$, 
then for any $t\ge 0$, we have 
\begin{equation*}
{a + \sum_{i=1}^t b_iv_i} \le (\sqrt{a} + {\sum}_{i=1}^t b_i)^2. 
\end{equation*}
\end{lemma}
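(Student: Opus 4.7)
My plan is to work with the partial sum $S_t := a + \sum_{i=1}^t b_i v_i$ and show the equivalent square-root statement $\sqrt{S_t} \le \sqrt{a} + \sum_{i=1}^t b_i$. Squaring both sides yields the claim, since the right-hand side of the lemma is precisely $(\sqrt{a} + \sum_{i=1}^t b_i)^2$ and by construction $a + \sum_{i=1}^t b_i v_i = S_t$.

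The hypothesis $v_t^2 \le a + \sum_{i=1}^t b_i v_i$ translates directly into $v_t \le \sqrt{S_t}$ for all $t \ge 1$, while $v_0 \le \sqrt{a} = \sqrt{S_0}$ handles the initial index. The increment of $S_t$ then satisfies
\begin{equation*}
S_t - S_{t-1} \;=\; b_t v_t \;\le\; b_t \sqrt{S_t}.
\end{equation*}
Writing the left-hand side as a difference of squares $(\sqrt{S_t})^2 - (\sqrt{S_{t-1}})^2 = (\sqrt{S_t} - \sqrt{S_{t-1}})(\sqrt{S_t} + \sqrt{S_{t-1}})$, and using the obvious bound $\sqrt{S_t} + \sqrt{S_{t-1}} \ge \sqrt{S_t} > 0$ (with the positivity needing only a brief argument in the degenerate case $a = 0$, $S_t = 0$, where everything is trivially zero), I can divide to obtain the key one-step inequality
\begin{equation*}
\sqrt{S_t} - \sqrt{S_{t-1}} \;\le\; b_t.
\end{equation*}

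Telescoping this over $i = 1,\dots,t$ gives $\sqrt{S_t} \le \sqrt{S_0} + \sum_{i=1}^t b_i = \sqrt{a} + \sum_{i=1}^t b_i$, which is exactly what is needed. The only mild subtlety is the division step when $S_t = 0$, which is easily handled by observing that in that case $S_{t-1} = 0$ as well (nonnegativity of all terms), so the inequality is vacuous. No obstacle beyond that; the clever move is simply to track $\sqrt{S_t}$ rather than $S_t$, after which the proof is three lines.
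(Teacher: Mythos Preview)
Your proposal is correct and follows essentially the same approach as the paper: define the partial sums $S_t=a+\sum_{i\le t}b_iv_i$, use the hypothesis to get $v_t\le\sqrt{S_t}$, derive the one-step bound $\sqrt{S_t}-\sqrt{S_{t-1}}\le b_t$, and telescope. The only cosmetic difference is that the paper obtains the one-step bound by dividing $A_{t+1}\le A_t+b_{t+1}\sqrt{A_{t+1}}$ through by $\sqrt{A_{t+1}}$ and using $A_t/\sqrt{A_{t+1}}\le\sqrt{A_t}$, whereas you factor a difference of squares; the two manipulations are equivalent.
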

\begin{proof}{Proof of Lemma \ref{lem:seq_sqbound}.}
Indeed, let $A_0:= a$ and $A_t := a + \sum_{i=1}^t b_iv_i$ for all $t\ge 1$, so that $v_t\le \sqrt{A_t}$ for all $t\ge 0$ and $\{A_t\}_{t\ge 0}$ is nonnegative and non-decreasing. Then we have for all $t\ge 0$,
\begin{equation*}
A_{t+1} = A_t + b_{t+1}v_{t+1} \le A_t + b_{t+1}\sqrt{A_{t+1}},
\end{equation*}
 and hence
\begin{equation}
\sqrt{A_{t+1}} \le {A_t}/{\sqrt{A_{t+1}}} + b_{t+1} \le \sqrt{A_t} + b_{t+1}. \label{eq:A_recur}
\end{equation}
Summing~\eqref{eq:A_recur} over $i=0,\ldots,t-1$ and we have
\begin{equation*}
\sqrt{A_{t}}\le \sqrt{A_0} + \sum_{i=1}^t b_{i} =  \sqrt{a} + \sum_{i=1}^t b_{i},
\end{equation*}
which is to be proved. $\square$
\end{proof}

Invoke Lemma \ref{lem:seq_sqbound} with $v_t=\sqrt{c_t\alpha^{-1}/2}\|x_t-x^*\|_2$, $a=c_0\alpha^{-1}\|x_0-x^*\|_2^2/2 = \alpha^{-1}\|x_0-x^*\|_2^2/2$ and $b_i=\sqrt{2}c_{i-1}\varepsilon_{i-1}\sqrt{c_i\alpha^{-1}}$ for $i=1,2,\cdots,t$; we obtain
\begin{align}
\sum_{i=1}^t c_{i-1}\delta_i &\leq \left(\sqrt{\alpha^{-1}/2}\|x_0-x^*\|_2 + \sum_{i=1}^t \frac{\sqrt{2\alpha}c_{i-1}\varepsilon_{i-1}}{\sqrt{c_i}}\right)^2.
\label{eq:proof-inexact-gd-7}
\end{align}
Note that for each epoch $\tau$, $c_\tau=(1+M/\sigma)^\tau=(1+\alpha\sigma)^\tau=\beta_\tau = (1+M/\sigma)\beta_{\tau-1}$.
Eq.~(\ref{eq:proof-inexact-gd-7}) then implies the conclusion in Lemma \ref{lem:inexact-gd}.

\section{Parameter setting of Experiment one}

\subsection{Setting for bandit optimization with smooth objective functions}\label{subsec:setting_bandit_exp1}

The parameter setting for $f_1, f_2$ is in Table \ref{tab:bandit_opt_exp1_f1} and Table \ref{tab:bandit_opt_exp1_f2}, respectively. 
Only the parameters with updated values compared to Table~\ref{tab:bandit_opt_exp1_f1} are reported in Table \ref{tab:bandit_opt_exp1_f2}.

\begin{table}[!ht]
\centering
\caption{Parameters and their values for bandit optimization for $f_1$}\label{tab:bandit_opt_exp1_f1}
\begin{threeparttable}
\renewcommand{\arraystretch}{1.15}
\begin{tabular}{|c|l|c|}
\hline
\textbf{Parameter Symbol} & \textbf{Explanation}                & \textbf{Value} \\ \hline
$C_1$  &  parameter in Algorithm \ref{alg:batch-linucb-comparison}        & $\gamma_1 + 2 \gamma_2\ln T$    \\ \hline
$C_1^{\prime}$  & parameter in Algorithm \ref{alg:batch-linucb-comparison}         & $\sqrt{\nu}\ln T+C_2\sqrt{\tau_\infty N}+\sqrt{C_1\tau_\infty}$    \\ \hline
$\tau_{\infty}$  &  parameter in $C_1^{\prime}$         & $\nu\log_{2}(2N\nu)$    \\ \hline
$\nu$  & parameter in upper bound in Lemma \ref{lem:local-poly-approx}         & $\binom{k+d-1}d$    \\ \hline
$\gamma_1$  & parameter in $C_1$         & 0.01    \\ \hline
$\gamma_2$  & parameter in $C_1$         & 0.005    \\ \hline
$C_2$  &  parameter in Algorithm \ref{alg:main-tournament}        & $10^{-4} (d+k)^kJ^{-k}\ln T$    \\ \hline
$C_2^{\prime}$  &   parameter in Algorithm \ref{alg:main-tournament}       & $(12\nu+116M)C_2\nu^2\ln^{1.5}(5\nu T)$    \\ \hline
$C_3$  &  parameter in Algorithm \ref{alg:main-tournament}        & $(68M+85\sqrt{C_1})^2M^2\nu^4\ln^3(5\nu T)$    \\ \hline
$M$  &  control parameter of H\"{o}lder class        & $10^{-4}$    \\ \hline
$k$ & smoothness level         & $[2, 3, 4]$    \\ \hline
$J$          & cube count per dimension   & 3            \\ \hline
$d$        & dimensions used for analysis        & $3$ \\ \hline
\makecell{$T$} & \makecell[l]{total time steps} & 3000-10000(step=1000) \\ \hline
$\epsilon$ & noise in the oracle function         & $0.1$    \\ \hline
\end{tabular}
\begin{tablenotes}
\footnotesize
\item[*] In practical implementations, we suggest scaling $C_1, C_1',C_2,C_2',C_3$ by constants.
\item[**] Since we use $k_1 + ... + k_d < k$ in the definition of $\phi_{\vct j}(x)$, a second-degree polynomial is actually used for fitting by setting $k=3$.
\end{tablenotes}
\end{threeparttable}
\label{tab:parameters}
\end{table}

\begin{table}[!ht]
\centering
\caption{Parameters and their values for bandit optimization for $f_2$}\label{tab:bandit_opt_exp1_f2}
\begin{threeparttable}
\renewcommand{\arraystretch}{1.15}
\begin{tabular}{|c|l|c|}
\hline
\textbf{Parameter Symbol} & \textbf{Explanation}                & \textbf{Value} \\ \hline
$M$  &  control parameter of H\"{o}lder class        & $\frac{1}{\sqrt{\nu}}$    \\ \hline
$k$ & smoothness level         & $[4, 5, 6]$    \\ \hline
$\epsilon$ & noise in the oracle function        & $0.3$    \\ \hline
\end{tabular}
\end{threeparttable}
\label{tab:parameters}
\end{table}

\section{Parameter setting and results of Experiment two}\label{app:exp2}

\subsection{Setting for bandit optimization with concave objective functions}\label{subsec:setting_bandit}

The parameters used in Algorithm \ref{alg:main-tournament} and Algorithm \ref{alg:acc-pgd} for concave objective functions are demonstrated in Table \ref{tab:bandit_opt}.

\begin{table}[h]
\centering
\caption{Parameters and their values for bandit optimization with concave objective functions}\label{tab:bandit_opt}
\begin{threeparttable}
\renewcommand{\arraystretch}{1.15}
\begin{tabular}{|c|l|c|}
\hline
\textbf{Parameter Symbol} & \textbf{Explanation}                & \textbf{Value} \\ \hline
$C_1$  &  parameter in Algorithm \ref{alg:batch-linucb-comparison}        & $\gamma_1 + 2 \gamma_2\ln T$    \\ \hline
$C_1^{\prime}$  & parameter in Algorithm \ref{alg:batch-linucb-comparison}         & $\sqrt{\nu}\ln T+C_2\sqrt{\tau_\infty N}+\sqrt{C_1\tau_\infty}$    \\ \hline
$\tau_{\infty}$  &  parameter in $C_1^{\prime}$         & $\nu\log_{2}(2N\nu)$    \\ \hline
$\nu$  & parameter in upper bound in Lemma \ref{lem:local-poly-approx}         & $\binom{k+d-1}d$    \\ \hline
$\gamma_1$  & parameter in $C_1$         & 0.01    \\ \hline
$\gamma_2$  & parameter in $C_1$         & 0.01    \\ \hline
$C_2$  &  parameter in Algorithm \ref{alg:main-tournament}        & $(d+k)^kJ^{-k}\ln T$    \\ \hline
$C_2^{\prime}$  &   parameter in Algorithm \ref{alg:main-tournament}       & $(12\nu+116M)C_2\nu^2\ln^{1.5}(5\nu T)$    \\ \hline
$C_3$  &  parameter in Algorithm \ref{alg:main-tournament}        & $(68M+85\sqrt{C_1})^2M^2\nu^4\ln^3(5\nu T)$    \\ \hline
$M$  &  control parameter of H\"{o}lder class        & $10^{-4}$    \\ \hline
$k$ & smoothness level         & $[2, 3, 4]$    \\ \hline
$J$          & cube count per dimension   & 3            \\ \hline
$d$        & dimensions used for analysis        & $2, 3$ \\ \hline
$T$        & total time steps        & $[100, 500, 1000, 1500, 2000, 2500]$ \\ \hline
$\eta$    & parameter in Algorithm \ref{alg:acc-pgd}        & 1 \\ \hline
$\sigma$        & parameter in Algorithm \ref{alg:acc-pgd}  &  1  \\ \hline
$\alpha$        & parameter in Algorithm \ref{alg:acc-pgd}  &  10 \\ \hline
\end{tabular}
\begin{tablenotes}
\footnotesize
\item[*] In practical implementations, we suggest scaling $C_1, C_1',C_2,C_2',C_3$ by constants.
\item[**] Since we use $k_1 + ... + k_d < k$ in the definition of $\phi_{\vct j}(x)$, a second-degree polynomial is actually used for fitting by setting $k=3$.
\end{tablenotes}
\end{threeparttable}
\label{tab:parameters}
\end{table}

\subsection{Results of Experiment two}\label{app:exp2_results}

\begin{figure}[!t]{}
\centering
{\includegraphics[scale =0.45]{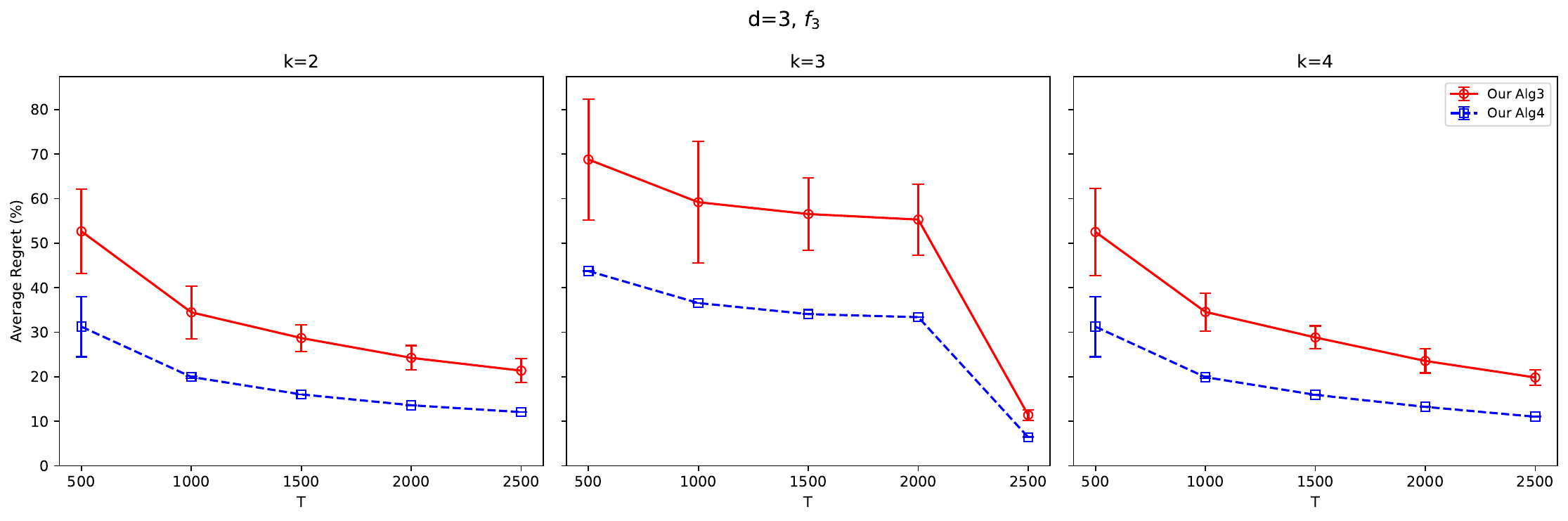}}
\caption{Average regrets of $f_3$ with $d=3$.}\label{fig:d3_f3}
\end{figure}

\begin{figure}[!t]{}
\centering
{\includegraphics[scale =0.45]{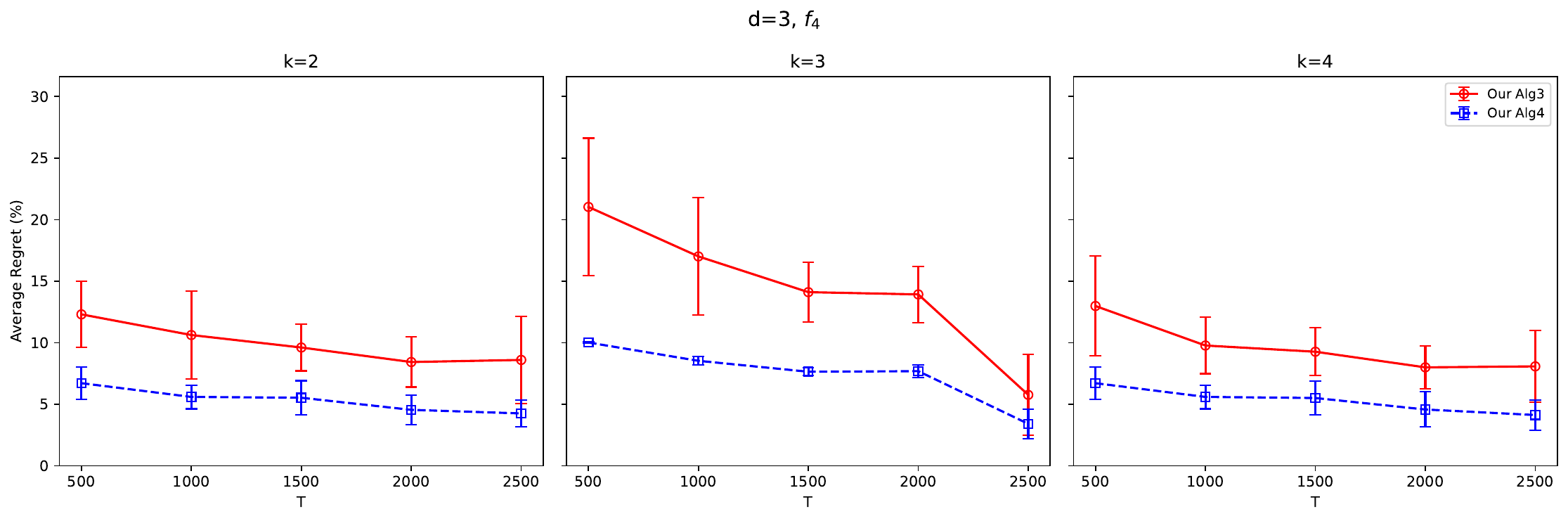}}
\caption{Average regrets of $f_4$ with $d=3$.}\label{fig:d3_f4}
\end{figure}

 \section{Parameter Setting of Experimental Examples}\label{app:parameter_setting}

\end{document}